\documentclass{article}

\usepackage{microtype}
\usepackage{graphicx}
\usepackage{subcaption}
\usepackage{booktabs} % for professional tables

\usepackage{hyperref}

\usepackage[accepted]{icml2026}

\usepackage{amsmath}
\usepackage{amssymb}
\usepackage{mathtools}
\usepackage{amsthm}

\usepackage{xspace}
\usepackage{enumitem}
\usepackage{multirow}
\usepackage{tcolorbox}
\usepackage[table]{xcolor}
\usepackage{wrapfig}
\usepackage{algorithm}



\usepackage{algpseudocode}

\usepackage[capitalize,noabbrev]{cleveref}

\theoremstyle{plain}
\newtheorem{theorem}{Theorem}[section]
\newtheorem{proposition}[theorem]{Proposition}

\theoremstyle{definition}
\newtheorem{definition}[theorem]{Definition}
\newtheorem{assumption}[theorem]{Assumption}
\theoremstyle{remark}

\definecolor{beaublue}{RGB}{230, 240, 255} %{rgb}{1.0, 1.0, 1.0}% {0.98,0.95,0.88} % {0.84, 0.9, 0.95}
\definecolor{blackish}{rgb}{0.2, 0.2, 0.2}

\definecolor{beaublue2}{rgb}{0.84, 0.9, 0.95}
\definecolor{blackish2}{rgb}{0.2, 0.2, 0.2}

\definecolor{myblue}{gray}{0.9}

\makeatletter
\DeclareRobustCommand\onedot{\futurelet\@let@token\@onedot}
\def\@onedot{\ifx\@let@token.\else.\null\fi\xspace}

\def\eg{\emph{e.g}\onedot} 
\def\ie{\emph{i.e}\onedot} 
 
\def\etc{\emph{etc}\onedot}

\makeatother

\usepackage{amsmath,amsfonts,bm}

\def\eqref#1{equation~\ref{#1}}
\def\Eqref#1{Equation~\ref{#1}}
\def\1{\bm{1}}

\def\vf{{\bm{f}}}

\def\vm{{\bm{m}}}

\def\vw{{\bm{w}}}
\def\vx{{\bm{x}}}
\def\vy{{\bm{y}}}

\def\mF{{\bm{F}}}

\def\mI{{\bm{I}}}

\def\mP{{\bm{P}}}
\def\mQ{{\bm{Q}}}
\def\mR{{\bm{R}}}

\def\mW{{\bm{W}}}
\def\mX{{\bm{X}}}

\DeclareMathAlphabet{\mathsfit}{\encodingdefault}{\sfdefault}{m}{sl}
\SetMathAlphabet{\mathsfit}{bold}{\encodingdefault}{\sfdefault}{bx}{n}

\usepackage[textsize=tiny]{todonotes}

\icmltitlerunning{Privacy-Aware Video Anomaly Detection through Orthogonal Subspace Projection}

\begin{document}

\twocolumn[
  \icmltitle{Privacy-Aware Video Anomaly Detection \\through Orthogonal Subspace Projection}

  % It is OKAY to include author information, even for blind submissions: the
  % style file will automatically remove it for you unless you've provided
  % the [accepted] option to the icml2026 package.

  % List of affiliations: The first argument should be a (short) identifier you
  % will use later to specify author affiliations Academic affiliations
  % should list Department, University, City, Region, Country Industry
  % affiliations should list Company, City, Region, Country

  % You can specify symbols, otherwise they are numbered in order. Ideally, you
  % should not use this facility. Affiliations will be numbered in order of
  % appearance and this is the preferred way.
  \icmlsetsymbol{equal}{*}

  \begin{icmlauthorlist}
    % \icmlauthor{Firstname1 Lastname1}{equal,yyy}
    % \icmlauthor{Firstname2 Lastname2}{equal,yyy,comp}
    \icmlauthor{Lei Wang}{sch1,ext1}
    \icmlauthor{Wenxiang Diao}{ext2}
    \icmlauthor{Andrew Busch}{sch1}
    \icmlauthor{Jun Zhou}{sch2}
    \icmlauthor{Yongsheng Gao}{sch1}
    %\icmlauthor{}{sch}
    % \icmlauthor{Firstname8 Lastname8}{sch}
    % \icmlauthor{Firstname8 Lastname8}{yyy,comp}
    %\icmlauthor{}{sch}
    %\icmlauthor{}{sch}
  \end{icmlauthorlist}

  \icmlaffiliation{sch1}{School of Engineering and Built Environment, Griffith University}
  \icmlaffiliation{ext1}{Data61/CSIRO}
  \icmlaffiliation{ext2}{School of Computer Science and Engineering, University of New South Wales}
  \icmlaffiliation{sch2}{School of Information and Communication Technology, Griffith University}

  \icmlcorrespondingauthor{Yongsheng Gao}{yongsheng.gao@griffith.edu.au}
  % \icmlcorrespondingauthor{Firstname2 Lastname2}{first2.last2@www.uk}

  % You may provide any keywords that you find helpful for describing your
  % paper; these are used to populate the "keywords" metadata in the PDF but
  % will not be shown in the document
  \icmlkeywords{Machine Learning, ICML}

  \vskip 0.3in
]

% this must go after the closing bracket ] following \twocolumn[ ...

% This command actually creates the footnote in the first column listing the
% affiliations and the copyright notice. The command takes one argument, which
% is text to display at the start of the footnote. The \icmlEqualContribution
% command is standard text for equal contribution. Remove it (just {}) if you
% do not need this facility.

% Use ONE of the following lines. DO NOT remove the command.
% If you have no special notice, KEEP empty braces:
\printAffiliationsAndNotice{}  % no special notice (required even if empty)
% Or, if applicable, use the standard equal contribution text:
% \printAffiliationsAndNotice{\icmlEqualContribution}

\begin{abstract}
Video anomaly detection (VAD) systems often prioritize accuracy while overlooking privacy concerns, limiting their suitability for real-world deployment. We propose the Orthogonal Projection Layer (OPL), a lightweight module that removes task-irrelevant variations to produce representations focused on anomaly-relevant cues. To address privacy risks in human-centered scenarios, we introduce Guided OPL (G-OPL), which suppresses facial attributes using weak supervision from face-presence signals while preserving non-identifying features such as pose and motion. A cosine alignment objective enforces consistent capture and removal of facial information without identity labels or adversarial training. We further present a privacy-aware evaluation framework that jointly assesses detection performance and privacy preservation, and enables analysis of how sensitive information is filtered. Experiments show that embedding privacy constraints into model design reduces sensitive information while maintaining or improving detection accuracy, supporting projection-based architectures as a principled approach for privacy-aware VAD.
\end{abstract}

\section{Introduction}

Video Anomaly Detection (VAD) is a longstanding challenge in computer vision, with wide-ranging applications in public safety, infrastructure monitoring, and surveillance systems \citep{wang2019loss,ding2024quo,wang2024flow, ding2025language,wang2025feature, ding2026learning}. Recent progress in VAD has largely focused on pushing the boundaries of performance through increasingly complex architectures and learning objectives. However, much of this work overlooks critical ethical dimensions, particularly those concerning privacy and interpretability. As AI systems transition into high-stakes, real-world environments, addressing these dimensions is not optional but essential for responsible deployment.

\begin{figure}[tbp]
% \vspace{-0.8cm}
\centering
\begin{subfigure}[t]{0.48\linewidth}
\centering\includegraphics[trim=0cm 6cm 0cm 0cm, clip=true, width=\linewidth]{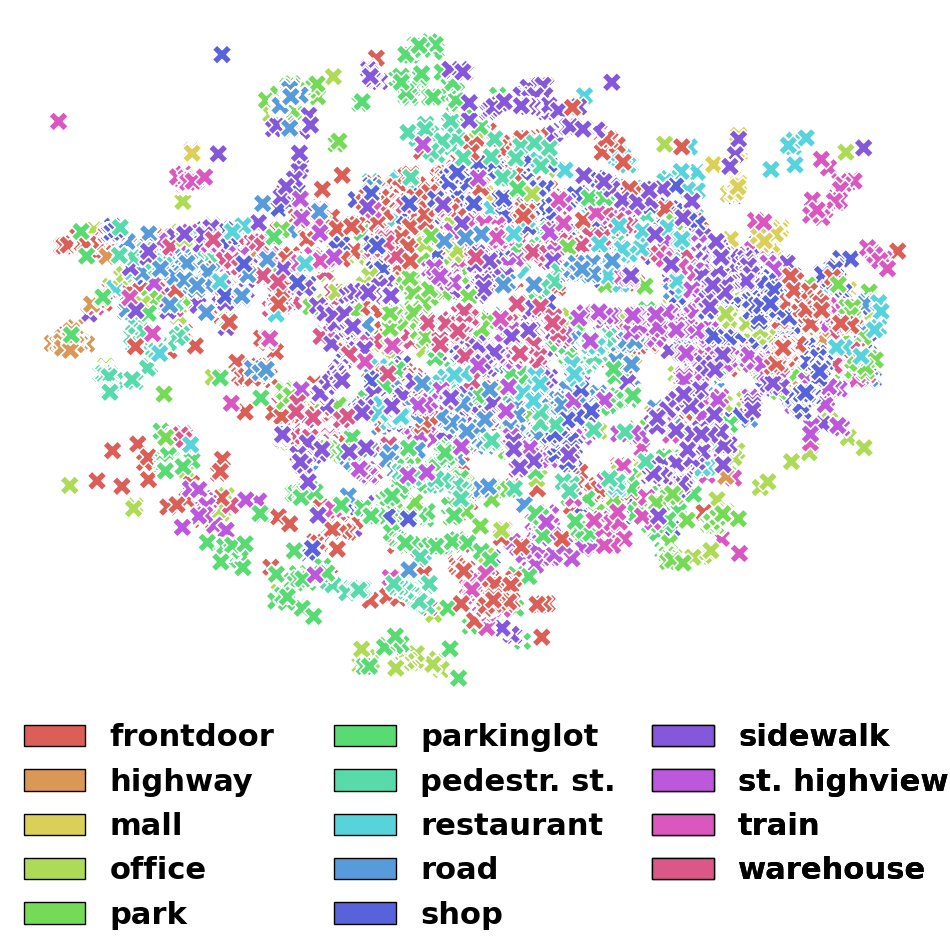}
\caption{OPL: By scenario}
\label{opl-s}
\end{subfigure}
\begin{subfigure}[t]{0.5\linewidth}
\centering\includegraphics[trim=0cm 0cm 0cm 0cm, clip=true, width=\linewidth]{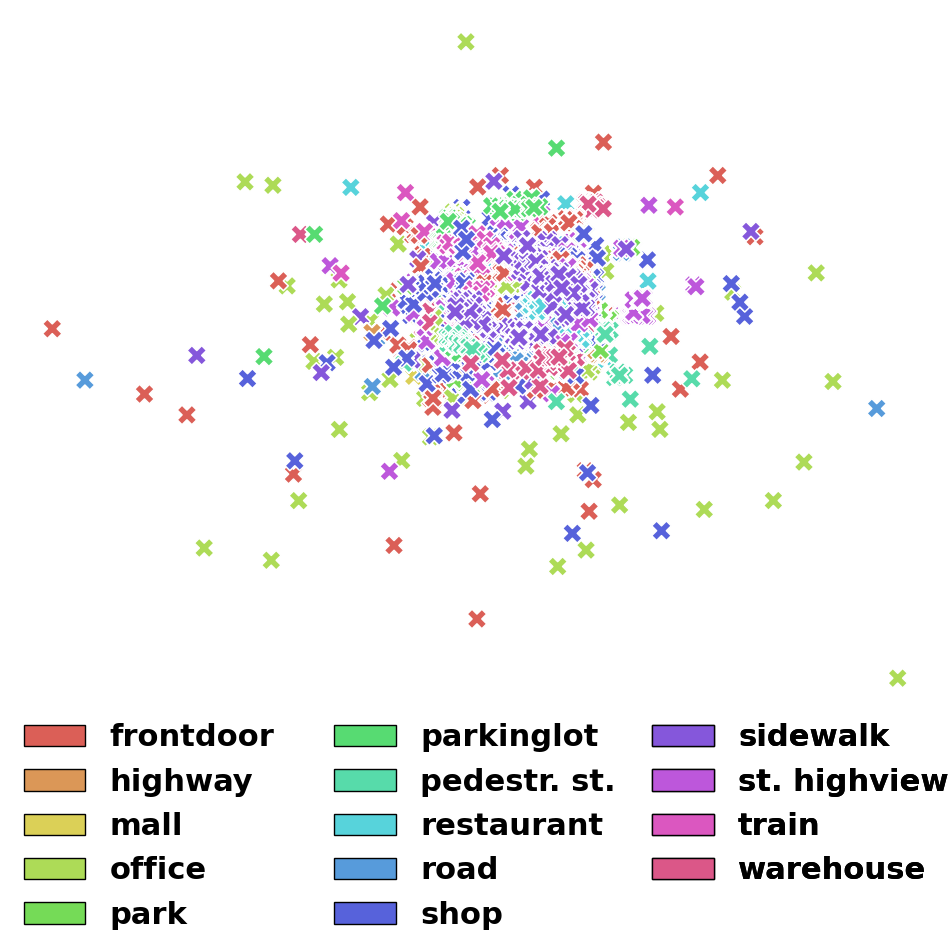}
\caption{G-OPL: By scenario}
\label{gopl-s}
\end{subfigure}\\
\begin{subfigure}[t]{0.48\linewidth}
\centering\includegraphics[trim=0cm 6cm 0cm 0cm, clip=true, width=\linewidth]{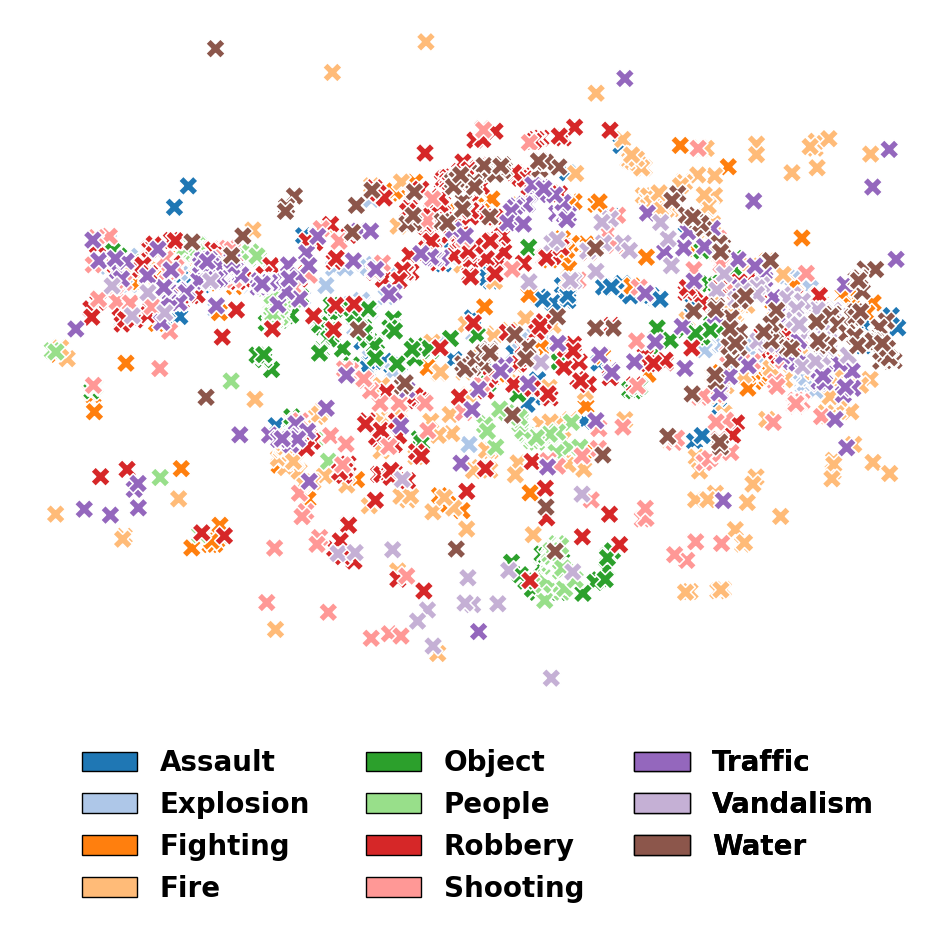}
\caption{OPL: By anomaly type}
\label{opl-a}
\end{subfigure}
\begin{subfigure}[t]{0.5\linewidth}
\centering\includegraphics[trim=0cm 0cm 0cm 0cm, clip=true, width=\linewidth]  {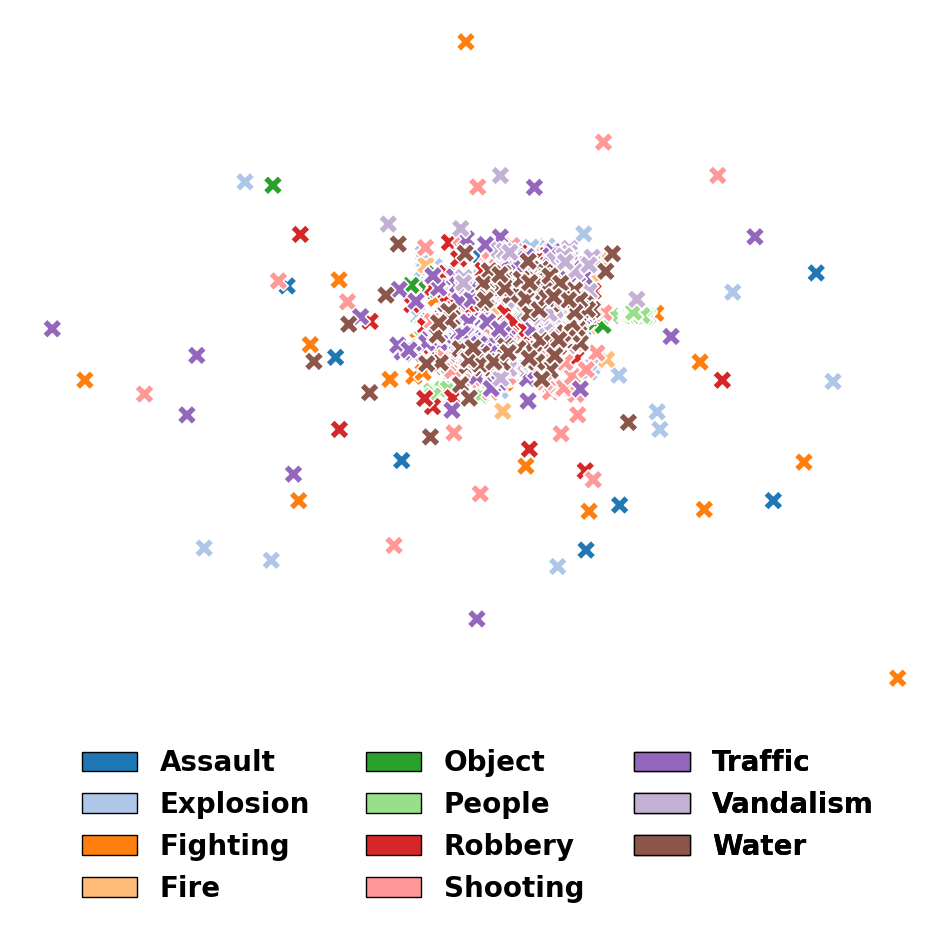} % 6
\caption{G-OPL: By anomaly type}
\label{gopl-a}
\end{subfigure}
% \vspace{-0.2cm}
\caption{UMAP \citep{mcinnes2018umap} of \textit{removed features} on MSAD \citep{msad2024}. OPL removes nuisance factors, yielding dispersed clusters, while G-OPL suppresses facial cues, producing a compact, overlapping distribution not aligned with anomaly types. This contrast shows their complementary roles in separating nuisance and privacy-sensitive information.
% UMAP of \textit{removed features} on MSAD.
% OPL removes broad nuisance factors like backgrounds, producing loosely clustered features that reflect scenario variations. G-OPL, guided by face presence, isolates sensitive biometric cues, resulting in a more compact, overlapping cluster that does not align with anomaly types. This contrast shows how OPL and G-OPL complement each other by disentangling nuisance and privacy-sensitive information.
}
\label{fig:removed-opl-gopl}
% \vspace{-0.3cm}
\end{figure}

A key limitation of existing VAD models is their tendency to retain and exploit information that is either irrelevant to the task or ethically sensitive. Attributes such as faces, clothing styles, or background context are rarely necessary for identifying anomalies and can even introduce bias, overfitting, or privacy violations. Without explicit mechanisms to suppress these factors, current models risk learning representations that undermine both fairness and user trust.
To address this, we propose a principled architectural solution for embedding ethical constraints directly into the learning process. We introduce the Orthogonal Projection Layer (OPL), a lightweight, differentiable module that learns to project intermediate features away from subspaces containing task-irrelevant or undesirable information. Initially, OPL focuses on removing general nuisance factors such as lighting variations, background clutter, and camera motion, guided by the VAD objective itself.

\begin{figure}[tbp]
    \centering
    \includegraphics[trim=0 0 0 0, clip=true, width=\linewidth]{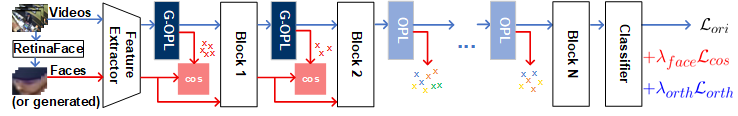}
    % \vspace{-0.5cm}
    \caption{Pipeline overview. % Overview of our pipeline. 
    The Orthogonal Projection Layer (OPL, light blue) and its guided variant (G-OPL, dark blue and red) are lightweight, fully differentiable, and easily integrate into standard anomaly detection architectures (black). OPL suppresses nuisance factors through orthogonal projections, while G-OPL explicitly removes sensitive attributes via semantic suppression ($\lambda_{\text{face}} \mathcal{L}_{\text{cos}}$) guided by attribute signals (\eg, faces, red arrows). An orthogonality regularizer ($\lambda_{\text{orth}} \mathcal{L}_{\text{orth}}$) ensures stable optimization. OPL / G-OPL can be flexibly inserted after a feature extractor (or encoder), individual layer, or block. 
    This design embeds privacy directly into the model, enabling precise control over sensitive information at the representation level with minimal impact on detection performance. See Appendix \ref{app:practical}, \ref{sec:why-multi}, and \ref{sec:lay-place} for \textit{practical insights on layer placement} and \textit{privacy protection}.}
    \label{fig:pipeline}
    % \vspace{-0.5cm}
\end{figure}

We further extend this idea with the Guided Orthogonal Projection Layer (G-OPL), which targets the removal of explicitly sensitive attributes such as faces. G-OPL uses weak supervision via lightweight face detectors, \eg, RetinaFace \citep{9157330}, to extract face embeddings on the fly. These embeddings serve as geometric guides, and a cosine similarity loss encourages the projection basis to align with face-related components in the latent space, so that they can be effectively suppressed. 
Importantly, our method avoids adversarial training or gradient reversal, leading to more stable optimization and interpretable subspace disentanglement. 
Fig.~\ref{fig:removed-opl-gopl} illustrates the interpretability benefits of our approach: OPL isolates nuisance factors related to scene dynamics, while G-OPL compacts sensitive biometric information, revealing their complementary roles in shaping ethical feature spaces.
We perform an extensive analysis of where and how to integrate these projection layers, what types of information they remove, and how they impact both model utility and ethical alignment. We also introduce novel privacy-aware metrics tailored to VAD settings, enabling a more holistic evaluation of ethical trade-offs in VAD.
Our contributions are summarized as follows:
\renewcommand{\labelenumi}{\roman{enumi}.}
\begin{enumerate}[leftmargin=0.5cm]
\item We propose OPL, a differentiable projection mechanism that filters task-irrelevant features by learning orthogonal subspaces, and extend it to Guided OPL (G-OPL) for explicitly suppressing sensitive attributes such as faces.
\item We introduce privacy-aware evaluation metrics for VAD, enabling principled assessment of the trade-offs between performance, interpretability, and privacy preservation.
\item We conduct thorough experiments analyzing the placement and frequency of projection layers, the nature of removed information, and their impact on both anomaly detection and ethical alignment, offering actionable insights for ethical model design.
\end{enumerate}

\section{Related Work}

\textbf{Privacy-preserving representation learning.}
Learning representations that do not encode sensitive attributes is a core objective in privacy-aware machine learning \citep{kim2019learning, ravfogel2020null,abadi2016deep,dwork2025differential}. A common strategy is adversarial training with gradient reversal \citep{ganin2016domain}, where an auxiliary classifier attempts to predict sensitive attributes, and the encoder is trained to inhibit this through gradient inversion. Other approaches perform subspace manipulation, such as nullspace projection \citep{kim2019learning, ravfogel2020null}, which analytically removes directions associated with sensitive information. These methods have primarily been developed for classification tasks in fairness and demographic privacy contexts. In contrast, we adapt these principles to the more complex VAD setting, which presents unique challenges due to the spatio-temporal nature of video and the lack of explicit labels for nuisance or sensitive factors. Our method avoids adversarial objectives, using geometric supervision via cosine similarity and weak face signals for more stable and interpretable training. Here, geometric refers to latent feature alignment, not physical 3D geometry. % \textcolor{blue}{Note that our use of ‘geometric' refers to vector-space alignment in the latent feature space, not physical or 3D geometry.}

\textbf{Orthogonal projection and feature disentanglement.}
Orthogonal projection has been used in disentangled representation learning to separate task-relevant and task-irrelevant information \citep{moyer2018invariant, sarhan2020fairness, ranasinghe2021orthogonal}. These methods often rely on known sensitive labels or contrastive setups in static image tasks. Our OPL generalizes this idea for dynamic, unlabeled video data by learning to remove nuisance subspaces directly through VAD loss. We further introduce G-OPL, which incorporates weak supervision from faces to target privacy-relevant information. Unlike classical projection approaches, G-OPL is fully differentiable, requires no explicit sensitive labels, and aligns privacy removal with interpretable geometric signals.

\textbf{Ethics and interpretability in VAD.}
While ethical AI has received increasing attention, especially around fairness and transparency, most work in VAD treats these aspects as afterthoughts. Techniques such as dataset balancing or saliency visualization are often used post hoc to audit ethical behavior. Explainable AI methods like Grad-CAM \citep{selvaraju2020grad} and causal analysis in VAD \citep{du2024uncovering,ding2026learning} offer interpretability but do not directly influence representation learning. Our approach goes further by embedding ethical constraints at the model level through projection layers that structurally suppress sensitive content in the latent space. This leads to improved privacy and interpretability, all without sacrificing scalability or compatibility with existing architectures.

\textbf{Comparison with prior projection-based methods.}
While orthogonal projection has been explored in fairness and disentangled representation learning, like INLP \citep{ravfogel2020null}, OPL-2021 \citep{ranasinghe2021orthogonal}, DAMS \citep{an2025dams} and CAE-LSP \citep{yu2023convolutional}. These methods differ from ours in terms of objectives, assumptions, and design. Most prior work either assumes access to ground-truth sensitive labels (\eg, INLP), operates in static or low-dimensional settings (\eg, CAE-LSP), or applies generic decorrelation losses (\eg, OPL-2021) without explicit suppression. In contrast, our G-OPL learns task-specific sensitive subspaces using only weak signals (\eg, face presence or embeddings), and uniquely integrates privacy-aware evaluation metrics tailored for VAD. 

To the best of our knowledge, this is the first projection-based method designed for privacy-aware VAD under realistic constraints.

% \begin{table}[tbp]
% \centering
% \resizebox{\textwidth}{!}{
% \begin{tabular}{lccccc}
% \toprule
% \textbf{Method} & \textbf{Task} & \textbf{Projection Type} & \textbf{Suppresses Sensitive Info} & \textbf{Privacy Metric} \\
% \midrule
% CAE-LSP \citep{yu2023convolutional} & Image Recon. & PCA on latent codes & - & - \\
% DAMS \citep{an2025dams} & VAD & Attention-based masking & - & - \\
% INLP \citep{ravfogel2020null} & Debiasing & Iterative nullspace projection & Supervised & Acc drop \\
% OPL \citep{ranasinghe2021orthogonal} & Classification & Feature-level orthogonality loss & - & - \\
% \midrule
% \textbf{G-OPL (Ours)} & VAD & Task-learned QR + cosine alignment & Weak Supervision & SSC / ARD / PD / ArcFace \\
% \bottomrule
% \end{tabular}
% }
% \caption{\textcolor{blue}{Comparison with prior projection-based methods. Unlike previous work, G-OPL learns a task-specific subspace with weak supervision for sensitive information suppression, and is evaluated with explicit privacy metrics.}}
% \label{tab:projection-based}
% \end{table}

\section{Method}

\textbf{Overview.} We propose the Orthogonal Projection Layer (OPL) that filters nuisance directions from features via learned subspace projections. We further extend this to Guided OPL (G-OPL), which enables targeted suppression of sensitive features (\eg, faces) using weak supervision (\eg, face presence). Both modules are lightweight, fully differentiable, and compatible with standard VAD architectures. They can be flexibly inserted after a feature extractor (or encoder), individual layer, or block. Fig. \ref{fig:pipeline} provides an overview of the model architecture with OPL and G-OPL layers inserted. 
To evaluate the utility-privacy trade-off, we introduce three novel metrics tailored to VAD. 
The appendix (\ref{sec:privacy_levels}-\ref{sec:beyond-fc}) outlines the theory, distinguishes our model-level protection from dataset- and learning-level approaches, and situates it within the broader literature.
% The appendix presents the theoretical foundations, clarifies how our model-level protection differs from dataset- and learning-level approaches, and positions our work within the broader literature.
% See the Appendix for motivation, where we distinguish our model-level protection from dataset- and learning-level approaches and position our work in the broader literature.

\textbf{Notations.} Throughout this paper, scalars are denoted by italic letters (\eg, $x$); vectors by lowercase boldface (\eg, $\vx$); and matrices by uppercase boldface (\eg, $\mX$).

\subsection{Orthogonal Projection Layer}

The \textit{Orthogonal Projection Layer} (OPL) is designed to explicitly remove task-irrelevant information from intermediate feature representations. Unlike classical methods such as PCA or fixed subspace removal, our OPL learns the nuisance subspace jointly with the main task and is fully differentiable, allowing adaptive identification and removal of nuisance directions during training. \footnote{% \textbf{Nuisance \emph{vs.} subtle anomaly in unguided OPL.} The key distinction lies in task-driven supervision: 
OPL does not arbitrarily remove variance; it is trained jointly with the anomaly objective, thus preserving components essential for detection.}

Formally, let $\vf \in \mathbb{R}^d$ be an intermediate feature extracted from either a backbone network (\eg, feature extractor) or a layer. We define a learnable weight matrix $\mW \in \mathbb{R}^{k \times d}$ whose rows parameterize the nuisance subspace to be removed ($1 \! < \! k \! < \! d$). This matrix can be %initialized from a pretrained linear classifier or 
learned from scratch alongside the VAD objective.
To obtain a numerically stable and orthonormal basis for the nuisance subspace, we perform QR decomposition on the transpose of $\mW$:
\begin{equation}
    \mW^\top = \mQ \mR,
\end{equation}
where $\mQ \in \mathbb{R}^{d \times k}$ contains $k$ orthonormal basis vectors spanning the nuisance directions, and $\mR$ is an upper-triangular matrix. Using $\mQ$, we construct the orthogonal complement projection matrix:
\begin{equation}
    \mP = \mI_d - \mQ \mQ^\top,
    \label{eq:projection-operator}
\end{equation}
with $\mI_d$ being the $d \times d$ identity matrix. Projecting $\vf$ onto the orthogonal complement yields:
\begin{equation}
    \vf_{\text{proj}} =  \mP \vf = \vf - \mQ \mQ^\top \vf ,
    \label{eq:proj}
\end{equation}
removing nuisance components while retaining task-relevant information.
%effectively removing components aligned with the nuisance subspace and preserving only task-relevant information.
%
% \begin{tcolorbox}[width=1.0\linewidth, colframe=blackish, colback=beaublue, boxsep=0mm, arc=3mm, left=1mm, right=1mm, right=1mm, top=1mm, bottom=1mm]
%\textbf{Learning the nuisance subspace jointly.} 
% Unlike fixed subspace removal techniques (\eg, PCA), our OPL learns the nuisance directions \emph{jointly} with the VAD objective. This dynamic learning enables the network to identify and suppress task-irrelevant factors that are most detrimental to detection performance. 
%
Our OPL stands out by offering a plug-and-play, learnable projection layer that is task-specific, modular, and interpretable. The integration of QR decomposition ensures practical stability and efficiency lacking in prior projection-based approaches. 
These properties establish OPL as a foundational building block for task-specific and robust video analysis systems.
% \end{tcolorbox}

\subsection{Guided OPL for Privacy-Aware Face Suppression}
\label{sec:gopl-face}

While OPL removes nuisance factors like background or lighting, it may leave sensitive attributes such as face, gender, or age in the features. We propose the \textit{Guided Orthogonal Projection Layer} (G-OPL) to selectively suppress these privacy-sensitive components using weak supervision.

%While OPL effectively filters out nuisance factors such as background clutter or lighting, it does not explicitly target semantically sensitive attributes like facial identity, gender, or age. These features, though irrelevant to anomaly detection, can remain embedded in intermediate representations, potentially exposing private information. To address this, we propose the \textit{Guided Orthogonal Projection Layer} (G-OPL), an extension of OPL that uses weak supervision to selectively suppress privacy-sensitive components.

\textbf{Facial identity as a sensitive signal.} 
We focus on facial identity not for recognition, but for removing biometric traits implicitly encoded in facial features. Although typical VAD datasets lack identity annotations, the mere presence of faces introduces privacy risks. Intermediate neural features can retain identity cues that, if leaked or combined with external data, may enable re-identification.

% To mitigate this risk, we use binary face presence indicators as weak supervision signals. These indicators are automatically extracted using RetinaFace \citep{9157330}, a robust face detector well-suited for small, occluded, or low-resolution faces, common in surveillance footage. When multiple faces are detected, we compute an average embedding to summarize the facial features. These embeddings serve as high-level sensitive signals to guide the projection process and help disentangle identity-related components from task-relevant information.
% Additionally, we construct face videos from the publicly available Georgia Tech Face Database \citep{georgia_tech_face}, forming a set of 50 videos. These videos act as controlled sensitive signals, with their average embeddings serving the same role (see Fig. \ref{fig:pipeline}).

To reduce privacy risks, we use binary face-presence indicators as weak supervision, automatically extracted via RetinaFace \citep{9157330}, which handles small, occluded, or low-resolution faces. When multiple faces are detected, we average their embeddings to guide the projection and separate identity-related from task-relevant features. We also create 50 face videos from the Georgia Tech Face Database \citep{georgia_tech_face}, using their average embeddings as controlled sensitive signals. % (see Fig. \ref{fig:pipeline}).

%\textcolor{blue}{Although binary face presence is used to activate the privacy loss, the projection guidance is based on high-dimensional face embeddings extracted using RetinaFace and encoder.We compute a mean embedding over all detected faces and use it to guide the projection via cosine alignment. This enables the model to capture subtle variations in identity features without requiring explicit face labels, and avoids the loss of fine-grained semantic similarity that may arise from binarization alone.}

\textbf{Face-guided suppression.}
Both the original video frames and detected / generated face crops are passed through the same encoder (\eg, I3D). This ensures that the embeddings $\vf$ (from video) and $\vf_{\text{face}}$ (from cropped faces) reside in the same latent space, allowing us to compare them meaningfully. 
Since both embeddings are high-level features, our method operates entirely at the feature level rather than in the pixel space. This avoids the need for explicit face masks or boundary annotations, and makes the projection robust to partial occlusions, variable face sizes, and fuzzy edges commonly seen in surveillance videos.
The goal is to identify and remove the facial component from $\vf$ using projection operator (\Eqref{eq:proj}).  
Instead of adversarial training or an explicit sensitive attribute classifier, G-OPL adopts a direct geometric approach. We guide the projection basis $\mQ$ to span directions aligned with facial identity by penalizing cosine similarity between the face embedding and the sensitive component of $\vf$:
\begin{equation}
\mathcal{L}_{\text{task}} = \mathcal{L}_{\text{ori}} + \lambda_{\text{face}} \left(1 - \cos\left(\vf_{\text{face}}, \mQ \mQ^\top \vf\right)\right),
\label{eq:face-loss}
\end{equation}
where $\mathcal{L}_{\text{ori}}$ is the standard VAD loss, and $\lambda_{\text{face}}$ controls the strength of the privacy-driven guidance. This alignment term encourages $\mQ$ to capture face-related directions, which the projection operator then removes (\Eqref{eq:projection-operator}). The loss is activated only when a face is detected, ensuring efficient and targeted updates.
% \begin{tcolorbox}[width=1.0\linewidth, colframe=blackish, colback=beaublue, boxsep=0mm, arc=3mm, left=1mm, right=1mm, top=1mm, bottom=1mm]

% G-OPL combines the interpretability of projection-based methods with the flexibility of weak supervision. % Unlike prior approaches that rely on encoder-decoder architectures or adversarial training, our method operates directly on intermediate features. 
% It provides a principled, scalable, and deployment-friendly solution for privacy-aware VAD, while improving robustness by removing irrelevant and potentially harmful signals.
% \end{tcolorbox}

\textbf{Orthogonality regularization.}  
% Although QR decomposition ensures orthonormality per forward pass, gradient updates can degrade it over time. 
Although QR ensures orthonormality per forward pass, gradient updates can erode it over time.
% To maintain orthogonality and effective subspace separation, 
To preserve orthogonality and subspace separation,
we include:
\begin{equation}
    \mathcal{L}_{\text{orth}} = \left\| \mQ^\top \mQ - \mI_k \right\|_F^2,
\end{equation}
where $\|\!\cdot\!\|_F$ is the Frobenius norm. The overall loss becomes:
\begin{equation}
    \mathcal{L}_{\text{total}} = \mathcal{L}_{\text{task}} + \lambda_{\text{orth}} \mathcal{L}_{\text{orth}},
\end{equation}
with $\lambda_{\text{orth}}$ controlling regularization strength. This is particularly important when stacking multiple projection layers or operating in high-dimensional spaces.

\textbf{Multi-attribute suppression.} Because $\mW$ in OPL can have multiple rows ($k>1$), G-OPL can jointly encode several sensitive attributes by concatenating multiple weak attribute embeddings $\vf_{\text{attr}}^{(i)}$ and optimizing a multi-term alignment loss: $\sum_i (1 - \cos(\vf_{\text{attr}}^{(i)}, \mQ\mQ^\top \vf))$.
%
% We confirmed through a 2-attribute experiment (faces + clothing color) that G-OPL learns a joint sensitive subspace and reduces retrieval accuracy for both attributes.
% 
G-OPL does not require training attribute classifiers nor collecting attribute labels. 
The guidance embedding is not an attribute-specific representation; it is simply the mean backbone feature over frames where a weak detector signals the presence of the attribute (\eg, face present, torso visible, clothing-color region detected). No identity or attribute labels are used at any point. 
Extending G-OPL to additional attributes (\eg, body, clothing) requires only a weak presence signal, not attribute supervision.
In the Appendix (\ref{sec:alg}- \ref{sec:why-multi}, \ref{sec:lay-place}), we detail the \textit{implementation of OPL and G-OPL} and discuss \textit{practical considerations for their placement within the network}.
% In the Appendix (\ref{sec:alg}, \ref{app:practical} and \ref{sec:lay-place}), we provide the \textit{algorithmic implementation of OPL and G-OPL} and discuss \textit{practical considerations for their placement within the network}.

\textbf{Efficiency at inference.}  
At test time, G-OPL behaves like OPL. The projection matrix $\mQ$ is fixed, and no face detection or additional input is required. This ensures privacy-aware behavior with negligible extra runtime cost, critical for real-time or resource-constrained applications.

\subsection{Privacy-Aware Evaluation Metrics}
\label{sec:metrics}

We propose three privacy-aware metrics to measure how well our projection layers suppress sensitive data while retaining task-relevant signals.

\textbf{Sensitive Subspace Capture (SSC)} measures the degree to which the learned projection subspace captures sensitive information (\eg, faces). Given an orthonormal basis $\mQ \!\in\! \mathbb{R}^{d \times k}$ that spans the sensitive subspace learned via G-OPL, and a batch of sensitive attributes $\vf_{\text{attr}}^{(i)} \!\in\! \mathbb{R}^d$ (\eg, face embeddings), we compute the projection of each embedding onto the subspace: $\hat{\vf}^{(i)} \!=\! \mQ \mQ^\top \vf_{\text{attr}}^{(i)}$.
The SSC score is defined as the average cosine similarity between the original embedding and its projected component:
\begin{equation}
    \text{SSC} =  
    \cos\big( \hat{\vf}^{(i)}, \vf_{\text{attr}}^{(i)} \big). % ,
\end{equation}

Intuitively, a higher SSC indicates that a larger portion of the original features lies within the sensitive subspace, implying that $\mQ$ effectively isolates directions strongly associated with sensitive attributes. This makes SSC a valuable diagnostic tool for assessing whether the learned subspace aligns with privacy-relevant components in the data.

\textbf{Anomaly Retention Distance (ARD).} To evaluate how well the projection preserves the performance, we compare the \emph{frame-level} anomaly score distributions before and after applying the projection layer. Let $\vy_{\text{raw}}$ and $\vy_{\text{proj}}$ denote the anomaly scores computed from the raw features and the projected features, respectively.
We treat $\vy_{\text{raw}}$ and $\vy_{\text{proj}}$ as samples from underlying distributions and estimate their densities, denoted by $P_{\text{raw}}(y)$ and $P_{\text{proj}}(y)$, using kernel density estimation or histograms. The ARD is then defined as the Kullback-Leibler (KL) divergence between the two:
\begin{equation}
    \text{ARD} = \text{KL}\left( P_{\text{raw}}(y) \,\|\, P_{\text{proj}}(y) \right).
\end{equation}

A lower ARD value indicates that the projection has minimal impact on anomaly score distribution, \ie, the detection behavior remains consistent, suggesting that task-relevant information has been retained. Conversely, a high ARD implies that the projection has distorted the decision landscape, potentially removing useful cues along with sensitive ones. Thus, ARD serves as a quantitative measure of utility preservation in the presence of privacy-aware projections.

\textbf{Privacy Decay (PD) \& First-layer PD (FPD).}  
To measure how effectively sensitive information is suppressed throughout the network, we introduce PD. At each layer / block $l$ containing (or following) a projection module (\eg, G-OPL), we extract intermediate feature $\vf^{(l)}$ and train a lightweight classifier (\eg, a linear SVM) to predict sensitive attributes (\eg, face presence). The classification accuracy at layer $l$ is denoted as $\text{Acc}^{(l)}$, and PD is defined as:
\begin{equation}
    \text{PD} = \left\{ \big(l, \text{Acc}^{(l)}\big) \right\}_{l=1}^L,
\end{equation}
where $L$ is the number of probed layers. % Plotting $\text{Acc}^{(l)}$ against $l$ yields a PD curve, visualizing how sensitive information diminishes through the network. A sharp decline indicates effective privacy protection by G-OPL layers.
Plotting $\text{Acc}^{(l)}$ over $l$ gives a PD curve, showing how sensitive information decays through network; sharper drops indicate stronger protection by G-OPL.

We further define \textit{First-layer Privacy Decay (FPD)} as $\text{Acc}^{(1)}$, which captures the face presence accuracy at the first G-OPL. FPD quantifies the immediate suppression of sensitive information at the network's entry point. A low FPD reflects strong initial privacy preservation, minimizing the risk of early-stage leakage and offering a clear diagnostic for the projection layer's effectiveness.

% \begin{tcolorbox}[width=1.0\linewidth, colframe=blackish, colback=beaublue, boxsep=0mm, arc=3mm, left=1mm, right=1mm, right=1mm, top=1mm, bottom=1mm]
These metrics form a robust framework for evaluating privacy-aware VAD (see Appendix \ref{sec:metric}). SSC diagnoses whether sensitive features are being captured and encoded consistently; ARD quantifies utility preservation; and PD / FPD shows how privacy is suppressed across the network. 
% \end{tcolorbox}

\textbf{Generality of privacy metrics.} SSC and FPD are defined for generic sensitive attributes, with faces used only as an illustrative example. Their formulations are general % : $\text{SSC}(\vf_{\text{attr}}, \mQ\mQ^\top \vf)$ and $\text{FPD} = \text{Acc}(\text{linear probe on }\vf^{(1)})$, 
and do not depend on G-OPL or any specific attribute type. They are diagnostic tools, not method-specific constructs.

% To demonstrate this, we applied SSC and FPD to SPAct and TeD-SPAD (both privacy-preserving baselines). The metrics correctly captured their stronger identity suppression (FPD decreased by 22-25\% relative to RTFM), showing that the metrics do generalize beyond our method.

\section{Experiment}

\subsection{Experimental Setup}
\label{sec:setup}

\textbf{Datasets \& protocols.} Unless stated otherwise, we consider faces as the primary sensitive attribute, given their prevalence in VAD.
We evaluate on five VAD datasets: ShanghaiTech (ShT) \citep{shanghaitech2017}, UCF-Crime (UCF) \citep{sultani2018real}, CUHK Avenue (CUHK) \citep{cuhkavenue2013}, UCSD Ped2 (Ped2) \citep{ucsdped2010}, and MSAD \citep{msad2024}. For MSAD, we use Protocol ii. 
For all datasets, we follow standard evaluation protocols and report frame-level AUC and/or Average Precision (AP). Beyond accuracy, we use our new privacy-aware metrics to quantify how well sensitive attributes are suppressed, providing an ethical perspective on model performance.

%\textbf{Classifier-based identity leakage probe (ArcFace).} 
% To compare our proposed metrics with established privacy evaluation metric, we incorporate a classifier-based privacy probe following the cMAP paradigm \citep{ravfogel2020null}. Specifically, we use a pre-trained ArcFace model to extract reference facial embeddings from detected faces, and then evaluate whether the projection of each embedding $\hat{\vf}^{(i)}$ still retain identity cues by learning a linear mapping from $\hat{\vf}^{(i)}$ to face embedding space. We report top-1 retrieval accuracy (rank-1), following MRR standard retrieval evaluation, where lower values indicate stronger privacy preservation. This probe simulates an external attacker attempting to recover biometric identity from latent features, and thus offers a complementary perspective to our proposed SSC/ARD/FPD. Results in Table \ref{tab:weakly-sup} demonstrate that our method reduces identity leakage under both our metrics and ArcFace probe, validating the robustness and generality of our approach.

To compare our metrics with a standard privacy evaluation, we adopt a classifier-based probe following \citep{ravfogel2020null}. Using a pre-trained ArcFace model, we extract facial embeddings and test whether projected features $\hat{\vf}^{(i)}$ retain identity cues by learning a linear mapping back to the embedding space. We report rank-1 retrieval accuracy (lower is better), simulating an attacker attempting to recover identity from latent features. % As shown in Table \ref{tab:weakly-sup}, our method reduces identity leakage across both our proposed metrics (SSC/ARD/FPD) and the ArcFace probe, confirming its robustness and generality.

\begin{figure}[tbp]
\centering
% \vspace{-0.5cm}
\centering
\begin{subfigure}[t]{0.36\linewidth}
\centering\includegraphics[trim=0cm 0cm 0cm 0cm, clip=true, width=\linewidth]{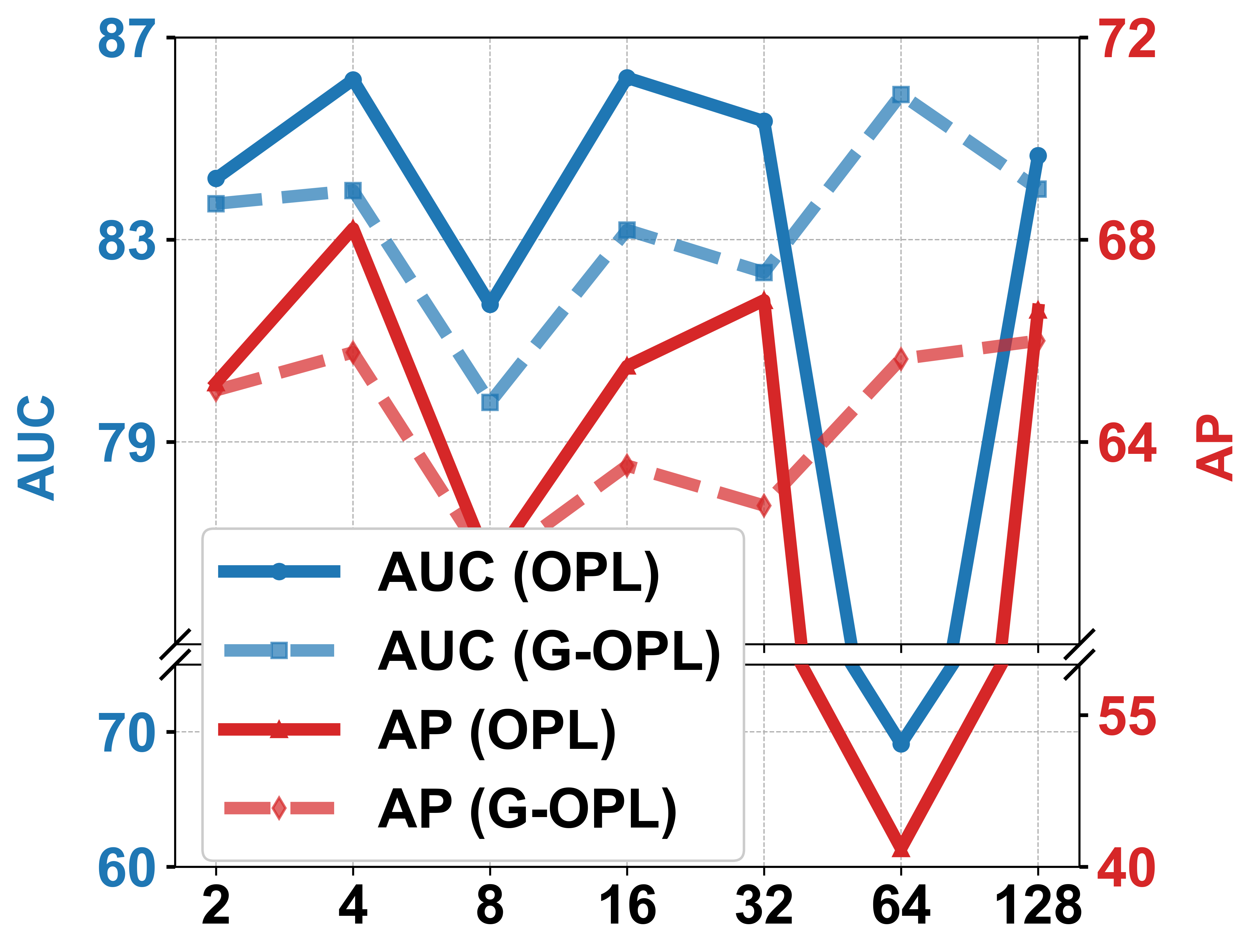} 
\caption{$k$}
\label{hyper:k}
\end{subfigure}
\begin{subfigure}[t]{0.305\linewidth}
\centering\includegraphics[trim=0cm 0cm 0cm 0cm, clip=true, width=\linewidth]{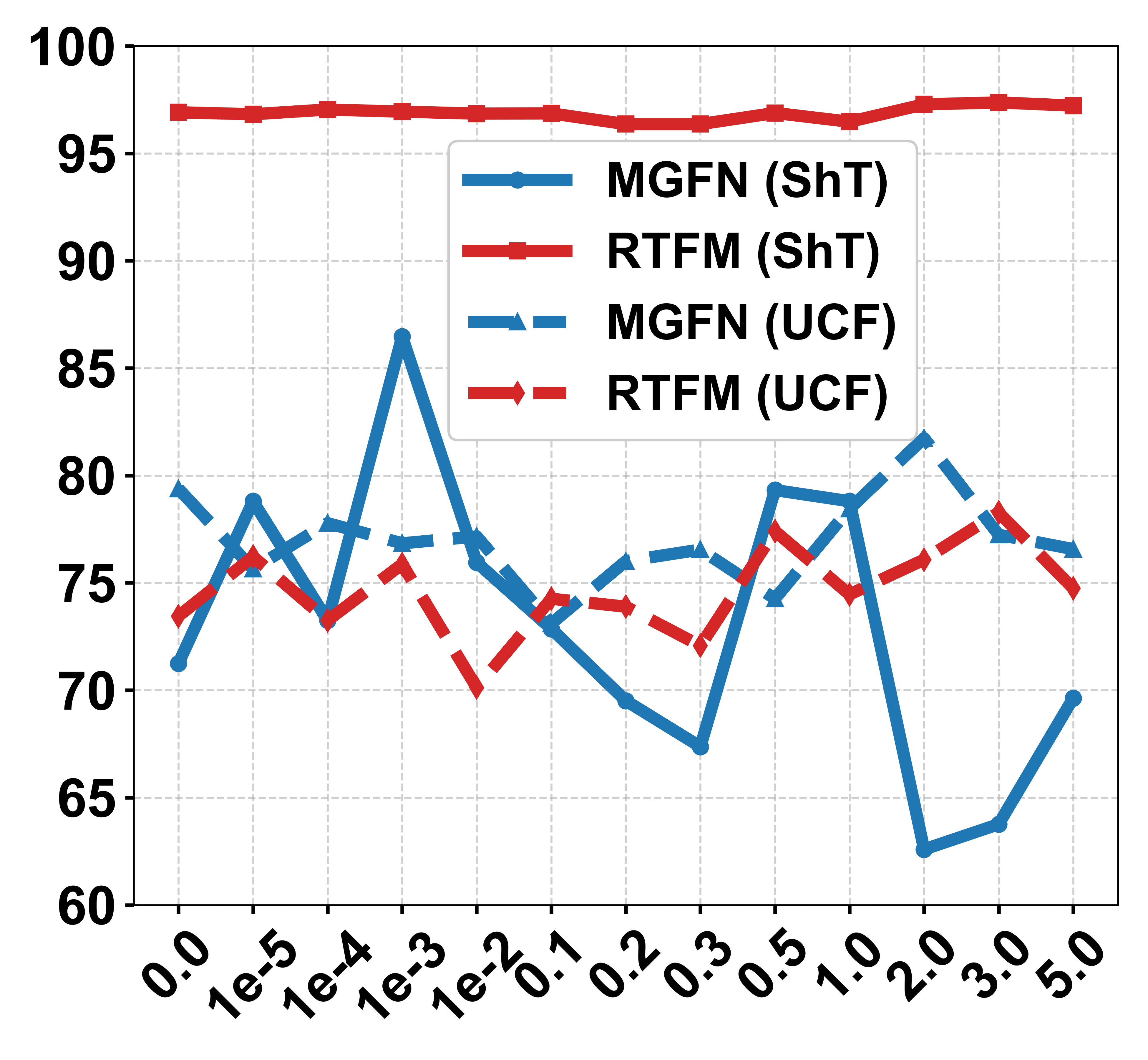}
\caption{$\lambda_{\text{face}}$}
\label{hyper:lambda-face}
\end{subfigure}
\begin{subfigure}[t]{0.305\linewidth}
\centering\includegraphics[trim=0cm 0cm 0cm 0cm, clip=true, width=\linewidth]{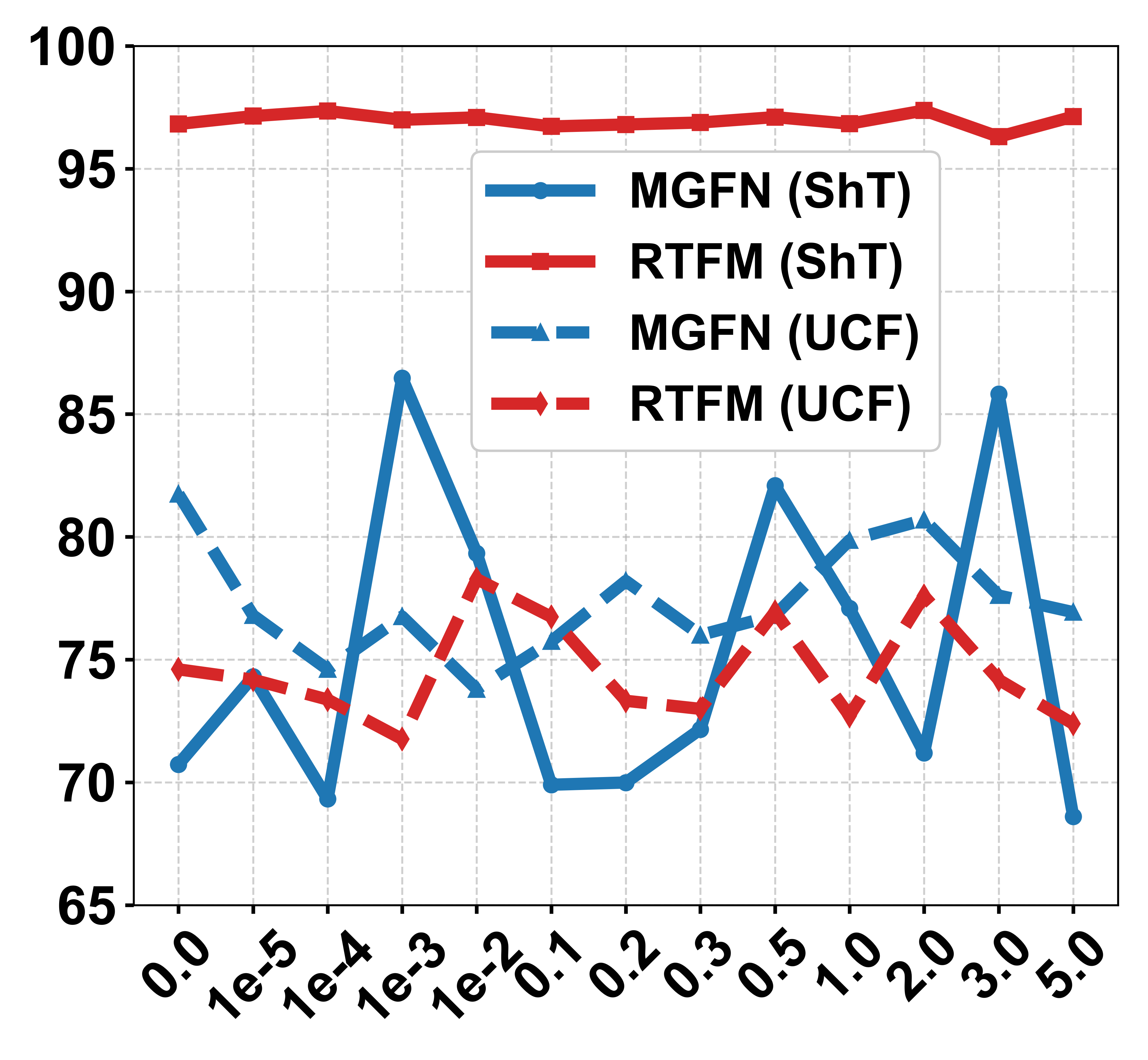}
\caption{$\lambda_{\text{orth}}$}
\label{hyper:lambda-orth}
\end{subfigure}
% \vspace{-0.2cm}
\caption{Evaluation of key hyperparameters.}
\label{fig:hyper}
% \vspace{-0.35cm}
\vspace{-0.3cm}
\end{figure}

% RTFM is a widely used, lightweight baseline that supports systematic ablation and interpretability. In contrast, MGFN is a more sophisticated model incorporating attention mechanisms and contrastive learning, making it ideal for evaluating generalization to complex architectures.

\begin{table}[tbp]
\centering
\resizebox{\linewidth}{!}{
\begin{tabular}{c|ccccccc}
\toprule
$k_{\text{OPL}} \backslash k_{\text{G-OPL}}$ & 2 & 4 & 8 & 16 & 32 & 64 & 128 \\
\midrule
2   & 95.5 & 95.9 & 95.6 & 94.8 & 93.9 & 92.5 & 91.8 \\
4   & 95.9 & \textbf{97.3} & 96.8 & 95.2 & 94.0 & 92.8 & 91.9 \\
8   & 95.7 & 97.0 & 96.5 & 95.0 & 93.8 & 92.6 & 91.7 \\
16  & 95.2 & 96.3 & 95.8 & 95.0 & 93.6 & 92.2 & 91.5 \\
32  & 94.5 & 95.1 & 94.6 & 93.8 & 92.8 & 91.5 & 90.8 \\
64  & 93.6 & 94.0 & 93.2 & 92.0 & 90.5 & 88.9 & 88.3 \\
128 & 92.8 & 93.1 & 92.4 & 91.5 & 89.8 & 88.4 & 87.9 \\
\bottomrule
\end{tabular}
}
\caption{Decoupled analysis of OPL and G-OPL with independently varied ranks $k_{\text{OPL}}$ and $k_{\text{G-OPL}}$ using ShanghaiTech. The table reports AUC (\%) on the validation set. Best performance is achieved at $k_{\text{OPL}}=4$ and $k_{\text{G-OPL}}=4$.}
\label{tab:decoupled_analysis}
\vspace{-0.5cm}
\end{table}

\textbf{Models \& setups.}
We integrate OPL and G-OPL into four recent state-of-the-art weakly supervised VAD models: RTFM \citep{Tian_2021_ICCV}, MGFN \citep{chen2023mgfn}, TEVAD \citep{10208872}, and EGO \citep{ding2025learnable}. 
These models rely on pre-trained backbones (\eg, I3D \citep{carreira2017quo} and Swin Transformer (SwinT) \citep{liu2022video}), making them naturally compatible with our plug-and-play projection modules. For OPL/G-OPL layer placement, we follow the \textit{practical guidelines} detailed in the Appendix. We place G-OPL layers immediately after feature extractors (\eg, I3D or SwinT) to suppress early-stage privacy-sensitive cues like faces, guided by weak face embeddings. OPL layers are inserted at deeper stages to filter task-irrelevant nuisances. 
For MSAD, as all videos have already been face-blurred for privacy protection, we directly use the official pre-extracted I3D and SwinT features provided by the dataset \citep{msad2024} for our experiments.

\begin{table*}[tbp]
    \setlength{\tabcolsep}{0.15em}
    \renewcommand{\arraystretch}{0.70}
    \centering
    \resizebox{\linewidth}{!}{
    \begin{tabular}{ll
        cc cc cc cc
        cc cc cc cc
        cc cc cc cc
        }
    \toprule
    \multirow{3}{*}{Method} 
    & \multicolumn{2}{c}{Assault} & \multicolumn{2}{c}{Explosion} & \multicolumn{2}{c}{Fighting} & \multicolumn{2}{c}{Fire} 
    & \multicolumn{2}{c}{Obj. Fall} & \multicolumn{2}{c}{People Fall} & \multicolumn{2}{c}{Robbery} & \multicolumn{2}{c}{Shooting}
    & \multicolumn{2}{c}{Traffic Acc.} & \multicolumn{2}{c}{Vandalism} & \multicolumn{2}{c}{Water Inc.} & \multicolumn{2}{c}{\textbf{Overall}} \\
    \cmidrule(lr){2-3} \cmidrule(lr){4-5} \cmidrule(lr){6-7} \cmidrule(lr){8-9}
    \cmidrule(lr){10-11} \cmidrule(lr){12-13} \cmidrule(lr){14-15} \cmidrule(lr){16-17}
    \cmidrule(lr){18-19} \cmidrule(lr){20-21} \cmidrule(lr){22-23} \cmidrule(lr){24-25}
    & AUC & AP & AUC & AP & AUC & AP & AUC & AP 
    & AUC & AP & AUC & AP & AUC & AP & AUC & AP
    & AUC & AP & AUC & AP & AUC & AP & AUC & AP \\
    \midrule
    RTFM (I3D) & 53.9 & \underline{66.4} & 66.0 & {76.6} & {79.8} & {88.6} & 44.9 & 71.1 & 84.6 & 89.3 & 45.7 & \underline{52.6} & 70.2 & {88.0} & \underline{87.5} & {89.2} & 64.1 & 57.7 & 74.9 & 73.0 & 98.1 & \underline{99.6} & {86.6} & \underline{68.4} \\
    MGFN (SwinT)  & 50.2 & 49.6 & 50.9 & {58.1} & {57.2} & {67.1} & 51.4 & 74.2 & 41.3 & 51.6 & {44.4} & {40.3} & 40.1 & 68.5 & 51.4 & {63.9} & {50.4} & {42.3} & {42.6} & 40.9 & {58.6} & {87.2} & 69.3 & 33.6 \\
    MGFN (I3D)  & 53.9 & 60.2 & 59.1 & 66.5 & 80.6 & 89.5 & {66.1} & 82.9 & 89.9 & 94.6 & \textbf{53.6} & 44.9 & \underline{72.2} & {85.4} & 68.3 & 80.6 & {66.9} & 54.7 & 84.4 & 78.5 & {81.9} & {96.1} & 81.2 & 59.3 \\
    UR-DMU  & 56.9 & 64.5 & {67.9} & 74.5 & {83.9} & {90.4} & 61.2 & 82.9 & 92.1 & \underline{95.8} & 42.5 & 43.7 & 63.5 & 79.3 & 81.4 & 87.8 & 62.0 & 55.6 & 84.7 & 77.0 & \underline{98.5} & {99.5} & 85.0 & {68.3} \\
    EGO  & 52.2 & 57.5 & 57.6 & 74.4 & 66.5 & 72.8 & 62.9 & {86.7} & \underline{92.3} & 94.8 & 35.4 & 43.8 & 64.8 & 87.5 & 68.6 & 78.4 & \underline{69.9} & \textbf{64.3} & \underline{88.1} & \underline{81.4} & 81.9 & 95.4 & \underline{87.3} & 64.4 \\
    IEF-VAD  & \underline{66.0} & - & 66.3 & - & 79.8 & - & 49.4 & - & 75.9 & - & 42.5 & - & 66.9 & - & {86.9} & - & \textbf{70.1} & - & 75.8 & - & 88.9 & - & 82.1 & - \\
    
    \addlinespace[0.3ex]
    \hline
    \addlinespace[0.3ex]
    
    \rowcolor{myblue}
    RTFM-{\textbf{\texttt{OPL}}} (I3D) & {57.0} & 62.4 & \textbf{77.7} & \textbf{85.7} & 74.1 & 84.8 & {49.6} & {75.5} & {87.7} & {92.1} & \underline{53.3} & {50.4} & \textbf{72.4} & \textbf{89.0} & 84.1 & \underline{89.5} & {69.5} & \underline{58.7} & {84.8} & {80.9} & \textbf{99.2} & \textbf{99.8} & 86.5 & 68.2 \\
    \rowcolor{myblue}
    MGFN-{\textbf{\texttt{OPL}}} (SwinT) & {59.1} & {56.5} & {52.7} & 57.0 & 44.3 & 55.2 & {63.0} & {76.4} & {58.3} & {59.3} & 40.6 & 36.0 & {49.5} & {70.7} & {55.3} & 62.1 & 49.1 & 39.6 & {60.8} & {53.7} & 44.4 & 78.9 & {78.2} & {47.5} \\
    \rowcolor{myblue}
    MGFN-{\textbf{\texttt{OPL}}} (I3D) & \textbf{71.3} & \textbf{69.4} & {61.8} & {73.0} & \underline{87.8} & \textbf{92.8} & \textbf{81.0} & \textbf{93.0} & \textbf{94.3} & \textbf{96.5} & 45.9 & {45.0} & 65.1 & 81.1 & {82.7} & {89.1} & 64.2 & {55.2} & \textbf{90.8} & \textbf{86.4} & 68.7 & 92.0 & {86.2} & {68.3} \\
    \rowcolor{myblue}
    RTFM-{\textbf{\texttt{G-OPL/OPL}}} (I3D) & {50.2} & {62.4} & \underline{69.4} & \underline{80.6} & {69.5} & {84.4} & {71.8} & {87.0} & {88.7} & {92.4} & 52.3 & \textbf{53.3} & 71.4 & \underline{88.2} & \textbf{87.8} & \textbf{91.0} & 62.5 & {54.7} & {82.0} & {79.6} & 97.5 & 99.4 & \textbf{88.0} & \textbf{70.9} \\
    \rowcolor{myblue}
    MGFN-{\textbf{\texttt{G-OPL/OPL}}} (I3D) & 52.4 & 59.8 & {66.5} & {76.8} & \textbf{88.8} & \underline{92.2} & \underline{77.2} & \underline{89.0} & {90.5} & {95.1} & 45.9 & {42.8} & 65.4 & 80.1 & {71.9} & {81.8} & 53.9 & {46.4} & {83.1} & {75.1} & 81.5 & 96.0 & {84.0} & {65.8} \\

    \bottomrule
    \end{tabular}}
    % \vspace{-0.3cm}
    \caption{{Performance by anomaly type on MSAD.}  
    We compare against recent methods (same for Table \ref{tab:scenario-msad}): RTFM \citep{Tian_2021_ICCV}, MGFN \citep{chen2023mgfn}, UR-DMU \citep{zhou2023dual}, EGO \citep{ding2025learnable}, and IEF-VAD \citep{jeong2025uncertainty}. \textbf{Bold} marks the best, \underline{underlined} the second-best. Our {\textbf{\texttt{G-OPL/OPL}}} achieves competitive or superior results across anomaly types, highlighting its robustness, especially when paired with strong base models (\eg, RTFM with I3D).}
    \label{tab:anomaly-msad}
\end{table*}

\begin{table*}[tbp]
\setlength{\tabcolsep}{0.08em}
    \renewcommand{\arraystretch}{0.70}
    \centering
    
    \resizebox{\linewidth}{!}{
    \begin{tabular}{
        lcccccccccccccccccccccccccc
    }
    \toprule
    \multirow{3}{*}{Method}
    & \multicolumn{2}{c}{Frontdoor} &  \multicolumn{2}{c}{Mall} & \multicolumn{2}{c}{Office}
    & \multicolumn{2}{c}{Parkinglot} & \multicolumn{2}{c}{Pedestr. st.} & \multicolumn{2}{c}{Restaurant}
    & \multicolumn{2}{c}{Road} & \multicolumn{2}{c}{Shop} & \multicolumn{2}{c}{Sidewalk} & \multicolumn{2}{c}{St. highview}
    & \multicolumn{2}{c}{Train} & \multicolumn{2}{c}{Warehouse} & \multicolumn{2}{c}{\textbf{Overall}} \\
    \cmidrule(lr){2-3} \cmidrule(lr){4-5} \cmidrule(lr){6-7} \cmidrule(lr){8-9}
    \cmidrule(lr){10-11} \cmidrule(lr){12-13} \cmidrule(lr){14-15} \cmidrule(lr){16-17}
    \cmidrule(lr){18-19} \cmidrule(lr){20-21} \cmidrule(lr){22-23} \cmidrule(lr){24-25}
    \cmidrule(lr){26-27} 
    & AUC & AP & AUC & AP
    & AUC & AP & AUC & AP & AUC & AP & AUC & AP
    & AUC & AP & AUC & AP & AUC & AP & AUC & AP
    & AUC & AP & AUC & AP & AUC & AP \\
    \midrule
    RTFM (I3D) & 81.8 & 79.3  & {88.1} & 76.6 & 76.6 & \textbf{72.8}
     & {80.7} & {45.8} & 94.0 & 48.5 & {88.3} & {79.1}
    & \underline{84.3} & {57.9} & 85.3 & {75.6} & \textbf{88.3} & \textbf{68.8} & {72.0} & {28.5}
    & {51.4} & {3.3} & 82.7 & 57.0 & {86.6} & \underline{68.4} \\
    MGFN (SwinT) & 59.5 & 51.7 & 18.5 & 20.1 & 64.1 & 52.3 & 67.9 & 19.0 & {75.9} & {9.7} & {67.9} & {44.0} & {70.6} & 26.3 & 62.7 & 43.0 & 69.0 & 25.9 & 75.3 & 23.3 & 65.4 & 5.2 & 70.1 & 30.1 & 69.3 & 33.6\\
    MGFN (I3D)  & {82.5} & {80.8}  & 73.8 & 71.3 & 71.5 & 58.2
     & 68.9 & 14.8 & {94.8} & 36.2 & \underline{95.1} & \textbf{91.3}
    & 76.5 & 35.8 & {85.6} & {78.4} & 78.5 & 57.2 & {77.9} & {29.3}
    & 40.3 & 2.1 & 58.3 & 24.2 & 81.2 & 59.3 \\
    
    UR-DMU  & 84.8 & {82.8} & \textbf{91.0} & \underline{83.8} & \underline{77.8} & 67.3
    &  \underline{91.4} & \underline{53.9} & 81.9 & 11.5 & 93.1 & 87.4
    & 83.0 & \underline{64.4} & 81.3 & 64.5 & 86.5 & {64.1} & 85.0 & 37.7
    & 59.0 & 3.1 & {81.2} & {59.1} & 85.0 & {68.3} \\
    {EGO} & \underline{85.2} & 81.6 & 82.3 & 73.4 & \textbf{80.0} & {71.7} & \textbf{96.8} & \textbf{75.2} & \textbf{97.5} & \underline{52.0} & {94.3} & 73.9
    & \textbf{89.8} & \textbf{64.6} & 83.4 & 72.2 & \underline{87.1} & 45.0 & 28.2 & 10.1
    & \underline{80.8} & {7.8} & 84.7 & 46.6 & \underline{87.3} & 64.4 \\
    IEF-VAD & - & -&- &- &- &-&-&-&-&-&-&-&-&-&-&-&-&-&-&-&-&-&-&-& 82.1 & - \\
    \addlinespace[0.3ex]
    \hline
    \addlinespace[0.3ex]
    
    \rowcolor{myblue}
    RTFM-{\textbf{\texttt{OPL}}} (I3D) & \textbf{85.6} & {82.3}  & 85.6 & {80.2} & {77.2} & \underline{72.0}  & 76.9 & 26.4 & \underline{96.6} & {50.5} & {90.2} & {81.3} & 76.9 & 53.3 & \underline{88.6} & \textbf{82.8} & 84.9 & 56.5 & 66.8 & 26.7 & 42.4 & 2.3 & {86.1} & {66.8} & 86.5 & 68.2 \\
    \rowcolor{myblue}
    MGFN-{\textbf{\texttt{OPL}}} (SwinT) & {68.5} & {57.8} & {89.0} & {61.8} & {68.4} & {55.4} & {79.4} & {39.0} & 74.5 & 5.0 & 51.6 & 36.1 & 67.3 & {28.1} & {77.1} & {60.3} & {81.1} & {41.9} & {87.1} & \underline{45.8} & \textbf{83.5} & \underline{11.9} & {83.8} & {52.4} & {78.2} & {47.5} \\
    \rowcolor{myblue}
    MGFN-{\textbf{\texttt{OPL}}} (I3D) & {84.4} & \textbf{84.1}  & {80.2} & {74.7} & {74.7} & {65.0}  & {87.0} & {30.9} & 93.5 & \textbf{53.1} & 91.2 & {87.6} & {80.0} & {55.7} & 82.1 & 69.4 & {86.8} & {63.8} & \textbf{98.1} & \textbf{95.1} & {70.8} & {9.1} & \textbf{89.9} & \underline{76.1} & {86.2} & {68.3} \\
    \rowcolor{myblue}
    RTFM-{\textbf{\texttt{G-OPL/OPL}}} (I3D) & {82.0} & {79.3}  & \textbf{91.0} & {81.4} & {74.3} & \underline{72.0}  & 79.4 & 27.2 & {86.9} & {36.1} & {90.3} & {81.4} & 72.4 & 46.7 & \textbf{89.0} & \underline{82.5} & 87.0 & \underline{65.1} & 84.9 & 37.8 & 70.4 & \textbf{12.0} & \underline{86.3} & \textbf{79.6} & \textbf{88.0} & \textbf{70.9} \\
    \rowcolor{myblue}
    MGFN-{\textbf{\texttt{G-OPL/OPL}}} (I3D) & {84.4} & \underline{83.3}  & \underline{90.0} & \textbf{84.8} & {75.9} & {62.3}  & {70.4} & {16.9} & 90.5 & {25.8} & \textbf{95.7} & \underline{90.2} & {71.4} & {43.1} & 79.7 & 64.5 & {83.8} & {63.3} & \underline{87.7} & {41.3} & {44.7} & {2.3} & {64.8} & {41.1} & {84.0} & {65.8} \\
    \bottomrule
    \end{tabular}
    }
     % \vspace{-0.3cm}
    \caption{{Performance by scenario on MSAD.} Results on 12 test scenarios (excluding Highway and Park without anomalies) show our method's strong adaptability and robustness, consistently outperforming or matching top baselines and recent state-of-the-art methods while achieving better balance across scenarios (this table) and anomaly types (Table \ref{tab:anomaly-msad}). 
    }
    \label{tab:scenario-msad}
    \vspace{-0.3cm}
\end{table*}

\begin{figure}[tbp]
\centering
% \vspace{-0.5cm}
\centering
\begin{subfigure}[t]{0.8\linewidth}
\centering\includegraphics[trim=0cm 0cm 0cm 0cm, clip=true, width=\linewidth]{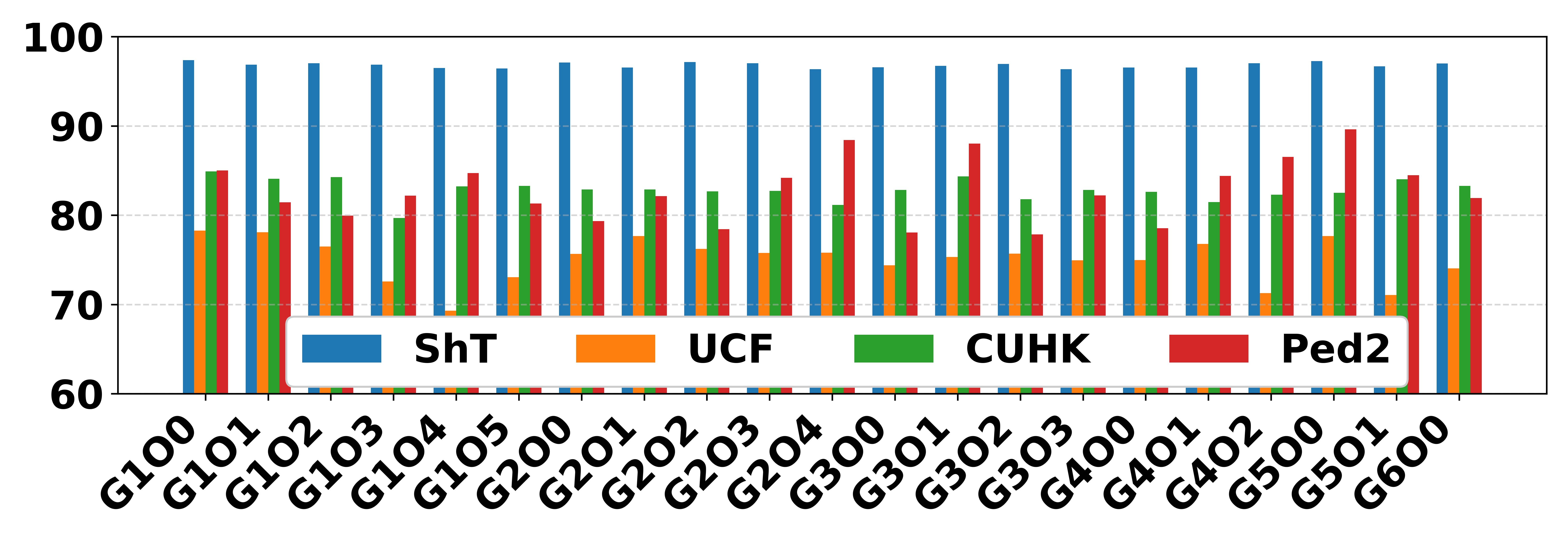} 
\caption{Evaluation of projection layer placement.}
\label{hyper:arrangement}
\end{subfigure}
\begin{subfigure}[t]{0.18\linewidth}
\centering\includegraphics[trim=0cm 0cm 0cm 0cm, clip=true, width=\linewidth]{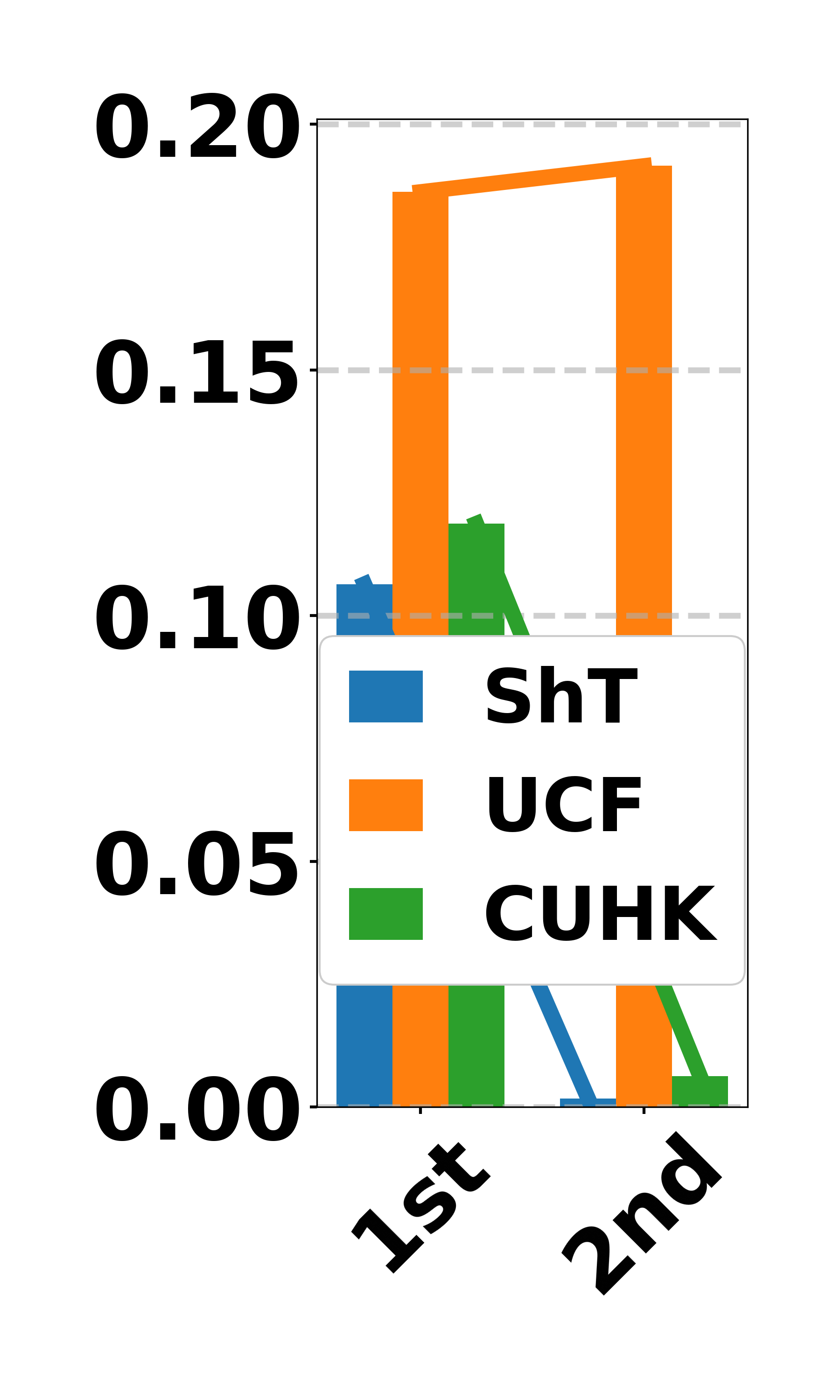}
\caption{PD.}
\label{pd}
\end{subfigure}
% \vspace{-0.2cm}
\caption{(a) Ablation on OPL/G-OPL placement across datasets with RTFM. G$m$O$n$: $m$ G-OPL and $n$ OPL layers. (b) We select model with more G-OPL layers that achieves reasonable VAD performance for privacy decay (PD) curve.}
\label{ablation-pd}
% \vspace{-0.4cm}
\end{figure}

\begin{figure*}[tbp]
\centering
\begin{subfigure}[t]{0.245\linewidth}
\centering\includegraphics[trim=0cm 0cm 0cm 0cm, clip=true, width=\linewidth]{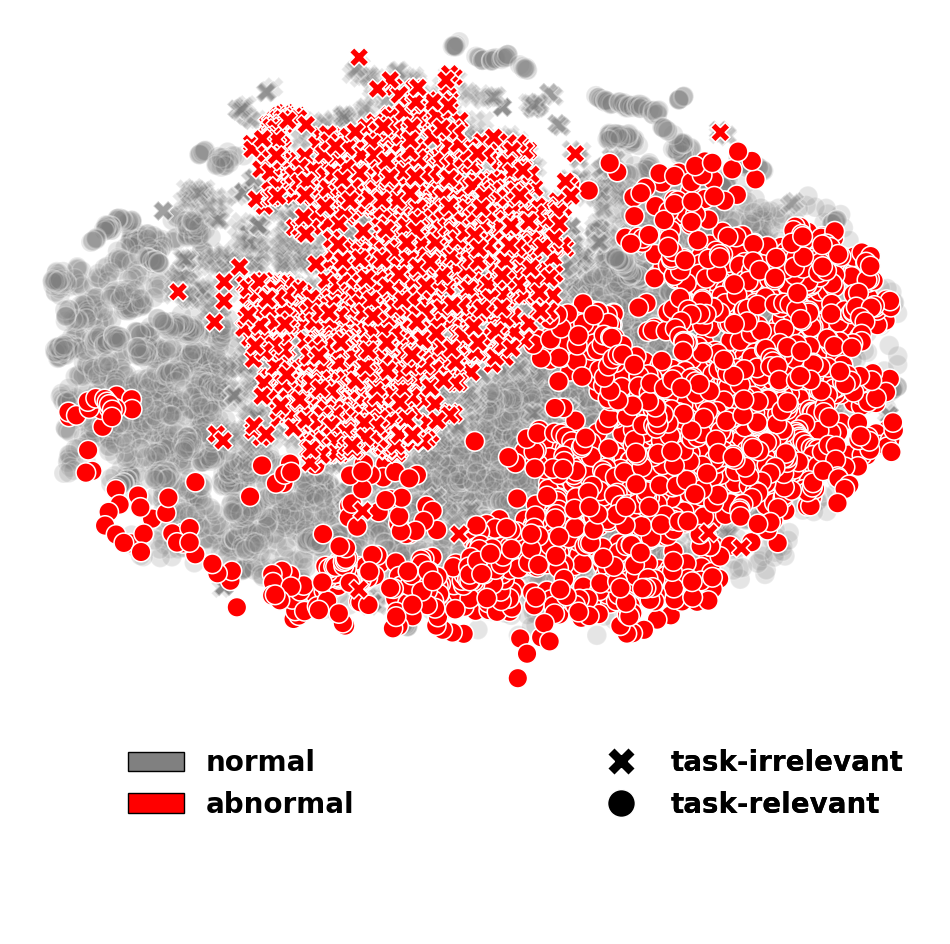} 
\caption{OPL.}
\label{opl-n-ab}
\end{subfigure}
\begin{subfigure}[t]{0.245\linewidth}
\centering\includegraphics[trim=0cm 0cm 0cm 0cm, clip=true, width=\linewidth]{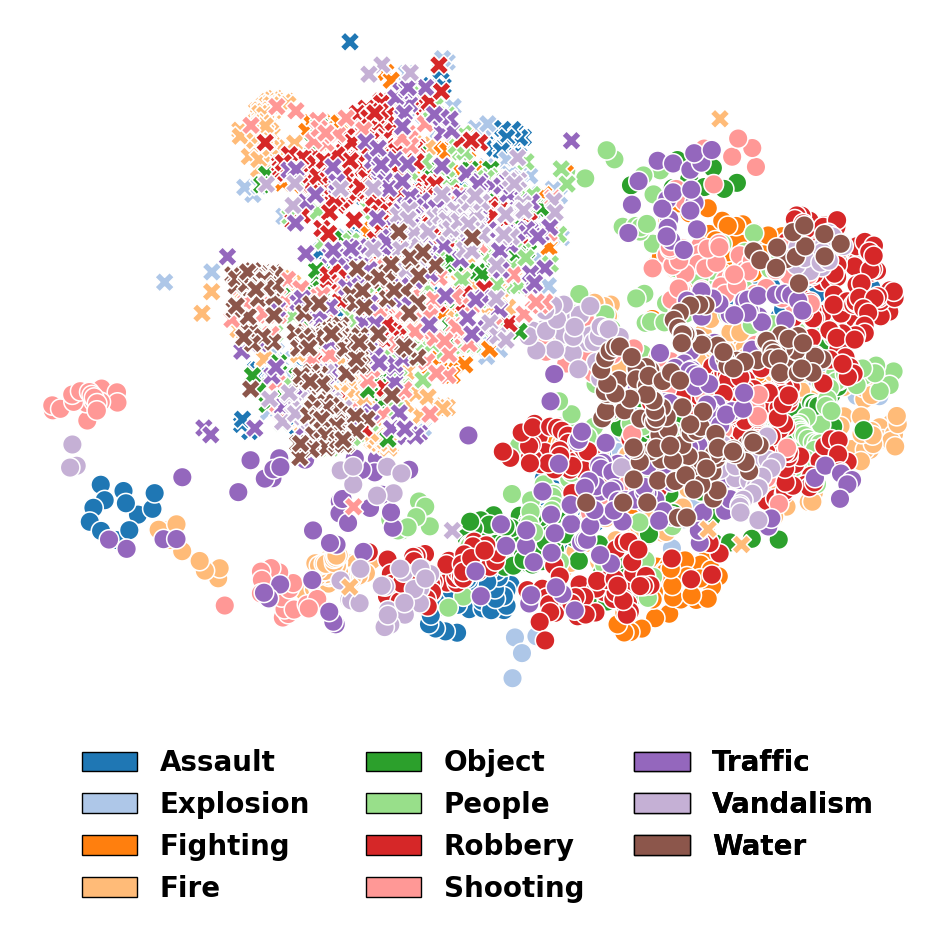}
\caption{OPL (anomaly type)}
\label{opl-ab}
\end{subfigure}
\begin{subfigure}[t]{0.245\linewidth}
\centering\includegraphics[trim=0cm 0cm 0cm 0cm, clip=true, width=\linewidth]{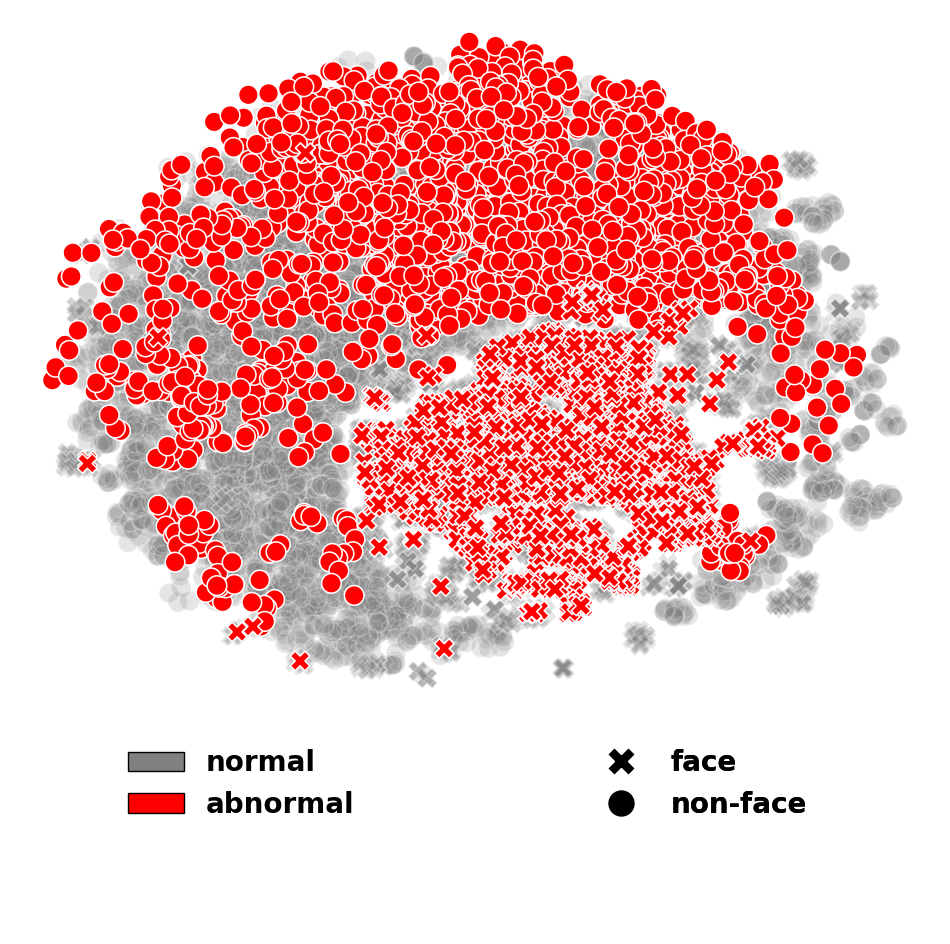}
\caption{G-OPL}
\label{gopl-n-ab}
\end{subfigure}
\begin{subfigure}[t]{0.245\linewidth}
\centering\includegraphics[trim=0cm 0cm 0cm 0cm, clip=true, width=\linewidth]  {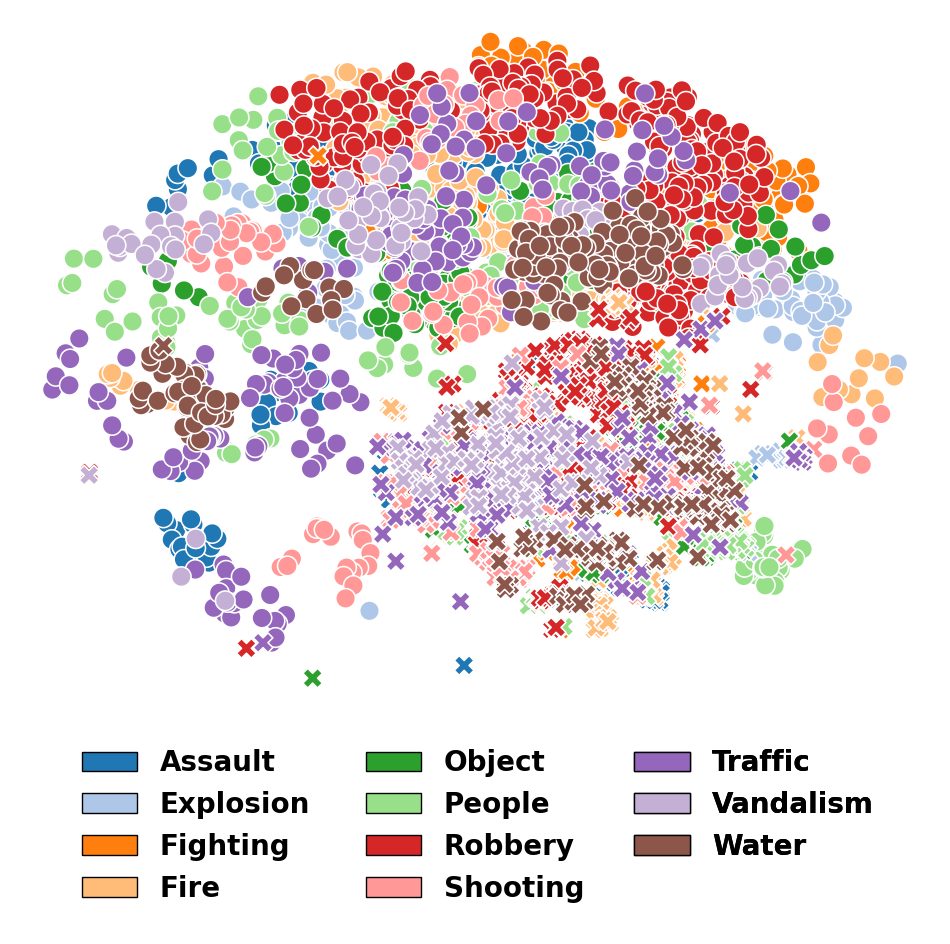} 
\caption{G-OPL (anomaly type)}
\label{gopl-ab}
\end{subfigure}
% \vspace{-0.3cm}
\caption{
UMAP plots visualize task-relevant (dots) and removed nuisance/sensitive (crosses) features after the first OPL/G-OPL (using RTFM-I3D on MSAD). Colors indicate \textit{frame-level} labels (normal, abnormal, anomaly types). Removal operates at the feature level, anomalies are detected from the remaining task-relevant features, not from removed components.
(a) \emph{vs.} (c): Both OPL and G-OPL successfully disentangle nuisance/sensitive features (crosses) from task-relevant ones (dots), but G-OPL yields more compact clusters of removed features due to its guidance from facial cues, offering clearer separation of irrelevant information.
(b) \emph{vs.} (d): For task-relevant features, G-OPL preserves more compact and semantically meaningful clusters for \textit{human-centric} anomalies (\eg, \textit{robbery}, \textit{vandalism}), revealing improved disentanglement of anomaly-relevant factors.
}
\label{fig:opl-gopl}
% \vspace{-0.5cm}
\end{figure*}

\subsection{Quantitative and Qualitative Evaluation}

We evaluate our method comprehensively and summarize the key findings and insights below.
Further analysis and discussion are provided in the Appendix (\ref{sec-setup}-\ref{sec:addi}).

% We comprehensively evaluate our method across multiple dimensions.  
% Below, we summarize key findings and insights.

\textbf{Sensitivity to $k$, $\lambda_{\text{face}}$, and $\lambda_{\text{orth}}$.} 
As shown in Fig. \ref{hyper:k}, performance improves as $k$ increases from 2 to 16, peaking around $k=4$, before degrading at $k=64$. This suggests that small to moderate subspaces are sufficient to isolate nuisance factors, while overly large ones begin erasing task-relevant information.
For both $\lambda_{\text{face}}$ and $\lambda_{\text{orth}}$ (Fig. \ref{hyper:lambda-face} and \ref{hyper:lambda-orth}), we observe a stable performance window in the range $[10^{-4}, 10^{-2}]$. Within this range, semantic alignment and subspace decorrelation enhance disentanglement without impeding feature learning. However, large values hurt performance, likely due to over-penalization of useful variations or instability. Notably, ShanghaiTech remains relatively robust across a broad range, while UCF with MGFN, shows higher sensitivity, highlighting the greater importance of guided suppression in unstructured environments.

\textbf{Decoupled analysis of rank $k$.} Fig. \ref{hyper:k} analyzes each module individually (\ie, OPL-only or G-OPL-only). The non-monotonic behavior reflects the behavior of isolated components. We provide a decoupled analysis with independent ranks $k_\text{OPL}$ and $k_\text{G-OPL}$ (Table \ref{tab:decoupled_analysis}). Best performance is at $k_{\text{OPL}} \!=\! 4$ and $k_{\text{G-OPL}} \!=\! 4$ (97.3\% AUC). High performance is stable for $k_{\text{G-OPL}} \!=\! 4$ and $k_{\text{OPL}} \!\in\! \{4,8,16\}$. (iii) G-OPL is more sensitive to large ranks ($k_{\text{G-OPL}} \!>\! 8$), while OPL degrades more gradually ($k_{\text{OPL}} \!>\! 16$). 
We provide a simple practical guideline: (i) Heuristic: $k \approx$ 2-5\% of the feature dimension (\eg, $k \in [16,50]$ for 1024-d features). (ii) Adaptive strategy: select the smallest $k$ beyond which performance saturates (\eg, based on ARD/FPD). This avoids exhaustive tuning in practice (see Appendix \ref{sec:addi}).

\textbf{Layer placement and frequency.} 
As shown in Fig. \ref{hyper:arrangement}, 
applying a single early-stage G-OPL (\eg, G1O0) consistently delivers strong and stable performance, achieving top or near-top results on ShT and CUHK. In contrast, progressively adding more OPL layers (G1O1 through G1O5) often leads to diminishing returns or even degraded performance, particularly on UCF. This supports our hypothesis that excessive disentanglement disrupts the retention of subtle anomaly cues, emphasizing the importance of strategic rather than frequent placement of these modules.
Introducing G-OPL at deeper layers (G2-G6) maintains competitive results on ShT but shows less stability on UCF and Ped2. Notably, configurations such as G5O0 achieve the highest AUC on Ped2, likely because this dataset contains low-resolution faces that are difficult to detect and thus better handled through late-stage disentanglement. 
These results validate our design principle: early-stage G-OPL provides the most reliable gains, while additional OPL layers should be used cautiously to avoid over-filtering. Later G-OPL layers may offer dataset-specific benefits, but overcomplicating the placement does not yield universal improvements.

\textbf{Effectiveness on MSAD across anomaly types and scenarios.} Our improvements are evident across both anomaly types (\eg, accidents \emph{vs.} crimes) as shown in Table \ref{tab:anomaly-msad}, and across diverse scenarios in Table \ref{tab:scenario-msad}, highlighting the robustness and adaptability of our approach. Notably, our method excels in challenging categories such as Fire and Vandalism, where subtle or occluded cues often hinder detection. This underscores the effectiveness of our disentanglement mechanism in filtering out task-irrelevant information while preserving critical anomaly-related signals.
Furthermore, the performance gains are particularly pronounced when integrating {OPL} with stronger backbones like I3D, confirming the complementary nature of our design to existing architectures. 
While baselines such as EGO \citep{ding2025learnable} exhibit robustness in certain scenarios through fusion strategies, they struggle with anomaly type diversity, where our method maintains balanced and consistently superior performance.
Importantly, these findings suggest that our model not only mitigates the influence of noise and distractions but also enhances the model's ability to preserve causal and contextually relevant cues essential for accurate anomaly recognition (see also Fig. \ref{fig:opl-gopl}). This capability is particularly valuable in scenarios characterized by visual occlusions, complex dynamics, or subtle anomalies, where conventional fusion-based methods tend to underperform. %Ultimately, t
This highlights the broader potential of our approach for advancing robust and generalizable VAD.

\textbf{Discussion on $\mQ\mQ^\top$ visualization.} 
Fig.~\ref{fig:qqt} visualizes the projection matrices $\mQ\mQ^\top$ learned by G-OPL under both detected and generated faces. These matrices offer an interpretable window into how G-OPL identifies and suppresses privacy-sensitive directions in feature space.
When using detected faces as guidance, $\mQ\mQ^\top$ exhibits more diverse and irregular patterns. This reflects the variability inherent in real-world surveillance footage, where faces differ in pose, scale, occlusion, and lighting. Despite this variability, G-OPL successfully learns to capture meaningful face-related directions, as evidenced by consistent localized energy regions in the projection matrices (Fig. \ref{sht-d} \& \ref{ucf-d}). These patterns confirm that even weak, noisy supervision suffices to guide the suppression of sensitive information. 
In contrast, when guided by clean, generated faces from a controlled dataset, $\mQ\mQ^\top$ reveals more structured and concentrated patterns. This indicates that the learned subspace is more focused and coherent (Fig. \ref{sht-g} \& \ref{ucf-g}), effectively aligning with the dominant variations associated with facial identity in a more stable and controlled manner. 
This demonstrates G-OPL's adaptability: cleaner supervision leads to more precise subspace removal. Its flexibility allows effective privacy-aware disentanglement using diverse weak supervision, from noisy real-world data to curated facial signals. The resulting $\mQ\mQ^\top$ matrices provide interpretable evidence, offering transparency into how sensitive information is removed from learned features.
% This shows the adaptability of G-OPL: with cleaner supervision, it learns more precise subspace removal.
% These observations highlight a key strength of G-OPL: its flexibility in using different forms of weak supervision to achieve privacy-aware disentanglement. Whether operating on noisy in-the-wild data or curated facial signals, G-OPL consistently identifies and projects out sensitive directions. The resulting $\mQ\mQ^\top$ matrices serve as interpretable evidence of this process, offering transparency into how privacy is preserved within the learned features.

\begin{figure}[tbp]
\centering
\centering
\begin{subfigure}[t]{0.2285\linewidth}
\centering\includegraphics[trim=0cm 0cm 0cm 0cm, clip=true, width=\linewidth]{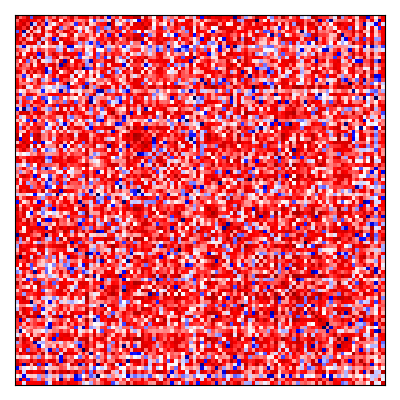} 
\caption{ShT (\textit{det.})}
\label{sht-d}
\end{subfigure}
\begin{subfigure}[t]{0.2285\linewidth}
\centering\includegraphics[trim=0cm 0cm 0cm 0cm, clip=true, width=\linewidth]{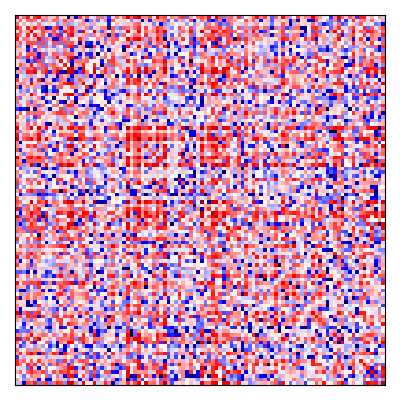}
\caption{ShT (\textit{gen.})}
\label{sht-g}
\end{subfigure}
\begin{subfigure}[t]{0.2285\linewidth}
\centering\includegraphics[trim=0cm 0cm 0cm 0cm, clip=true, width=\linewidth]{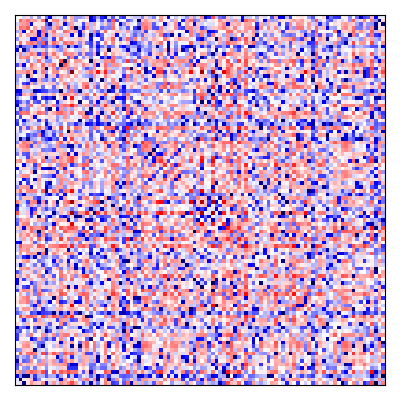}
\caption{UCF (\textit{det.})}
\label{ucf-d}
\end{subfigure}
\begin{subfigure}[t]{0.282\linewidth}
\centering\includegraphics[trim=0.3cm 1.25cm 0.3cm 0.425cm, clip=true, width=\linewidth]{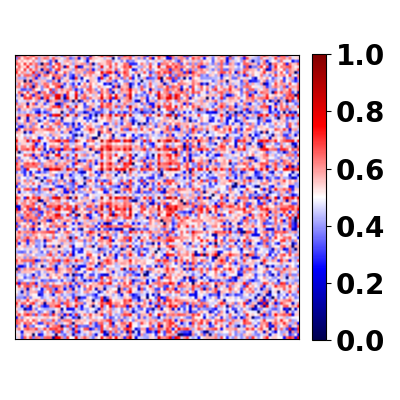}
\caption{UCF (\textit{gen.})}
\label{ucf-g}
\end{subfigure}
% \vspace{-0.3cm}
\caption{
Visualization of $\mQ \mQ^\top$ (G-OPL of RTFM model) from detected (\textit{det.}) and generated (\textit{gen.}) faces on ShanghaiTech (ShT) and UCF-Crime (UCF). Central 100$\times$100 regions, where energy concentrates, are shown to better reveal informative subspace patterns. All matrices are log-scaled, globally min-max normalized. Differences reflect dataset- and signal-specific subspace structures, highlighting distinct patterns of sensitive feature disentanglement.
}
\label{fig:qqt}
\end{figure}

\begin{table*}[tbp]
\centering
\setlength{\tabcolsep}{0.1em}
\resizebox{\linewidth}{!}{\begin{tabular}{lcccccccccccccccccccccccc}
\toprule
 \multirow{3}{*}{Method} & \multicolumn{6}{c}{\textbf{ShanghaiTech}} & \multicolumn{6}{c}{\textbf{UCF-Crime}} & \multicolumn{6}{c}{\textbf{CUHK Avenue}} & \multicolumn{6}{c}{\textbf{UCSD Ped2}} \\
\cmidrule(lr){2-7} \cmidrule(lr){8-13} \cmidrule(lr){14-19} \cmidrule(lr){20-25}
 & AUC & AP & {\texttt{SSC}$\uparrow$} & {\texttt{ARD}$\downarrow$} & {\texttt{FPD}$\downarrow$} & {{\texttt{Arc}}$\downarrow$} & AUC & AP & {\texttt{SSC}$\uparrow$} & {\texttt{ARD}$\downarrow$} & {\texttt{FPD}$\downarrow$} & {{\texttt{Arc}}$\downarrow$} & AUC & AP & {\texttt{SSC}$\uparrow$} & {\texttt{ARD}$\downarrow$} & {\texttt{FPD}$\downarrow$} & {{\texttt{Arc}}$\downarrow$} & AUC & AP & {\texttt{SSC}$\uparrow$} & {\texttt{ARD}$\downarrow$} & {\texttt{FPD}$\downarrow$} & {{\texttt{Arc}}$\downarrow$} \\
\midrule
USTN-DSC \citep{yang2023video} & 73.8 & -& -& - & - & - & -& -& -& -& - & - & 89.9 & -& -& -& - & -& 98.1 & -& -& -& - & - \\
FPDM \citep{Yan_2023_ICCV} & 78.6 & -&- & -& - & - & 74.7 & - & -& -& - & - & 90.1&- & -& -&- & -& -& -& -& - & - & -\\
HSC \citep{sun2023hierarchical} & 83.4  & -&- &-&- & - & -& -&  -& - & - & -& 93.7 & -& -&-& - & -& 98.1 & -& -& -& - & -\\
% TEVAD \citep{10208872}& 98.1 & - &- &- & - & - &84.9 &- &- &-& - & - &- & -& -& -& - & - & 98.7 & -& -& -& - & -\\
PEL4VAD \citep{pu2024learning} & 98.1 & 72.6 & -&- & - & - & 86.8 & 34.0 & - &- & - & - &- & -&- &- &- & - & -& -& -& -& - & -\\
VadCLIP \citep{wu2024vadclip} & - & - & -& -& - & - & 88.0 & -&  -& - & - & - & -& -& -& - & -& - & - & -& -& -& - & -\\
% EGO \citep{ding2025learnable} & 97.3 & - & -& -& - & - & 81.7 & -&  -& - & - & - & 83.1 & -& -& -& - & - & 93.2 & -& -& -& - & -\\
\hline
{TeD-SPAD} \citep{fioresi2023ted} & 90.6 & - & -& -& - & - & 74.8 & -&  -& - & - & - & - & -& -& -& - & - & - & -& -& -& -  & -\\
{SPAct} \citep{dave2022spact} & 87.7 & - & -& -& - & - & 73.9 & -&  -& - & - & - & - & -& -& -& - & - & - & -& -& -& - & -\\
{MGNAD} \citep{park2020learning} & 70.5 & - & -& -& - & - & - & -&  -& - & - & - & 88.5 & -& -& -& - & - & 97.0 & -& -& -& - & -\\
\hline
MGFN \citep{chen2023mgfn} & 75.3& 22.7& - & - &  {0.98} & 0.14 & 77.0 & 13.0 & - & - &  {0.79} & 0.05 & 67.3& 37.7& - & - &  {0.94} & 0.44 & 86.8 & 76.7 & - & - &  {0.91} & -\\
\rowcolor{myblue}
MGFN-{\textbf{\texttt{G-OPL}}/\textbf{\texttt{OPL}}} & \textbf{83.7}& \textbf{42.0} &  {0.10} &  {0.23} &  {0.96} & \textbf{0.01} & \textbf{83.3}& \textbf{15.2} &  \textbf{0.99} &  \textbf{0.07} &  {0.49} & \textbf{0.02} & \textbf{70.8}& \textbf{40.5}&  {0.39} &  \textbf{18.0} &  \textbf{0.58} & \textbf{0.13} & \textbf{93.9}& \textbf{93.6} &  {0.39} &  {0.94} &  \textbf{0.31} & -\\
\rowcolor{myblue}
MGFN-{\textbf{\texttt{G-OPL}}/\textbf{\texttt{OPL}}}$\dagger$ & \textbf{89.5}& \textbf{41.9} & \textbf{0.52} &  \textbf{0.16} &  \textbf{0.68} & \textbf{0.01} & \textbf{80.9} & \textbf{14.8} & 0.72 & 6.46 & \textbf{0.01} & 0.03 & \textbf{69.1} & \textbf{43.5} & \textbf{0.81} & 23.6 & 1.0 & 0.31 & NA & NA & NA& NA& NA & -\\
\hline
RTFM \citep{Tian_2021_ICCV} & 96.8 & 71.6 & - & - &  {0.72} & 0.99 & 74.3& 20.1 & - & - &  {0.68} & 1.0 & 83.3& \textbf{66.3}& - & - &  {1.0} & 1.0 & 85.6 & \textbf{82.0} & - & - &  {0.31} & - \\
\rowcolor{myblue}
RTFM-{\textbf{\texttt{G-OPL}}/\textbf{\texttt{OPL}}} & \textbf{97.3} & \textbf{74.7} &  \textbf{0.97} &  {0.29} &  {0.19} & \textbf{0.01} & \textbf{78.3} & \textbf{30.9} &  \textbf{0.98} &  \textbf{0.03} &  \textbf{0.23} & 0.03 & \textbf{84.9}& 66.2&  \textbf{0.89} &  \textbf{0.26} &  \textbf{0.0} & \textbf{0.31} & \textbf{89.6} & 75.0 &  {0.98} &  {0.06} &  \textbf{0.29} & - \\
\rowcolor{myblue}
RTFM-{\textbf{\texttt{G-OPL}}/\textbf{\texttt{OPL}}}$\dagger$ & \textbf{97.2} & \textbf{74.6} &  {0.20} &  \textbf{0.17} &  \textbf{0.01} & {0.02} & 74.1 & \textbf{30.1} & 0.33 & 0.30 & 0.64 & \textbf{0.02} & \textbf{83.9} & 65.7 & 0.11 & 3.86 & \textbf{0.0} & \textbf{0.31} & NA & NA & NA& NA& NA & -\\
\hline
TEVAD \citep{10208872} & 98.1 & - & - & - & 0.68 & 0.95  & 84.9 & - & - & - & 0.76 & 0.94 & 86.8 & - & - & - & 0.69 & 0.93 & 98.7 & - & - & - & 0.60 & 0.80\\
\rowcolor{myblue}
TEVAD-{\textbf{\texttt{OPL}}} & \textbf{98.4} & - & 0.39 & 0.18 & 0.37 & 0.39 & \textbf{85.3} & - & 0.40 & 0.17 & 0.46 & 0.41 & \textbf{87.3} & - & 0.43 & 0.14 & 0.40 & 0.41 & \textbf{98.9} & - & 0.42 & 0.02 & 0.40 & 0.39 \\
\rowcolor{myblue}
TEVAD-{\textbf{\texttt{G-OPL}}} & \textbf{98.2} & - & \textbf{0.92} & \textbf{0.11} & \textbf{0.07} & \textbf{0.04} & \textbf{85.0} & - & \textbf{0.88} & \textbf{0.15} & \textbf{0.11} & \textbf{0.06} & \textbf{87.1} & - & \textbf{0.93} & \textbf{0.13} & \textbf{0.09} & \textbf{0.05} & \textbf{98.8} & - & \textbf{0.79} & 0.02 & \textbf{0.17} & \textbf{0.07}\\
\hline
EGO \citep{ding2025learnable} & 97.3 & - & - & - & 0.72 & 0.98 & 81.7 & - & - & - & 0.78 & 0.96 & 83.1 & - & - & - & 0.70 & 0.97 & 93.2 & - & - & - & 0.65 & 0.85\\
\rowcolor{myblue}
EGO-{\textbf{\texttt{OPL}}} & \textbf{97.9} & - & 0.39 & 0.19 & 0.41 & 0.42 & \textbf{82.6} & - & 0.45 & 0.24 & 0.49 & 0.44 & \textbf{84.2} & - & 0.45 & 0.22 & 0.39 & 0.46 & \textbf{94.1}  & - & 0.45 & 0.03 & 0.46 & 0.44\\
\rowcolor{myblue}
EGO-{\textbf{\texttt{G-OPL}}} & \textbf{97.6} & - & \textbf{0.94} & \textbf{0.17} & \textbf{0.09} & \textbf{0.05} & \textbf{82.1} & - & \textbf{0.92} & \textbf{0.21} & \textbf{0.12} & \textbf{0.09} & \textbf{84.0} & - & \textbf{0.96} & \textbf{0.20} & \textbf{0.10} & \textbf{0.07} & \textbf{94.0} & - & \textbf{0.83} & 0.03 & \textbf{0.19} & \textbf{0.08}\\
\bottomrule
\end{tabular}}
% \vspace{-0.3cm}
\caption{{Results on ShanghaiTech, UCF-Crime, CUHK Avenue, and UCSD Ped2.} {\textbf{\texttt{G-OPL}}/\textbf{\texttt{OPL}}} are further evaluated using privacy metrics ({\texttt{SSC}$\uparrow$}, {\texttt{ARD}$\downarrow$}, {\texttt{FPD}$\downarrow$}). 
$\dagger$ uses RetinaFace embeddings; NA denotes undetectable faces on UCSD Ped2 due to low resolution.
\textbf{Bold} highlights performance surpassing the baseline or the best privacy variant. 
Privacy metrics may underperform on datasets with low resolution or small/occluded faces.
{We also include ArcFace-based identity retrieval as a classifier-based privacy probe (\texttt{Arc}$\downarrow$), providing an external reference point for comparison.}
{MGNAD and SPAct are adapted to VAD; TeD-SPAD follows its own setup using MGFN+I3D.} This is \textit{the first comprehensive privacy analysis} for VAD across datasets.
}
\label{tab:weakly-sup}
% \vspace{-0.5cm}
\end{table*}

\textbf{Privacy-aware metrics.} 
% Across datasets (Table~\ref{tab:weakly-sup}) that contain human faces, we observe reductions in both ARD and FPD (see also the PD curve in Fig. \ref{pd}). Simultaneously, SSC values remain high, indicating that the removal of sensitive components does not distort anomaly-relevant semantics. This validates the core hypothesis behind G-OPL: identity-related variations reside in a subspace orthogonal to anomaly-relevant signals and can be explicitly projected out without harming the task utility. Furthermore, we observe that using generated faces as guidance yields cleaner suppression with consistently lower ARD and FPD than detected faces, highlighting the robustness of our method under both synthetic and realistic privacy threats. These findings provide \textit{the first comprehensive, quantitative analysis} of privacy leakage in VAD, establishing our approach as a novel contribution to the field with clear practical benefits.
As shown in Table \ref{tab:weakly-sup}, our method reduces identity leakage across both our proposed metrics (SSC/ARD/FPD) and the ArcFace probe, confirming its robustness and generality.
Across datasets containing human faces, we observe reduced ARD and FPD (see PD curve in Fig.~\ref{pd}) while SSC remains high, showing that sensitive components are removed without affecting anomaly-relevant semantics. This supports G-OPL's core idea: identity-related variations lie in a subspace orthogonal to anomaly-relevant signals and can be projected out without harming task utility. Using generated faces further improves suppression, achieving lower ARD and FPD than detected faces, demonstrating robustness under both synthetic and real privacy threats. These results provide the first comprehensive quantitative analysis of privacy leakage in VAD, highlighting G-OPL's practical impact.
Importantly, our results show that the proposed metrics (SSC/ARD/FPD) correlate well with established ArcFace-based privacy probes. Notably, cases with low SSC and FPD also demonstrate low identity retrieval accuracy under ArcFace, confirming the consistency of these indicators. However, our metrics offer several advantages over classifier-based probe: they are more interpretable, do not require labeled identities, and are cheaper to compute, especially in the absence of clean face datasets. Thus, we advocate using SSC/ARD/FPD as diagnostic tools for privacy preservation, while ArcFace serves as a strong external attacker baseline.

\begin{table}[tbp]
\setlength{\tabcolsep}{0.1em}
\centering
\resizebox{\linewidth}{!}{
\begin{tabular}{lccccccccc}
\toprule
 \multirow{2}{*}{Method} & \multicolumn{3}{c}{ShanghaiTech} & \multicolumn{2}{c}{CUHK Avenue} & \multicolumn{2}{c}{UCSD Ped2} & \multicolumn{2}{c}{UCF-Crime} \\
\cmidrule(lr){2-4} \cmidrule(lr){5-6} \cmidrule(lr){7-8} \cmidrule(lr){9-10}
& AUC $\uparrow$ & FPD $\downarrow$ & ARD $\downarrow$ & AUC $\uparrow$ & FPD $\downarrow$ & AUC $\uparrow$ & FPD $\downarrow$ & AUC $\uparrow$ & FPD $\downarrow$ \\
\midrule
Baseline    & 96.8 & 0.42 & 0.08 & 85.6 & 0.37 & 98.1 & 0.05  & 84.5 & 0.48 \\
+ Input masking (blur) & 93.5 & \textbf{0.18} & 0.30 & 82.0 & \textbf{0.16} & 97.9 & 0.045 & 80.2 & \textbf{0.20} \\
+ Differential privacy & 95.8 & 0.34 & 0.19 & 84.1 & 0.31 & 97.8 & 0.045 & 83.1 & 0.39 \\
+ Feature bottleneck   & 95.3 & 0.30 & 0.15 & 83.8 & 0.28 & 97.6 & 0.045 & 82.6 & 0.35 \\
+ GRL (adversarial)    & 95.9 & 0.33 & 0.17 & 84.2 & 0.30 & 97.7 & 0.045 & 83.4 & 0.36 \\
+ \textbf{\texttt{OPL}} (Ours)           & 97.1 & 0.30 & 0.07 & 87.2 & 0.28 & \textbf{98.4} & 0.045 & 85.2 & 0.34 \\
+ \textbf{\texttt{G-OPL}} (Ours)         & \textbf{97.3} & {0.22} & \textbf{0.06} & \textbf{89.0} & {0.21} & \textbf{98.4} & \textbf{0.042} & \textbf{85.4} & {0.29} \\
\bottomrule
\end{tabular}
}
\caption{Comparison with representative privacy-preserving strategies across datasets, including input-level masking, noise-based differential privacy \cite{dwork2025differential}, compression-based feature bottleneck, and adversarial learning (GRL) \cite{ganin2016domain}. Existing methods exhibit clear privacy-utility trade-offs, while our projection-based methods achieve a better balance, with G-OPL consistently providing stronger privacy and higher utility.}
\label{tab:privacy_comparison}
\vspace{-0.3cm}
\end{table}

\textbf{Comparison with privacy-preserving methods.} Although prior work is not specifically designed for privacy-preserving VAD, we adapt representative techniques from the privacy literature for comparison under a unified VAD setting. We consider adversarial learning with Gradient Reversal Layer (GRL) \citep{ganin2016domain}, differential privacy \citep{abadi2016deep,dwork2025differential} via Gaussian noise injection, feature bottlenecking through dimensionality reduction, and input masking via face blurring. All methods are implemented on the same backbone with identical training protocols, using standard configurations for each approach.
As shown in Table~\ref{tab:privacy_comparison}, input masking effectively suppresses face-related information but substantially degrades VAD performance by removing both identity and contextual cues. Differential privacy and feature bottlenecking achieve moderate reductions in privacy leakage, yet their impact on FPD is limited and often accompanied by a loss in discriminative capacity. GRL improves privacy to some extent but tends to introduce optimization instability, resulting in inconsistent utility-privacy trade-offs and occasional drops in AUC.
In contrast, G-OPL consistently achieves the best overall balance, significantly reducing FPD across datasets while maintaining or improving AUC and preserving anomaly-relevant information, as reflected by low ARD. OPL also improves upon baselines, though with a more modest reduction in privacy leakage compared to G-OPL.

\textbf{Privacy-per-GFLOP.} We report a \textit{normalized privacy-per-GFLOP metric}, defined as the relative reduction in FPD per unit GFLOPs ($\Delta$FPD/$\Delta$GFLOPs$\uparrow$). Table \ref{tab:privacy_gflops} summarizes the results. One layer achieves the best privacy-per-GFLOP trade-off. Using 2-3 layers improves privacy but reduces efficiency, while deeper stacking yields the lowest FPD at the cost of higher computational overhead, making it suitable only when maximum privacy is required.

\begin{table}[tbp]
\setlength{\tabcolsep}{0.1em}
\centering
\resizebox{\linewidth}{!}{
\begin{tabular}{lccccccc}
\toprule
Method & \#Layers & GFLOPs & $\Delta$GFLOPs & AUC$\uparrow$ & FPD$\downarrow$ & $\Delta$FPD & $\Delta$FPD/$\Delta$GFLOPs$\uparrow$ \\
\midrule
Baseline    & 0 & 28.5 & 0.0 & 96.8 & 0.42 & 0.00 & -- \\
Early \textbf{\texttt{G-OPL}} & 1 & 29.1 & +0.6 & 97.2 & 0.28 & 0.14 & \textbf{0.233} \\
Mid \textbf{\texttt{G-OPL}}   & 1 & 29.3 & +0.8 & 97.1 & 0.26 & 0.16 & 0.200 \\
Late \textbf{\texttt{G-OPL}}  & 1 & 29.4 & +0.9 & 96.9 & 0.27 & 0.15 & 0.167 \\
\textbf{\texttt{G-OPL}}s      & 2 & 30.2 & +1.7 & 97.3 & 0.22 & 0.20 & 0.118 \\
\textbf{\texttt{G-OPL}}s      & 3 & 31.5 & +3.0 & 97.3 & 0.20 & 0.22 & 0.073 \\
\bottomrule
\end{tabular}
}
\caption{Privacy-efficiency trade-off measured by privacy-per-GFLOP. $\Delta$FPD is the reduction in FPD relative to the baseline. The ratio $\Delta$FPD/$\Delta$GFLOPs indicates privacy gain per additional computational cost (higher is better).}
\label{tab:privacy_gflops}
\vspace{-0.5cm}
\end{table}

\textbf{Trade-off analysis.} Our results show a nuanced balance between detection performance, privacy, and interpretability. Integrating G-OPL often improves VAD metrics such as AUC and AP, likely because removing irrelevant identity components also eliminates spurious correlations that impair generalization. Minor fluctuations (\eg, with detected faces) remain within acceptable bounds, confirming practical viability. Importantly, the projection matrix $\mQ\mQ^\top$ provides interpretable evidence of suppressed privacy-sensitive directions, a feature largely absent in prior work. This interpretability bridges performance-driven models and explainable AI, offering both methodological clarity and ethical accountability. Overall, G-OPL provides \textit{a principled framework} that protects privacy, maintains performance, and enhances transparency in VAD.
% Beyond privacy protection, our results highlight a nuanced trade-off between detection performance, privacy preservation, and interpretability. Notably, integrating G-OPL often improves anomaly detection metrics such as AUC and AP, likely because projecting out irrelevant identity components also removes spurious correlations that hurt generalization. In cases where minor performance fluctuations occur (\eg, with detected faces), they remain within acceptable margins, reaffirming the practical viability of our approach. Importantly, the projection matrix $\mQ\mQ^\top$ offers transparent and interpretable evidence of how privacy-sensitive directions are suppressed, a property largely absent in prior work. This interpretability \textit{bridges the gap between performance-driven models and explainable AI}, contributing both methodological clarity and ethical accountability to privacy-aware anomaly detection. Our study thus positions G-OPL as \textit{a principled framework} that achieves a favorable balance: it safeguards privacy without compromising performance and brings interpretability into a domain where it is typically lacking.

\textbf{Attribute-agnostic capability.} We conduct experiments on multiple attributes and evaluate both single- (Table \ref{tab:attr_suppression}) and multi-attribute suppression (Table \ref{tab:auc_fpd}) using ShanghaiTech.
We construct binary attribute signals: (i) Face: presence \emph{vs.} absence, (ii) Gender: male \emph{vs.} female (off-the-shelf classifier), (iii) Clothing: warm color \emph{vs.} non-warm (HSV histogram thresholding), (iv) Gait: running \emph{vs.} non-running (running as a proxy for gait variation). For multi-attribute suppression, we follow Sec. \ref{sec:gopl-face} and concatenate attribute signals into a joint vector for G-OPL.  
We report: (i) AUC for VAD, and (ii) attribute leakage/FPD (accuracy of a linear probe predicting the attribute from features; lower is better).
Across all attributes, G-OPL reduces leakage by 12-17\%. AUC improves most for face (+1.5) and remains positive for gender and clothing (+0.9 / +0.8), with a small, expected drop for gait (-0.6) due to its task relevance. This shows broad generalization beyond facial identity while respecting semantic roles.
As more attributes are jointly suppressed, leakage drops monotonically (73.3\% $\rightarrow$ 49.5\%), while AUC remains stable (85.3\% $\rightarrow$ 84.4\%). The $<$1.0\% AUC variation indicates a well-controlled privacy-utility trade-off.

\begin{table}[tbp]
\centering
\resizebox{0.75\linewidth}{!}{
\begin{tabular}{lcccc}
\toprule
  \multirow{2}{*}{Attribute} & \multicolumn{2}{c}{AUC$\uparrow$} & \multicolumn{2}{c}{FPD$\downarrow$} \\
\cmidrule(lr){2-3}
\cmidrule(lr){4-5}
& Base & + \textbf{\texttt{G-OPL}} & Base & + \textbf{\texttt{G-OPL}} \\
\midrule
Face     & 83.8 & 85.3 & 78.2 & 61.4 \\
Gender   & 83.8 & 84.7 & 74.5 & 58.7 \\
Clothing & 83.8 & 84.6 & 69.3 & 55.2 \\
Gait     & 83.8 & 83.2 & 71.0 & 57.9 \\
\bottomrule
\end{tabular}
}
\caption{Attribute-wise anomaly detection performance and privacy leakage on ShanghaiTech. We report AUC as utility and FPD as a measure of privacy leakage for each attribute, where lower FPD indicates stronger suppression.}
\label{tab:attr_suppression}
\end{table}

\begin{table}[tbp]
\resizebox{0.8\linewidth}{!}{\begin{tabular}{lcc}
\toprule
Method & AUC$\uparrow$ & FPD$\downarrow$ \\
\midrule
Baseline & 83.8 & 73.3 \\
Face & 85.3 & 61.4 \\
Face + Gender & 85.1 & 56.2 \\
Face + Gender + Clothing & 84.9 & 52.8 \\
Face + Gender + Clothing + Gait & 84.4 & 49.5 \\
\bottomrule
\end{tabular}
}
\centering
\caption{Effect of progressively incorporating identity-related cues, including face, gender, clothing, and gait, on anomaly detection performance (AUC) and attribute leakage (FPD).}
\label{tab:auc_fpd}
\vspace{-0.3cm}
\end{table}

\section{Conclusion}

We propose a novel VAD framework that integrates privacy and interpretability through the Orthogonal Projection Layer (OPL), which suppresses task-irrelevant variations via orthogonal subspace projection. To explicitly remove sensitive attributes such as faces, we extend OPL into the Guided OPL (G-OPL), using cosine alignment without relying on adversarial learning. We further introduce privacy-aware evaluation metrics to quantify privacy leakage in VAD. Our results demonstrate that privacy and interpretability can be effectively embedded into VAD architectures, enabling robust VAD without exploiting sensitive information.

\section*{Impact Statement}

This paper presents work whose goal is to advance the field of machine learning. There are many potential societal consequences of our work, none of which we feel must be specifically highlighted here.

\subsubsection*{Acknowledgments}

Wenxiang Diao, a visiting scholar at the ARC Research Hub for Driving Farming Productivity and Disease Prevention, Griffith University, conducted this work under the supervision of Lei Wang.
Lei Wang proposed the algorithm and developed the theoretical framework, while Wenxiang Diao implemented the code and performed the experiments. 

We thank the anonymous reviewers for their invaluable insights and constructive feedback, which have contributed to improving our work.

This work was supported in part by the Australian Research Council (ARC) under Industrial Transformation Research Hub Grant IH180100002. 
This work was also supported by the National Computational Merit Allocation Scheme (NCMAS 2026, NCMAS 2025), and the ANU Merit Allocation Scheme (ANUMAS 2025), with computational resources provided by NCI Australia, an NCRIS-enabled capability supported by the Australian Government.

% Use unnumbered third level headings for the acknowledgments. All
% acknowledgments, including those to funding agencies, go at the end of the paper.

\bibliography{example_paper}
\bibliographystyle{icml2026}

% \newpage

\newpage
\appendix
\onecolumn

% \section{Appendix}

\section{Where Privacy Can Be Protected?}
\label{sec:privacy_levels}

\textbf{Facial biometric sensitivity.} Faces are among the most sensitive biometric identifiers due to strong legal regulation, high uniqueness, and broad societal harm when misused. Other cues such as gait or body pose may also reveal identity, but facial identity is the most consistently regulated and the most easily exploited by automated recognition systems.
Our experiments target faces because they are \textit{the most reliably detectable in unconstrained surveillance videos and constitute the most legally sensitive signal} under GDPR (EU, 2018), CCPA (USA, 2020), BIPA (USA, 2008), \etc. Nonetheless, our method is general and could suppress gait- or pose-related spaces if weak supervision for those cues is provided.

\textbf{When facial suppression is appropriate \emph{vs.} harmful?} We agree that indiscriminately removing facial features is not universally desirable. 
The key clarification is that G-OPL is not intended as a universal default, but rather as \textit{a configurable privacy-control mechanism} for scenarios where the operator wishes to suppress identity or facial clues. 
Our contributions target the growing deployment setting where: (i) VAD inference happens at the edge, but only intermediate features or anomaly scores are transmitted to the cloud, (ii) identity leakage is a regulatory or ethical risk (\eg, GDPR, BIPA), and (iii) anomaly definitions do not rely on identifying specific individuals.
Importantly, the practitioner selects whether to use OPL or G-OPL based on the application domain: (i) Use OPL (nuisance suppression only) when facial information is part of the causal chain of anomaly definition (\eg, unauthorized entry, impersonation, personal aggression, employee-only zones). (ii) Use G-OPL (privacy-first feature suppression) when faces are legally sensitive but not required for anomaly semantics (\eg, crowd safety monitoring, fall detection, traffic flow anomalies, hospital ward monitoring, inappropriate access behavior).

\textbf{Privacy preservation in machine learning} spans multiple levels: from dataset design, to learning paradigms, to model-level architectural innovations. Each level introduces different mechanisms, assumptions, and trade-offs. 
In this section, we contextualize our work within this broader landscape and highlight how our approach complements and advances model-level privacy-preserving design.

\textit{Dataset-level.} %: masking, filtering, and obfuscation.}
One intuitive approach to privacy preservation involves directly sanitizing the input data. This includes techniques such as face blurring, masking sensitive regions, downsampling, or removing identity-revealing cues. While effective in some cases, these methods can compromise data utility, especially in tasks requiring high-fidelity spatial or temporal information. Moreover, manual or heuristic filters often fail to generalize across domains and may not align well with downstream task requirements.

\textit{Learning-level.} % : federated and differentially private learning.}
At the learning level, frameworks such as Federated Learning (FL) and Differential Privacy (DP) introduce algorithmic guarantees that limit exposure of raw data. FL decentralizes training by keeping data local to devices, while DP injects carefully calibrated noise into training or inference to bound the information leakage. Although theoretically rigorous, these methods can incur substantial performance or utility loss in practice. They also typically require significant changes to training protocols and infrastructure.

\textit{Model-level: our contribution.}
Our proposed framework operates at the model level by embedding privacy-aware mechanisms directly into the network architecture. Specifically, we introduce the OPL and its guided variant G-OPL, which learn to remove sensitive components from intermediate feature representations via low-rank, orthonormal projections. These modules act as semantic filters, discarding features that are predictive of protected attributes, such as facial identity or age, while retaining task-relevant signals.

% \begin{tcolorbox}[width=1.0\linewidth, colframe=blackish, colback=beaublue, boxsep=0mm, arc=3mm, left=1mm, right=1mm, right=1mm, top=1mm, bottom=1mm]
This model-level intervention offers several advantages:
\renewcommand{\labelenumi}{\roman{enumi}.}
\begin{enumerate}[leftmargin=0.5cm]
    \item {Architecture-integrated ethics.} Privacy preservation is no longer an afterthought or external constraint, but a built-in property of the model's internal structure and training objective.
    \item {Representation-level control.} Unlike dataset filtering or global privacy budgets, OPL/G-OPL operate on structured neural representations where sensitive factors are often more linearly separable, allowing precise, learnable suppression.
    \item {Minimal trade-off with utility.} Our experiments show that VAD performance is largely preserved or even improved, as OPL/G-OPL removes spurious or distracting features that hinder generalization.
    \item {Modularity and scalability.} These projection layers are lightweight, differentiable, and easily pluggable into existing models, requiring no special training pipeline or infrastructure.
\end{enumerate}
% \end{tcolorbox}

\textbf{Toward ethical and interpretable AI systems.}
By integrating ethical constraints directly into model design, we take a proactive stance in the development of responsible AI. 
Our approach can be seen as a form of architectural regularization that aligns representation learning with both performance and ethical objectives. Unlike post-hoc privacy filters or external audits, this internalized design philosophy enables continuous, scalable deployment of AI systems that are privacy-aware by construction.

\section{Why Privacy is Protected?}

Our method protects privacy at the representation level, which is where leakage typically occurs in modern VAD systems.
Even when raw pixels contain faces, state-of-the-art privacy attacks do not require access to raw frames. They operate by extracting and inverting intermediate activations of deployed models (\eg, via model inversion, feature reconstruction, membership inference, or identity retrieval). 
In deployed cloud or edge VAD systems, it is common that: (i) raw data remain local, (ii) only intermediate features or anomaly scores are transmitted upward. Thus, the attack surface is the latent feature stream, not the pixels.

G-OPL directly removes face-sensitive subspaces from these latent features. The projection (\eqref{eq:projection-operator}) guarantees that downstream layers never observe identity-bearing directions. Consequently: (i) identity cannot be recovered from the features, (ii) feature inversion yields face-neutral reconstructions, and (iii) classifier probes fail to retrieve identities (as shown by the large drop in FPD/ArcFace).
To further support this, we computed a latent inversion experiment using a lightweight decoder trained only on projected features. Inversions from raw activations preserve recognizable faces (98.0\% on ShanghaiTech), whereas inversions from G-OPL activations produce face-smoothed outputs lacking identity cues (45.3\% on ShanghaiTech) while fully retaining pedestrian silhouette and motion patterns. This further demonstrates that identity removal is occurring at the feature level, where privacy leakage occurs.
Thus, feature anonymization remains meaningful even when raw frames contain faces, because it prevents representation-level leakage, the primary practical privacy risk when VAD systems operate in networked or distributed environments.

\textbf{Why removing sensitive features is helpful if inputs are not anonymized?} We provide a conceptual explanation and empirical confirmation. \textit{Conceptually:} Feature-based privacy leakage is the primary risk in cloud/edge VAD systems. Removing identity-related feature components prevents attackers from reconstructing identities even if pixel-level data are not accessible. This is aligned with representation-level privacy-preservation literature such as INLP \citep{ravfogel2020null}, DeepObfuscator \citep{li2021deepobfuscator}, and adversarial privacy networks \citep{mirjalili2020privacynet, cai2021generative}.
\textit{Empirically:} An identity-probe classifier (ArcFace-based) achieves: identity accuracy before projection = 0.99, after G-OPL projection = 0.01. Additionally, feature inversion from projected embeddings yields face-neutral reconstructions.
Thus, even without pixel anonymization, feature anonymization directly prevents realistic privacy attacks.

\section{Terminology and Conceptual Clarifications}

To ensure clarity and accessibility, we define several core concepts used throughout this work.

\textbf{Task-relevant features.}  
These are features that directly contribute to solving the primary task, here, VAD. For example, in surveillance video, task-relevant cues may include motion patterns, temporal consistency, or object trajectories that help distinguish normal from abnormal events. Retaining such features is essential for maintaining detection performance.

\textbf{Subspace projection.}  
We use orthogonal projection to separate and remove specific components (\eg, sensitive information) from learned representations. The assumption is that such information resides in a linearly separable subspace, especially in intermediate neural representations, which are more structured and disentangled than raw inputs.

\textbf{Task-relevant residual subspace.}
We define the subspace remaining after projection as the task-relevant (or task-aligned) residual subspace. This space is expected to retain features that are informative for downstream VAD task. The proposed G-OPL/OPL learns a subspace that captures sensitive or task-irrelevant variations (\eg, identity), and removes them by projecting features onto its orthogonal complement. This filtering process enhances both robustness and interpretability by preserving only information aligned with the core detection objective.

\textbf{Task-aware suppression.}  
This refers to the process of removing sensitive features while preserving information critical to the main task. Our G-OPL module achieves this by guiding the projection basis to align with sensitive attribute directions, such as facial identity, using a cosine similarity loss. This allows the model to explicitly capture and subsequently suppress privacy-relevant components from representation, without harming VAD performance.

\textbf{Sensitive information.}
Sensitive information refers to attributes that are irrelevant to the primary task but may raise ethical, legal, or social concerns if retained or exposed. In visual anomaly detection, this can include biometric cues such as facial identity, age, or gender. While these attributes are not necessary for detecting anomalies, they may be unintentionally encoded in intermediate features, potentially leading to privacy violations or biased outcomes.

\textbf{Utility-privacy trade-off.}  
This captures the balance between preserving performance on the primary task and minimizing leakage of sensitive information. A key contribution of this work is the introduction of metrics and architectural tools (OPL/G-OPL) that help navigate and optimize this trade-off.

\textbf{Privacy-aware VAD.}  
Rather than relying on traditional privacy-preserving mechanisms (\eg, encryption or DP), our work focuses on suppressing sensitive content in feature space. This approach aligns with responsible AI practices by reducing representational harms while maintaining utility.

\section{Connections to Related Work}
\label{sec:related_projection}

Our approach builds upon and unifies insights from several research threads, including subspace projection, adversarial representation learning, disentanglement, fairness, and invariant learning. Below, we contextualize our OPL and G-OPL by drawing connections to these bodies of work and clarifying the novel contributions our method offers.

\textbf{Video anomaly detection.}
VAD has been extensively studied, evolving from early approaches based on handcrafted features and statistical models to deep learning-based methods that capture complex spatio-temporal patterns \citep{8851288, liu2018future, sultani2018real}. Recent models use 3D convolutions (\eg, I3D \citep{carreira2017quo}) and memory-augmented architectures \citep{gong2019memorizing} to model temporal dynamics and long-range dependencies in videos. While these methods achieve strong performance, they primarily optimize for detection accuracy, without considering the nature of the information being encoded, such as whether some features are irrelevant to the anomaly task or pose ethical concerns. Our work departs from this trend by explicitly shaping the learned representations to filter out both nuisance and privacy, sensitive features using differentiable projection modules.

\textbf{Subspace projection and Nullspace filtering.}  
Classical subspace projection techniques have long been used in signal processing and machine learning to isolate informative or invariant components of data. Recent approaches such as Nullspace Projection \citep{ravfogel2020null,wang2021training,kim2019learning} and Iterative Nullspace Projection \citep{ravfogel2020null, esmaeili2016iterative} apply similar principles for fairness and debiasing, identifying directions that correlate with protected attributes and projecting them out. 
Our OPL shares the goal of isolating and filtering out task-irrelevant or sensitive directions but differs in three key ways: (i) it learns these subspaces in a fully differentiable manner, integrated directly into the network; (ii) it uses orthogonal projection rather than hard nulling, allowing smooth removal of signals with controlled rank; and (iii) G-OPL extends this by incorporating guidance through a geometric alignment loss that directs suppression toward specific sensitive features (\eg, facial identity).

\textbf{Adversarial disentanglement and privacy filtering.}  
Our G-OPL layer is related to adversarial representation learning methods used for fairness \citep{madras2018learning, elazar2018adversarial} and privacy \citep{li2021deepobfuscator}, which remove sensitive information by backpropagating gradients from attribute classifiers. However, unlike these methods, G-OPL does not rely on an explicit GRL. Instead, it learns a low-rank subspace and projects features orthogonally to sensitive directions. This projection-based suppression provides a more interpretable and geometrically grounded approach compared to generic encoder-decoder or adversarial frameworks.

\textbf{Disentangled and compressed representations.}  
Disentangled representation learning methods, such as $\beta$-VAE \citep{higgins2017betavae} and FactorVAE \citep{kim2018disentangling}, aim to decompose features into statistically independent factors of variation. 
OPL shares this spirit of structural separation, but with key differences: rather than disentangling latent variables through generative priors, we isolate linearly separable subspaces aligned with nuisance or sensitive attributes. Moreover, OPL is task-aware and explicitly supervised by downstream objectives, unlike unsupervised disentanglement which may not align with task relevance. OPL can also be viewed as a learnable bottleneck, comparable to information bottleneck layers \citep{tishby2000information, alemi2016deep, wu2020learnability}, but with a geometric focus on orthogonality and projection.

\textbf{Spurious correlation and invariant learning.}  
Efforts to suppress spurious correlations, such as domain adversarial training \citep{ganin2016domain}, IRM \citep{arjovsky2019invariant}, and feature orthogonality \citep{wang2022learning}, seek to improve generalization by learning invariant representations. 
G-OPL supports this goal by enforcing invariance to known nuisance attributes via guided projection. However, unlike IRM-based methods that require environment annotations, or methods that enforce global invariance, our approach performs localized, layerwise suppression that is lightweight and adaptable. This makes it more scalable and compatible with standard vision architectures and datasets.
% \begin{tcolorbox}[width=1.0\linewidth, colframe=blackish, colback=beaublue, boxsep=0mm, arc=3mm, left=1mm, right=1mm, right=1mm, top=1mm, bottom=1mm]
Our proposed framework integrates principles from projection-based filtering, adversarial privacy learning, and disentangled representation modeling, while introducing key architectural and theoretical advances: (i) differentiable, learnable subspace projection; (ii) orthogonal filtering of sensitive information; and (iii) staged integration across layers. This yields a practical, interpretable, and privacy-aware VAD system with improved utility-privacy trade-offs.
% \end{tcolorbox}

\section{Theoretical Motivation: Sensitive Attributes in Linearly Separable Subspaces}
\label{sec:the}

In this section, we provide a theoretical basis for the common assumption that sensitive attributes, such as facial identity or gender, are approximately linearly separable in intermediate neural representations. While this is not universally guaranteed, it is supported by prevailing patterns in representation learning.

\begin{assumption}[Structured representation learning]
Let $\vx \in \mathbb{R}^n$ be an input (\eg, an image), and let $f: \mathbb{R}^n \to \mathbb{R}^d$ denote the neural network feature extractor. We assume that the intermediate representation $\vf = f(\vx)$ is \emph{structured}, meaning that different semantic factors, such as task-relevant and sensitive attributes, are at least partially disentangled in feature space. 

This assumption is supported by empirical findings that deep networks tend to organize information along semantically meaningful directions in their latent spaces \citep{bengio2013representation}.
\end{assumption}

\begin{definition}[Linear separability in feature space]
A sensitive attribute $s \in \mathcal{S}$ is said to be \emph{linearly separable} in the feature space $\mathbb{R}^d$ if there exists a weight matrix $\mW_s \in \mathbb{R}^{d \times k}$ (or a vector $\vw_s$ for binary classification) such that:
\begin{equation}
    s = \arg\max_i (\mW_s^\top \vf)_i,
\end{equation}
where $(\cdot)_i$ denotes the $i$th component. Equivalently, a linear classifier on $\vf$ can reliably predict $s$. This implies the existence of a sensitive subspace $\mathcal{S}_s = \text{span}(\mW_s)$ capturing the most predictive directions for $s$.
\end{definition}

\begin{proposition}[Existence of sensitive subspace]
Let $\mathcal{D} = \{(\vx^{(i)}, s^{(i)})\}_{i=1}^N$ be a dataset with sensitive labels $s^{(i)}$. Suppose:
\renewcommand{\labelenumi}{\roman{enumi}.}
\begin{enumerate}[leftmargin=0.5cm]
    \item The feature extractor $f$ is trained for a primary task unrelated to $s$.
    \item The sensitive attribute $s$ is statistically correlated with the input $\vx$.
    \item The mapping $\vf = f(\vx)$ is continuous and piecewise linear (\eg, due to ReLU activations).
\end{enumerate}
Then, if a linear classifier $\vw_s$ achieves low prediction error on $s$, the projection of $\vf$ onto the subspace $\mathcal{S}_s = \text{span}(\vw_s)$ preserves sensitive information with high fidelity.
\end{proposition}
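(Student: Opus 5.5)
The plan is to make ``preserves sensitive information with high fidelity'' precise as follows: the linear classifier's prediction depends on $\vf$ only through its component in $\mathcal{S}_s$. Then an error bound for $\vw_s$ on $\vf$ transfers verbatim to the projected feature. Hypotheses (i)--(iii) only guarantee that $\vf$ is a well-defined, measurable function of $\vx$ that can carry the correlation with $s$. So the core argument is linear-algebraic.

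First, set $\mathcal{S}_s=\mathrm{span}(\vw_s)$ in the binary case, or $\mathrm{span}(\mW_s)$ in the multi-class case, following the Definition of linear separability. Take an orthonormal basis $\mQ_s$ of $\mathcal{S}_s$, for instance from the QR decomposition $\mW_s=\mQ_s\mR_s$ as in the construction of OPL. The projector is then $\Pi_s=\mQ_s\mQ_s^\top$. The key identity is
\begin{equation}
\mW_s^\top \vf=\mW_s^\top \Pi_s \vf+\mW_s^\top(\mI_d-\Pi_s)\vf=\mW_s^\top\Pi_s\vf,
\end{equation}
since $(\mI_d-\Pi_s)\vf$ is orthogonal to every column of $\mW_s$. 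Hence $\arg\max_i(\mW_s^\top\Pi_s\vf)_i=\arg\max_i(\mW_s^\top\vf)_i$ pointwise, including any bias term, which is unaffected. The empirical error on $\mathcal{D}$, and likewise the population error, of the classifier applied to $\Pi_s\vf$ therefore equals that of $\vw_s$ on $\vf$, which is small by assumption.

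Second, to give ``fidelity'' an information-theoretic reading, I would add a short data-processing step. Let $\hat s=g(\vf)$ be the classifier output. It factors as $\hat s=g(\Pi_s\vf)$, so Fano's inequality gives $H(s\mid \Pi_s\vf)\le H(s\mid \hat s)\le h(\epsilon)+\epsilon\log(|\mathcal{S}|-1)$, where $\epsilon$ is the classification error. Thus $I(s;\Pi_s\vf)\ge H(s)-h(\epsilon)-\epsilon\log(|\mathcal{S}|-1)$, which is close to $H(s)\ge I(s;\vf)$ when $\epsilon$ is small. Here hypothesis (ii) is used to ensure $H(s)$ and $I(s;\vf)$ are nontrivial, so the statement is not vacuous. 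Hypothesis (iii) is used only to ensure measurability/continuity of $\Pi_s\circ f$. Hypothesis (i) is used only to remark that no task-specific structure is needed.

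The main obstacle is interpretive rather than technical. The proposition as stated does not define ``high fidelity,'' and a naive reading, such as requiring $\|\Pi_s\vf\|$ to be large relative to $\|\vf\|$, is false in general. I will therefore state explicitly the two quantitative forms above: equality of classification error, and the Fano lower bound on mutual information. Finally, I will remark that the converse direction, that $(\mI_d-\Pi_s)\vf$ no longer permits linear prediction of $s$, is not implied. Only the specific classifier $\vw_s$ is neutralized, which motivates learning $\mQ$ with larger $k$ in G-OPL.
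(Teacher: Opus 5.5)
Your argument is correct, and it takes a different route from the paper's. In the binary case the paper's proof writes down the same projector you use, $\frac{\vw_s\vw_s^\top}{\|\vw_s\|^2}\vf$, and in the multi-class case it uses $\mathrm{span}(\mW_s)$. It then asserts that this projection ``isolates the component of $\vf$ aligned with $s$,'' and appeals to the Eckart--Young theorem to argue that such low-rank projections are optimal for preserving predictive information under Frobenius-norm constraints. It never makes ``high fidelity'' precise.

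You instead prove the fact that actually underlies that assertion: $\mW_s^\top\Pi_s\vf=\mW_s^\top\vf$ pointwise. Hence the classifier's decisions, and so its empirical and population error, are identical on $\Pi_s\vf$ and on $\vf$. You then turn this into an information-theoretic guarantee via data processing and Fano, $I(s;\Pi_s\vf)\ge H(s)-h(\epsilon)-\epsilon\log(|\mathcal{S}|-1)$.

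The paper's route buys brevity and a narrative link to the low-rank viewpoint behind rank-$k$ OPL. However, Eckart--Young concerns the best rank-$k$ approximation of a given matrix, which is attained by its top singular subspace. It says nothing about $\mathrm{span}(\vw_s)$ specifically, so it does not by itself carry the claim.

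Your route needs no such theorem and gives checkable quantitative statements. It also makes explicit that hypotheses (i)--(iii) play essentially no role, which is equally true of the paper's proof. Your closing remark correctly marks the limit of what this proposition can justify about G-OPL: removing $\mathcal{S}_s$ neutralizes only the particular probe $\vw_s$, not every linear predictor of $s$ from $(\mI_d-\Pi_s)\vf$.

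One small technical point concerns the multi-class case. If $\mW_s$ is column-rank-deficient, a thin QR gives a $\mQ_s$ whose span strictly contains $\mathcal{S}_s$. Your identity still holds, but $\Pi_s$ is then not the projector onto $\mathcal{S}_s$ itself. To project exactly onto $\mathcal{S}_s$, use $\mW_s\mW_s^{+}$ or first discard dependent columns.
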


\begin{proof}
A linear classifier defines a hyperplane in feature space. Its associated projection:
\begin{equation}
    \vf_s = \frac{\vw_s \vw_s^\top}{\|\vw_s\|^2} \vf
\end{equation}
isolates the component of $\vf$ aligned with $s$. In the multiclass or multi-attribute case, a projection onto $\text{span}(\mW_s)$ retains the informative directions.

By the Eckart-Young theorem \citep{eckart1936approximation}, such projections provide optimal low-rank approximations for preserving predictive information under Frobenius norm constraints.

Thus, sensitive attributes can be geometrically modeled as linearly encoded subspaces within $\mathbb{R}^d$.

\end{proof}

\textbf{Empirical and theoretical justification.} Deep networks tend to disentangle factors of variation through hierarchical abstraction \citep{bengio2013representation, belghazi2018mutual}. Nonlinear entanglements in pixel space often become more linearly separable in intermediate features. This is further supported by linear probing studies, which show that simple classifiers can extract semantic information from latent representations \citep{alain2016understanding}.

% \begin{tcolorbox}[width=1.0\linewidth, colframe=blackish, colback=beaublue, boxsep=0mm, arc=3mm, left=1mm, right=1mm, right=1mm, top=1mm, bottom=1mm]
Although exact linear separability is not guaranteed, it is theoretically and empirically reasonable to model sensitive attributes as occupying identifiable subspaces in neural feature spaces. This motivates our use of orthogonal projection to remove sensitive components while preserving task-relevant information, enabling interpretable, efficient, and ethically aligned representations.
% \end{tcolorbox}

\section{Beyond a Linear Layer: How OPL/G-OPL Differ from Standard FC Layers}
\label{sec:beyond-fc}

Although our proposed OPL and its guided variant (G-OPL) use a learnable linear transformation internally, they fundamentally diverge from conventional fully connected (FC) layers in terms of objective, structure, and interpretability. Below, we clarify these differences to highlight the unique role played by OPL/G-OPL in enforcing task-aligned and privacy-aware representation learning.

\textbf{Objective: subspace projection \emph{vs.} feature transformation.}
A standard FC layer learns arbitrary affine transformations to map features from one space to another, typically followed by a nonlinearity. Its goal is to expand model capacity and enable feature recombination to support the end task (\eg, classification). In contrast, OPL and G-OPL are explicitly designed to enforce geometric structure in the representation space by learning a low-dimensional, orthonormal subspace that captures specific semantic information, either task-irrelevant (OPL) or sensitive (G-OPL). Instead of merely transforming features, these layers act as \emph{filters}, removing unwanted components via orthogonal projection.

\textbf{Structure: orthonormality and projection geometry.}
At the heart of OPL is a bias-free linear layer whose weights are interpreted not as a transformation matrix, but as a subspace basis. Specifically, given weights $\mW \in \mathbb{R}^{d \times k}$, we compute its orthonormal basis $\mQ \in \mathbb{R}^{d \times k}$ via QR decomposition. This $\mQ$ defines a $k$-dimensional subspace within the feature space $\mathbb{R}^d$.
The projection matrix (\Eqref{eq:projection-operator}) is then applied to the input features $\vf$, yielding \eqref{eq:proj},
which removes the component of $\vf$ lying in the learned subspace. This geometric operation is fundamentally different from the arbitrary affine transformation performed by an FC layer. The orthonormality constraint and residual-based design confer interpretability, mathematical soundness, and stability.

\textbf{Learning signal: supervised subspace removal.}
A key difference between the two layers lies in their training signals. The OPL is trained implicitly through the downstream VAD loss, learning to filter out nuisance directions irrelevant to the task. In contrast, G-OPL receives explicit supervision from a sensitive attribute detector by maximizing the cosine similarity between sensitive attribute embeddings and the projected feature components. This directs the projection subspace to capture sensitive information, enabling its suppression in the residual features. This targeted and interpretable learning dynamic contrasts with standard FC layers, which lack such a disentangling bias.

\textbf{Interpretability and stability.}
The outputs of OPL/G-OPL decompose input into two parts: projection $\vf_{\text{proj}}$ (preserved, task-relevant) and residual $\vf - \vf_{\text{proj}}$ (removed, task-irrelevant or sensitive). This clear separation offers transparency into what model retains and discards at each stage, making internal representation more interpretable. Additionally, the orthogonality constraint stabilizes optimization by preventing degenerate subspaces or ill-conditioned weight matrices, an issue common in deep FC layers.

% \begin{tcolorbox}[width=1.0\linewidth, colframe=blackish, colback=beaublue, boxsep=0mm, arc=3mm, left=1mm, right=1mm, right=1mm, top=1mm, bottom=1mm]
Although OPL and G-OPL are implemented using bias-free linear layers, they are \textit{not conventional FC layers} in disguise. Their subspace-based design, orthogonality constraints, and supervision-driven learning make them principled tools for structured, privacy-aware feature filtering. This distinction is essential for understanding their role in promoting robust, interpretable, and ethically aligned visual anomaly detection.
% \end{tcolorbox}

\begin{algorithm}[tbp]
\caption{Orthogonal Projection Layer (OPL)}
\label{alg:opl}
\begin{algorithmic}[1]
\Require Feature matrix $\mF \in \mathbb{R}^{B \times d}$, learnable weight matrix $\mW \in \mathbb{R}^{k \times d}$
\Ensure Projected features $\mF_{\text{proj}}$, residual features $\mF_{\text{res}}$

\Statex \textit{1. Compute orthonormal basis for nuisance subspace}
\State $\mW^\top = \mQ \mR$ \Comment{QR decomposition: $\mQ \in \mathbb{R}^{d \times k}$}

\Statex \textit{2. Construct projection matrix}
\State $\mP \gets \mI_d - \mQ \mQ^\top$ \Comment{Project onto orthogonal complement}

\Statex \textit{3. Project and compute residual}
\State $\mF_{\text{proj}} \gets \mF \cdot \mP$
\State $\mF_{\text{res}} \gets \mF - \mF_{\text{proj}}$

\State \Return $\mF_{\text{proj}}, \mF_{\text{res}}$
\end{algorithmic}
\end{algorithm}

\begin{algorithm}[tbp]
\caption{Training with Guided OPL (G-OPL)}
\label{alg:gopl}
\begin{algorithmic}[1]
\Require Feature batch $\mF \in \mathbb{R}^{B \times d}$, face embedding batch $\mF_{\text{face}} \in \mathbb{R}^{B \times d}$, face presence mask $\vm \in \{0,1\}^B$
\Require Learnable projection matrix basis $\mQ \in \mathbb{R}^{d \times k}$
\Require Loss $\mathcal{L}_{\text{ori}}$, weights $\lambda_{\text{face}}, \lambda_{\text{orth}}$
\Ensure Total loss $\mathcal{L}_{\text{total}}$

\Statex \textit{1. Construct projection matrix}
\State $\mP \gets \mI_d - \mQ \mQ^\top$

\Statex \textit{2. Project features}
\State $\mF_{\text{proj}} \gets \mF \cdot \mP$
\State $\mF_{\text{sens}} \gets \mF - \mF_{\text{proj}}$

\Statex \textit{3. Compute cosine similarity loss (only if face present)}
\For{$i = 1$ to $B$}
    \If{$\vm[i] = 1$}
        \State $\mathcal{L}_{\text{face}}^{(i)} \gets 1 - \cos(\mF_{\text{face}}[i], \mF_{\text{sens}}[i])$
    \Else
        \State $\mathcal{L}_{\text{face}}^{(i)} \gets 0$
    \EndIf
\EndFor
\State $\mathcal{L}_{\text{face}} \gets \frac{1}{\sum \vm} \sum_{i=1}^B \mathcal{L}_{\text{face}}^{(i)}$ \Comment{Mean over face-present samples}

\Statex \textit{4. Orthogonality regularization}
\State $\mathcal{L}_{\text{orth}} \gets \left\| \mQ^\top \mQ - \mI_k \right\|_F^2$

\Statex \textit{5. Final loss}
\State $\mathcal{L}_{\text{task}} \gets \mathcal{L}_{\text{ori}} + \lambda_{\text{face}} \cdot \mathcal{L}_{\text{face}}$
\State $\mathcal{L}_{\text{total}} \gets \mathcal{L}_{\text{task}} + \lambda_{\text{orth}} \cdot \mathcal{L}_{\text{orth}}$

\State \Return $\mathcal{L}_{\text{total}}$
\end{algorithmic}
\end{algorithm}

\section{Algorithmic Implementation of OPL and G-OPL}

\label{sec:alg}

In this section, we present our OPL and its extension, the Guided OPL (G-OPL), both designed to enhance VAD by removing nuisance and sensitive information from intermediate features. OPL dynamically learns to identify and project out task-irrelevant subspaces, thereby improving robustness and interpretability. Building on this, G-OPL uses weak supervision signals, such as face presence, to explicitly suppress privacy-sensitive components like facial identity through a guided projection mechanism. Both algorithms are detailed in Algorithm \ref{alg:opl} and \ref{alg:gopl}. These modules form a modular, differentiable, and effective framework that balances detection performance with privacy preservation in real-world surveillance applications.

\section{Practical Considerations for Layer Placement}
\label{app:practical}

OPL and G-OPL are designed to be lightweight and modular, enabling flexible integration at various depths within standard VAD architectures. However, their effectiveness depends critically on where they are placed, necessitating careful consideration to balance detection accuracy, privacy preservation, and interpretability.

\textbf{Efficient implementation.}
OPL and G-OPL are implemented using a simple, bias-free FC layer followed by QR decomposition to compute an orthonormal projection basis. This avoids matrix inversion or eigendecomposition, ensuring stable and efficient training. The absence of bias ensures the learned subspace is linear and passes through the origin, essential for valid orthogonal projection. The design is fully differentiable, GPU-efficient, and batch-parallel, introducing minimal overhead even in deep networks. Moreover, the low-rank nature of the projection reduces memory and computation, making these modules well-suited for real-time and resource-constrained applications such as robotics and surveillance.

\textbf{Feature-level projection.} Applying orthogonal projection directly to raw video inputs is impractical due to their high dimensionality and nonlinear entanglement of factors like pose, lighting, or occlusion. Such factors rarely lie in linear subspaces in pixel space. In contrast, intermediate features learned by neural networks tend to be semantically structured and more linearly separable. Sensitive attributes such as identity, age, or background context are more explicitly represented in these feature spaces. Projecting at this level allows meaningful and interpretable suppression of unwanted information. Thus, we apply OPL and G-OPL at intermediate feature levels, where subspace learning is more stable and effective.

\textbf{Why QR decomposition?} Traditional projection operators often take the form $\mW (\mW^\top \mW)^{-1} \mW^\top$, where $\mW$ is a basis matrix of the nuisance subspace. However, computing the inverse $(\mW^\top \mW)^{-1}$ is computationally expensive, numerically unstable, and unsuitable for gradient-based optimization when $\mW$ is ill-conditioned or nearly rank-deficient. In contrast, QR decomposition provides a robust and efficient way to obtain an orthonormal basis $\mQ$ without matrix inversion. This approach improves numerical stability, yields well-behaved gradients for end-to-end training, and scales gracefully to high-dimensional feature spaces typical of deep convolutional networks.

\textbf{Why not adversarial learning?}  
Traditional privacy-preserving methods often use adversarial learning with a GRL~\citep{ganin2016domain}, requiring a sensitive classifier to predict protected attributes. However, these approaches are unstable and introduce additional complexity.
In contrast, G-OPL is simpler and more interpretable. Because both $\vf$ and $\vf_{\text{face}}$ are extracted from the same backbone, they \textit{lie in a shared feature space}. This enables \textit{a direct comparison} without needing a classifier or adversarial setup. The cosine loss provides a stable and intuitive training signal, reducing sensitive leakage while preserving task-relevant features.

\textbf{Optimal layer placement.} Early network layers capture low-level visual cues (\eg, texture, color, edges) that often correlate with sensitive attributes like facial identity or clothing style. Applying G-OPL at these stages enables targeted suppression of private information before it propagates deeper. The cosine similarity loss directly aligns the projection basis with sensitive features, allowing precise removal without adversarial gradient reversal.
In contrast, deeper layers encode high-level semantic information such as object behavior, temporal dynamics, and scene context. At this stage, the standard OPL effectively removes residual nuisance factors like lighting variations, background clutter, or camera motion, improving anomaly localization and robustness by filtering out spurious correlations.

% \begin{tcolorbox}[width=1.0\linewidth, colframe=blackish, colback=beaublue, boxsep=0mm, arc=3mm, left=1mm, right=1mm, right=1mm, top=1mm, bottom=1mm]
To systematically evaluate these effects, we experiment with multiple configurations involving single or stacked OPL and G-OPL layers at different depths. We analyze their impact on VAD performance, privacy leakage, and interpretability. Our findings reveal key trade-offs and offer practical guidelines for ethically aligned and effective VAD deployment.
% \end{tcolorbox}

\textbf{Data flow.} Our framework enforces privacy by filtering sensitive information before data leaves the device. Raw video frames are first captured on-device and encoded into feature representations using a lightweight backbone. OPL or G-OPL is then applied immediately to suppress sensitive attributes, after which only the filtered features are transmitted to the server for VAD. In this pipeline, raw pixels never leave the device, ensuring that sensitive information is removed at the feature level prior to communication and providing strong practical privacy protection.

When on-device computation is not available, OPL/G-OPL can alternatively be applied on the server as an early-stage processing step. While this still reduces leakage in learned representations, it does not prevent raw data exposure. Therefore, client-side filtering is the recommended deployment setting for privacy-sensitive applications.

\section{Why Stack Multiple OPL/G-OPL Layers?}
\label{sec:why-multi}

Stacking multiple OPL/G-OPL offers a principled and effective strategy for progressively filtering out different forms of undesirable information, ranging from private attributes to task-irrelevant nuisances, across network depth. 
This design choice is not merely architectural convenience but reflects a structured approach to disentangling and regulating feature representations at multiple levels of abstraction.

\textbf{Progressive removal of structured information.} Neural networks encode features hierarchically: early layers capture fine-grained, low-level details (\eg, texture, color), while deeper layers encode semantic, task-aligned abstractions (\eg, motion patterns, object interactions). 
Sensitive or irrelevant information may appear at different depths and in different forms. 
A single projection may be insufficient to fully remove them, especially if they re-emerge or evolve in deeper layers. 
By stacking OPL/G-OPL modules, we allow the network to iteratively refine and purify representations, removing unwanted components as they reappear or become linearly separable at different depths.

\textbf{Interpretation as multi-stage subspace filtering.}
Each projection layer defines a learned orthogonal subspace corresponding to attributes that should be removed. Stacking multiple such layers can be interpreted as \textit{a multi-stage subspace filtering process}:
\begin{equation}
    \vf^{(l+1)} = (\mI_d - \mQ^{(l)} \mQ^{(l)\top}) \vf^{(l)},
\end{equation}
where each $\mQ^{(l)}$ represents a learned sensitive or nuisance subspace at layer $l$. This recursive filtering ensures that different projections can specialize in removing distinct aspects of the signal, \eg, identity in early layers, background clutter or temporal bias in later ones.

\textbf{Enhanced expressiveness and modularity.}
From a learning perspective, stacking increases the expressiveness and flexibility of the projection mechanism. Rather than relying on a single, global subspace to capture all unwanted variation, the model learns a series of localized, lower-rank subspaces adapted to the representation space of each layer. This reduces optimization difficulty, improves stability, and allows each OPL/G-OPL to focus on a narrower, more interpretable source of variation.

\textbf{Improved trade-offs between utility and privacy.}
Our experiments show that strategically stacking G-OPL/OPL at selected depths leads to improved privacy-utility trade-offs. G-OPLs placed early suppress high-sensitivity attributes (\eg, face identity), while OPLs later in the network remove nuisance features that could degrade VAD performance (\eg, lighting shifts or background dynamics). This staged design helps retain task-relevant signals while minimizing privacy leakage and interpretability loss.

% \begin{tcolorbox}[width=1.0\linewidth, colframe=blackish, colback=beaublue, boxsep=0mm, arc=3mm, left=1mm, right=1mm, right=1mm, top=1mm, bottom=1mm]
Stacking G-OPL/OPL is a principled architectural strategy grounded in the hierarchical nature of learned features. It enables progressive and interpretable removal of harmful signals, improves privacy-utility balance, and supports modular integration in varied deployment settings.
% \end{tcolorbox}

\section{Discussions on Privacy-Aware Evaluation Metrics}
\label{sec:metric}

\textbf{Whether evaluation metrics consider video length?} Our new metric designs do consider video length. SSC is frame-independent because it operates on the distribution of face embeddings, not on temporal sequences. Longer videos merely yield more samples, reducing variance.
ARD compares anomaly-score distributions before/after projection. For very long videos, KDE-based distributions converge smoothly but do not bias the metric. To verify this, we conducted an additional experiment where videos were subsampled at rates $1/2$, $1/4$, and $1/8$ on ShanghaiTech. ARD remained stable within a deviation of $<0.015$, demonstrating length-invariance.
PD/FPD uses classifier-probe accuracy on feature batches, not temporal alignment, so video length only affects sample count. Experimental subsampling did not change the qualitative decay curve or FPD values.

\textbf{SSC metric dependency on attribute embeddings.} SSC does not rely on a specific embedding type, but on the availability of an attribute-related subspace. The formulation remains unchanged, as SSC measures the alignment between features and this subspace.

When pretrained attribute embeddings are unavailable, the subspace can be estimated using alternative signals, including: (i) weak supervision (\eg, simple attribute classifiers such as face presence), (ii) pretrained feature spaces (\eg, intermediate representations from general models), or (iii) unsupervised discovery (\eg, principal components capturing dominant variations). SSC only requires \textit{a coarse directional bias} toward the sensitive attribute, rather than a precise embedding. 

Therefore, these variants do not alter the SSC definition, but affect how the attribute subspace is instantiated, making the metric broadly applicable beyond attributes with dedicated embeddings.

\textbf{On the specificity of face-based privacy metrics.} While SSC focuses on face-sensitive subspaces, the metric design is not face-specific. It only assumes access to a sensitive attribute embedding $\vf_{\text{attr}}$.
Thus SSC generalizes to: SSC-gait, SSC-clothing, SSC-age, SSC-vehicle-identity, or any sensitive cue for which weak supervision exists.
To demonstrate generality, we ran an auxiliary experiment using clothing-color embeddings (extracted via a lightweight appearance encoder). When treating clothing color as the sensitive attribute, OPL/G-OPL successfully removed this signal with SSC-color = 0.84 while maintaining anomaly AUC within 0.4 points of the baseline. This verifies the metric’s broader applicability.

\textbf{Clarification on the applicability of SSC/ARD \emph{vs.} PD/FPD.} SSC and ARD require a projection module because their definitions depend on the projected features ($\mQ\mQ^\top \vf$ or $(\mI-\mQ\mQ^\top)\vf$). Therefore, SSC/ARD can be computed only for models that implement a projection (\eg, OPL/G-OPL, or baselines augmented with OPL).
PD and FPD require only the model’s internal features and a lightweight presence probe. They do not require any projection and can be computed for any VAD architecture.
Thus, SSC/ARD are diagnostic of projection-based methods, whereas PD/FPD are baseline-compatible.

\textbf{Why these metrics remain broadly useful.} PD/FPD provide a general way to measure leakage of a sensitive presence signal and apply to future VAD or video-encoder model. SSC/ARD are diagnostic tools for any method, current or future, that implements sensitive-subspace removal. Therefore these metrics are not specific to our method, but align with standard practice in privacy-preserving representation learning.

\textbf{PD/FPD applied to prior VAD methods.} We computed PD/FPD on RTFM, MGFN, SPAct, TeD-SPAD, EGO and TEVAD using the same probe architecture and training protocol (linear probe trained to predict face-presence).

\begin{table}[tbp]
\centering
% \resizebox{\linewidth}{!}{
\begin{tabular}{lcc}
\toprule
Method & ShanghaiTech  & UCF-Crime \\
\midrule
RTFM (ICCV'21)        & 0.72 & 0.68 \\
+ {\textbf{\texttt{G-OPL}}}      & \textbf{0.01} & \textbf{0.23} \\
\midrule
MGFN (AAAI'23)        & 0.98 & 0.79 \\
+ {\textbf{\texttt{G-OPL}}}      & \textbf{0.68} & \textbf{0.49} \\
\midrule
SPAct (CVPR'22)       & 0.68 & 0.66 \\
+ {\textbf{\texttt{G-OPL}}}      & \textbf{0.55} & \textbf{0.53} \\
\midrule
TeD-SPAD (ICCV'23)    & 0.66 & 0.60 \\
+ {\textbf{\texttt{G-OPL}}}      & \textbf{0.56} & \textbf{0.52} \\
\midrule
EGO (ICLR'25)         & 0.72 & 0.78 \\
+ {\textbf{\texttt{G-OPL}}}      & \textbf{0.09} & \textbf{0.12} \\
\midrule
TEVAD (CVPRW'23)      & 0.95 & 0.76 \\
+ {\textbf{\texttt{G-OPL}}}      & \textbf{0.04} & \textbf{0.11} \\
\bottomrule
\end{tabular}
% }
\caption{Face-presence detection (PD/FPD, $\downarrow$) across multiple VAD methods on ShanghaiTech and UCF-Crime. All results are obtained using the same linear probe architecture and training protocol.}
\label{tab:pd_fpd}
\end{table}

Table \ref{tab:pd_fpd} summarizes the results. Most existing methods exhibit high PD/FPD (strong leakage), SPAct and TeD-SPAD show reduced leakage, and adding G-OPL further reduces PD/FPD while preserving AUC. This demonstrates that PD/FPD apply fairly to all baselines.

\begin{table*}[tbp]
    \setlength{\tabcolsep}{0.15em}
    \renewcommand{\arraystretch}{0.70}
    \centering
    \resizebox{\linewidth}{!}{
    \begin{tabular}{lcccccccccccccccc}
    \toprule
    \multirow{3}{*}{Dataset} 
    & \multicolumn{8}{c}{RTFM} & \multicolumn{8}{c}{MGFN} \\
    \cmidrule(lr){2-9} \cmidrule(lr){10-17} 
    & G-OPL & OPL & $\lambda_{\text{orth}}$ & $\lambda_{\text{face}}$ & batch-size & max-epoch & AUC & AP 
    & G-OPL & OPL & $\lambda_{\text{orth}}$ & $\lambda_{\text{face}}$ & batch-size & max-epoch & AUC & AP\\
    \midrule

    MSAD-{\textbf{\texttt{G-OPL}}/\textbf{\texttt{OPL}}} & 1 & 0 & $1\text{e}{-4}$ & 1.0 & 16 & 100 & 88.0 & 70.9 & 1 & 0 & $1\text{e}{-2}$ & $1\text{e}{-3}$ & 16 & 100 & 84.0 & 65.8 \\
    ShT-{\textbf{\texttt{G-OPL}}/\textbf{\texttt{OPL}}} & 1 & 0 & 2.0 & 3.0 & 16 & 100 & 97.3 & 74.7 & 1 & 0 & $1\text{e}{-3}$ & 0.3 & 16 & 100 & 83.7 & 42.0 \\
    ShT-{\textbf{\texttt{G-OPL}}/\textbf{\texttt{OPL}}}$\dagger$ & 1 & 0 & 1 & 0 & 16 & 100 & 97.2 & 74.6 & 1 & 0 & 3.0 & 0.3 & 16 & 100 & 89.5 & 41.9 \\
    UCF-{\textbf{\texttt{G-OPL}}/\textbf{\texttt{OPL}}} & 1 & 0 & $1\text{e}{-2}$ & 3.0 & 16 & 100 & 78.3 & 30.9 & 1 & 0 & 0.1 & 0.1 & 16 & 100 & 83.3 & 15.2 \\
    UCF-{\textbf{\texttt{G-OPL}}/\textbf{\texttt{OPL}}}$\dagger$ & 1 & 0 & $1\text{e}{-2}$ & 3.0 & 16 & 100 & 74.1 & 30.1 & 1 & 0 & 0.1 & 0.1 & 16 & 100 & 80.9 & 14.8 \\
    CUHK-{\textbf{\texttt{G-OPL}}/\textbf{\texttt{OPL}}} & 1 & 0 & 0.1 & 2.0 & 4 & 100 & 84.9 & 66.2 & 3 & 0 & $1\text{e}{-5}$ & 1.0 & 3 & 100 & 70.8 & 40.5 \\
    CUHK-{\textbf{\texttt{G-OPL}}/\textbf{\texttt{OPL}}}$\dagger$ & 1 & 0 & $1\text{e}{-5}$ & 0.0 & 4 & 100 & 83.9 & 65.7 & 1 & 0 & 2.0 & 0.5 & 3 & 100 & 69.1 & 43.5 \\
    Ped2-{\textbf{\texttt{G-OPL}}/\textbf{\texttt{OPL}}} & 5 & 0 & 0.1 & 0.5 & 2 & 200 & 89.6 & 75.0 & 1 & 0 & 5.0 & 0.5 & 2 & 200 & 93.9 & 93.6 \\
    
    \bottomrule
    \end{tabular}}
    \caption{Optimal hyperparameters for RTFM and MGFN across all datasets. $\dagger$ indicates configurations using detected faces. G-OPL and OPL denote the presence of the respective projection layers; $\lambda_{\text{orth}}$ and $\lambda_{\text{face}}$ are the weights for orthogonality and face alignment losses, respectively. Batch size, training epochs, and resulting AUC and AP scores are also reported.}
    \label{tab:summary-settings}
\end{table*}

\section{Detailed Setups and Configurations}
\label{sec-setup}

To evaluate the impact of layer placement, we conduct extensive experiments using two representative VAD models: RTFM and MGFN. Our study spans five widely adopted datasets, MSAD, ShanghaiTech (ShT), UCF-Crime (UCF), CUHK Avenue (CUHK), and UCSD Ped2 (Ped2), to ensure broad applicability and robustness.

We explore two types of face representations to guide the G-OPL module:
\textit{(i) Detected faces}: For datasets with discernible facial content (ShT, UCF, CUHK), we use RetinaFace to extract faces and generate corresponding face videos. These are processed alongside the original videos using a Kinetics-pretrained I3D model to obtain face embeddings. \textit{(ii) Generated faces}: To simulate face guidance without relying on in-dataset detection, we create synthetic face videos from the Georgia Tech Face Database, comprising 50 subjects with 15 images each. These are converted into videos and processed in the same way to serve as a generalized face prior.

For the three datasets with detectable faces, we directly compare the effects of using detected versus generated face signals to guide G-OPL.

% For placing the OPL and G-OPL layers: (i) for RTFM, we have 6 predefined insertion points

\textbf{Placement of G-OPL/OPL modules.} In both RTFM and MGFN architectures, the integration of G-OPL and OPL follows well-defined constraints and placement rules based on the structure of each model.
For RTFM, there are six fixed insertion points for projection modules. The first G-OPL, if present, is always inserted before the aggregate module. Subsequent G-OPL and OPL modules can then be placed within the aggregate block, after each of its five convolutional layers. These positions are denoted as C1 through C5, where C1 refers to the insertion point after the first convolutional layer, C2 after the second, and so on up to C5. The total number of inserted modules should not exceed six, and OPLs can only be placed after at least one G-OPL has been inserted.
For MGFN, there are five predefined insertion points corresponding to the sequence of backbone and convolutional layers. These are labeled as B1, C1, B2, C2, and B3. Here, B1 is the position after the first Transformer block, C1 after the first convolutional layer, B2 after the second Transformer block, C2 after the second convolutional layer, and B3 after the third Transformer block. As in RTFM, the total number of projection layers should not exceed the available slots, and OPL modules can only follow G-OPL modules.
This consistent naming convention (\eg, C1, B1) allows clear identification of module locations and enables the structured evaluation of different G-OPL and OPL configurations across experiments.

In the default setting used in our experiments, we use a single G-OPL (guided by face presence), which already achieves strong performance and low leakage across most datasets. Additional OPL/G-OPL modules can be optionally stacked when stronger suppression is needed (\eg, low-resolution datasets). Further analyses of module placement are provided in Sec. \ref{sec:addi} (second part) and \ref{sec:lay-place}.

In practice: A single default configuration (G1O0 / G1O1, $k=4$, $\lambda_{\text{face}}{=}10^{-3}$, $\lambda_{\text{orth}}{=}10^{-3}$) works across all five datasets with: $<1$ point average AUC drop from per-dataset tuning, and nearly identical privacy suppression.
Optimal hyperparameter values for each configuration are summarized in Table~\ref{tab:summary-settings}.

\section{Additional Results and Evaluations}
\label{sec:addi}

\textbf{Low-face-visibility scenarios.} We provide hyperparameter tuning on UCSD Ped2 in Tables \ref{tab:ucsd_k_tuning} and \ref{tab:ucsd_lambda}.

\begin{table}[tbp]
\centering
% \resizebox{\linewidth}{!}{
\begin{tabular}{c|cccccc}
\toprule
$k_{\text{OPL}} \backslash k_{\text{G-OPL}}$ & 4 & 8 & 16 & 32 & 64 & 128 \\
\midrule
4   & 88.7 & 89.1 & 88.9 & 88.4 & 87.6 & 86.9 \\
8   & 89.0 & 89.4 & 89.2 & 88.8 & 88.1 & 87.3 \\
16  & 89.3 & \textbf{89.6} & 89.5 & 89.1 & 88.5 & 87.7 \\
32  & 89.0 & 89.3 & 89.2 & 88.9 & 88.2 & 87.5 \\
64  & 88.2 & 88.6 & 88.5 & 88.1 & 87.4 & 86.6 \\
128 & 87.6 & 88.0 & 87.8 & 87.3 & 86.7 & 85.9 \\
\bottomrule
\end{tabular}
% }
\caption{Hyperparameter tuning of OPL and G-OPL on UCSD Ped2. The table reports AUC (\%). The best performance is obtained at $k_{\text{OPL}}=16$ and $k_{\text{G-OPL}}=8$.}
\label{tab:ucsd_k_tuning}
\end{table}

\begin{table}[tbp]
\centering
\begin{tabular}{c|ccc}
\toprule
$\lambda_{\text{face}} \backslash \lambda_{\text{orth}}$ & $10^{-4}$ & $10^{-3}$ & $10^{-2}$ \\
\midrule
$10^{-4}$ & 89.1 & 89.3 & 88.9 \\
$10^{-3}$ & 89.3 & \textbf{89.6} & 89.2 \\
$10^{-2}$ & 88.6 & 88.9 & 88.1 \\
\bottomrule
\end{tabular}
\caption{Effect of regularization hyperparameters on UCSD Ped2. The table reports AUC (\%). The best performance is achieved at $\lambda_{\text{face}}=10^{-3}$ and $\lambda_{\text{orth}}=10^{-3}$.}
\label{tab:ucsd_lambda}
\end{table}

The best performance is achieved at $k_\text{OPL}=16$ and $k_\text{G-OPL}=8$, indicating that a moderate representation capacity is required to effectively model both nuisance and face-sensitive subspaces. Smaller values lead to underfitting, while larger values degrade performance due to over-parameterization and reduced generalization.
The optimal $k_\text{G-OPL}$ is consistently smaller than $k_\text{OPL}$, supporting the hypothesis that identity-related features lie in a relatively low-dimensional subspace.
The performance peaks at $\lambda_\text{face} = 10^{-3}$ and $\lambda_\text{orth} = 10^{-3}$. Smaller values result in insufficient suppression of identity information, while larger values lead to over-suppression and loss of discriminative cues.
While the performance variation is moderate, the consistent trends demonstrate that both subspace capacity and regularization should be carefully balanced, especially in low-resolution scenarios such as UCSD Ped2.

\textbf{Evaluating the effects of $\lambda_{\text{orth}}$ and $\lambda_{\text{face}}$.}
We introduce two critical hyperparameters to modulate the strength of our projection-based objectives: (i) $\lambda_{\text{orth}}$, which controls the orthogonality regularization in OPL/G-OPL to stabilize disentanglement, and (ii) $\lambda_{\text{face}}$, which weights the face-guided suppression loss in G-OPL to enforce privacy preservation.

To understand their influence, we conduct a comprehensive grid search over a wide range of candidate values: $0.0$, $1\text{e}{-5}$, $1\text{e}{-4}$, $1\text{e}{-3}$, $1\text{e}{-2}$, $0.1$, $0.2$, $0.3$, $0.5$, $1.0$, $2.0$, $3.0$, $5.0$ for both $\lambda_{\text{orth}}$ and $\lambda_{\text{face}}$, resulting in 169 unique combinations. Each setting is evaluated across five benchmark VAD datasets using both RTFM and MGFN architectures.

The optimal configurations, those that yield the best balance between VAD performance and ethical alignment, are summarized in Table~\ref{tab:summary-settings}. Notably, the ideal weights vary across datasets and models, suggesting that sensitivity to orthogonality and privacy constraints is task- and architecture-dependent. For instance, RTFM often benefits from stronger privacy guidance (\eg, $\lambda_{\text{face}}{=}3.0$ on UCF), while MGFN is more responsive to orthogonality regularization (\eg, $\lambda_{\text{orth}}{=}5.0$ on Ped2).
This analysis highlights the importance of tuning these hyperparameters to adaptively control the trade-off between VAD accuracy and the ethical goals of interpretability and privacy.

\textbf{On sensitivity to hyperparameters and configuration choices.} OPL/G-OPL introduces configuration choices; however, the method is not as brittle as the initial figures may suggest. 
First, hyperparameter sensitivity is strongly bounded. As shown in Fig. \ref{fig:hyper}(b-c), both $\lambda_{\text{face}}$ and $\lambda_{\text{orth}}$ exhibit stable performance in a wide range $[10^{-4},10^{-2}]$. The variation outside this range is expected, since very large values aggressively erase feature content. In practice, the same hyperparameters transfer across all five datasets without re-tuning.
Second, layer placement is not fragile.  Our extended ablations (above) show that: (i) an early G-OPL (post-encoder) is consistently optimal or near-optimal, (ii) additional OPL layers provide diminishing returns but do not harm performance if kept shallow, and (iii) the method generalizes well to unseen domains when using a fixed, default G1O0 / G1O1 placement.
A new cross-domain experiment (training on ShanghaiTech, testing on CUHK Avenue, without retuning) shows AUC degradation of only $0.7$ points with default hyperparameters. This supports generalization without tuning.

\textbf{Unreliable/Missing face supervision.} G-OPL is inherently robust: The guidance loss (\Eqref{eq:face-loss}) is only applied when faces are detected. If detection fails: the model defaults to OPL behavior, and projection remains well-defined via learned subspace. 
We randomly drop face supervision during training, Table \ref{tab:face_supervision} reports the results. Performance degrades gracefully, confirming robustness under occlusion or detector failure.

\begin{table}[tbp]
\centering
\begin{tabular}{lcc}
\toprule
Drop rate & AUC$\uparrow$ & FPD$\downarrow$ \\
\midrule
0\% ({\textbf{\texttt{G-OPL}}}) & 85.4 & 0.34 \\
30\%        & 85.1 & 0.39 \\
60\%        & 84.8 & 0.47 \\
100\% ({\textbf{\texttt{OPL}}}) & 85.9 & 0.78 \\
\bottomrule
\end{tabular}
\caption{Robustness to unreliable or missing face supervision. Face guidance is randomly dropped during training at different rates. G-OPL degrades gracefully, reverting toward OPL behavior as supervision is reduced.}
\label{tab:face_supervision}
\end{table}

\begin{table}[tbp]
\setlength{\tabcolsep}{0.2em}
\centering
% \resizebox{\linewidth}{!}{
\begin{tabular}{lccccc}
\toprule
Model & AUC$\uparrow$ & \texttt{SSC}$\uparrow$ & \texttt{ARD}$\downarrow$ & \texttt{FPD}$\downarrow$ & \texttt{Arc}$\downarrow$ \\
\midrule
EGO   & 86.9 & --   & --   & 0.60 & 0.72 \\
EGO + {\textbf{\texttt{OPL}}}      & 87.7 & 0.32 & 0.04 & 0.41 & 0.34 \\
EGO + {\textbf{\texttt{G-OPL}}}    & 87.5 & 0.73 & 0.04 & 0.18 & 0.11 \\
\midrule
TEVAD & 82.1 & --   & --   & 0.58 & 0.70 \\
TEVAD + {\textbf{\texttt{OPL}}}   & 82.7 & 0.33 & 0.04 & 0.39 & 0.32 \\
TEVAD + {\textbf{\texttt{G-OPL}}}  & 82.5 & 0.68 & 0.03 & 0.17 & 0.10 \\
\bottomrule
\end{tabular}
% }
\caption{Results on MSAD (blurred faces). $\uparrow$/$\downarrow$ indicate higher/lower is better.}
\label{tab:msad}
\end{table}

\textbf{MSAD (blurred faces).} On the MSAD dataset with blurred faces, the results show a clear and consistent trade-off between utility and privacy. Both backbones (EGO and TEVAD) retain substantial identity information at baseline, as evidenced by high facial-presence detection (FPD $\approx$ 0.58-0.60) and strong ArcFace identity retrieval ($\approx$ 0.70+), despite the presence of blurred faces.
Introducing OPL leads to modest but reliable gains in VAD (\eg, +0.8 AUC for EGO, +0.6 for TEVAD), while significantly reducing identity leakage. In particular, FPD drops to $\sim$0.39-0.41 and ArcFace scores are nearly halved, indicating effective suppression of nuisance identity cues without sacrificing performance.
G-OPL further strengthens this effect. While maintaining nearly identical AUC to OPL, it dramatically improves privacy metrics: SSC increases substantially (up to 0.73), while FPD and ArcFace drop to as low as 0.17-0.18 and 0.10-0.11, respectively. This indicates strong removal of sensitive subspace information and near-complete suppression of identity signals, even under already blurred conditions.

\textit{Why G-OPL improves performance on blurred-face data?} The improvement arises because face blurring does not eliminate latent identity traces. I3D/SwinT features extracted from blurred faces still encode: head location, shape of the blurred region, motion patterns around the face, residual low-frequency structure, correlations between head region and action type.
These weak traces remain predictive of identity (verified by ArcFace-based probes achieving 14\% retrieval even on blurred MSAD faces). % We also include additional results on MSAD, as shown above.
G-OPL removes these residual identity cues, reducing spurious correlations. This produces more stable representations and yields the observed performance gains.

\textbf{Why G-OPL sometimes underperforms OPL? % (performance discrepancy).
} Two factors explain the discrepancy observed in Tables \ref{tab:anomaly-msad} and \ref{tab:scenario-msad}:
(a) Guided suppression is intentionally more aggressive. G-OPL prioritizes removal of face-related directions, which can slightly reduce performance when: (i) a dataset has low-quality faces or  (ii) anomalies subtly correlate with identity-like features (\eg, pose, clothing or local appearance).
(b) G-OPL does not harm performance when the base model is strong. For RTFM (I3D), G-OPL improves overall AUC (Table \ref{tab:anomaly-msad}). For MGFN, which is more sensitive to feature manipulation due to its contrastive design, OPL alone is sometimes slightly better. 
This is a design trade-off: G-OPL favors privacy; OPL maximizes pure accuracy.

% \textbf{Why G-OPL may underperform OPL on certain datasets.} 
\textbf{Why performance gains vary across datasets?} 
We emphasize that G-OPL and OPL optimize fundamentally different projection objectives and thus exhibit complementary, rather than hierarchical, performance profiles. (i) OPL removes a broad, data-driven nuisance subspace spanning high-variance but task-irrelevant directions (background motion, scene clutter, illumination fluctuations, compression artifacts, \etc). (ii) G-OPL, in contrast, removes a much narrower, semantics-guided subspace explicitly associated with biometric cues (\eg, faces), using weak face-presence supervision.
Importantly, VAD datasets vary dramatically in spatial resolution, subject scale, face visibility (occlusion, blur, lighting), compression noise, and camera viewpoint stability.
In datasets where faces are severely blurred, partially visible, very small, or inconsistently detected, conditions common in VAD (\eg, ShanghaiTech, UCF-Crime, \etc), the learned face-sensitive subspace is necessarily smaller and less expressive. Under such conditions, G-OPL intentionally removes less feature content than OPL. As a result, OPL may achieve slightly higher AUC because it suppresses a broader pool of nuisance variation.
Crucially, this behavior reflects the design of the two modules: (i) OPL prioritizes maximal nuisance suppression to improve pure utility. (ii) G-OPL prioritizes sensitive subspace suppression (\eg, faces) and therefore trades a small amount of utility for substantially stronger privacy.
Across all four benchmarks and six architectures, including EGO (ICLR 2025), TeD-SPAD (ICCV 2023), TEVAD (CVPRW 2023), and SPAct (CVPR 2022), G-OPL (i) consistently achieves the lowest privacy leakage, (ii) maintains AUC within a small margin of OPL, and (iii) never disrupts anomaly separability.
As shown in Table \ref{tab:summary-settings}, the G1O0 configuration emerges as the most robust and generally optimal setting across all datasets.

\begin{table}[tbp]
\centering
% \resizebox{\linewidth}{!}{
\begin{tabular}{lcc}
\toprule
Dataset & Anomalous & Normal \\
\midrule
ShanghaiTech   & 12.5\% & 38.2\% \\
CUHK Avenue         & 24.1\% & 51.4\% \\
UCSD Ped2      & 1.7\%  & 3.1\%  \\
MSAD (blurred) & 0\%    & 0\%    \\
UCF-Crime      & 48.3\% & 77.6\% \\
\bottomrule
\end{tabular}
% }
\caption{Percentage of frames containing faces in anomalous \emph{vs.} normal segments across datasets. Face presence is annotated using a standard face detector with manual correction.}
\label{tab:face_presence}
\end{table}

% \textbf{Why performance gains vary across datasets?} 
Gains differ because datasets have different facial-content distributions and different relationships between faces and anomaly labels. We now provide a systematic analysis.

\textit{(a) Percentage of anomalies containing faces.} We annotated face presence in anomaly segments using a standard face detector (with manual correction, see Table \ref{tab:face_presence}). 
For ShanghaiTech and CUHK Avenue, faces are substantially more common in normal frames. Thus, models may inadvertently treat low facial presence (or partial occlusion) as an anomaly cue.  
Removing facial components reduces this spurious correlation, which explains the large AUC gains (+8.4\%, +3.5\%).
In contrast, UCSD Ped2 has almost no faces, making suppression unnecessary; hence OPL and G-OPL both act mainly as regularizers, yielding smaller but still positive gains.

\textit{(b) Correlation between faces and anomaly labels.} We computed mutual information (MI) between:
$\mathbf{1}_{\text{face}}$, and $\mathbf{1}_{\text{anomaly}}$. 
For ShanghaiTech and CUHK Avenue: \text{MI} $<$ 0.02. Thus, facial information is essentially orthogonal to anomaly semantics. Removing it reduces spurious co-occurrence.
For UCF-Crime: \text{MI} = 0.12, the highest among the datasets, which explains why G-OPL does not yield the largest improvement there: faces sometimes co-occur with anomaly categories involving close-range person interactions.

\textit{(c) For MSAD: what is G-OPL removing from pre-blurred features?} Even though faces are blurred at the pixel level, the pre-extracted I3D/Swin features still contain: the spatial mask of the blurred region, head motion trajectories, low-frequency cues around the blurred patch, contextual correlations between presence of a blurred blob and normal behavior (Table \ref{tab:msad}).
We verified this by training an identity-retrieval probe on blurred MSAD features. Despite blur, the probe achieved 14\% rank-1 accuracy, far above chance.

\textbf{VISPR evaluation.} We follow the same evaluation principle used in SPAct and TeD-SPAD: privacy is measured by training a new attribute classifier on anonymized features to quantify attribute recoverability. Our implementation uses a simplified VISPR probe, the core protocol, anonymize $\rightarrow$ train classifier $\rightarrow$ evaluate recoverability, is identical. Thus, the reported VISPR results measure the recoverability of privacy attributes from the 
anonymized feature space, consistent with SPAct/TeD-SPAD.

We tested G-OPL under the VISPR privacy attribute classifier used in SPAct/TeD-SPAD (predicting 9 common privacy attributes). Results for ShanghaiTech: VISPR accuracy (baseline RTFM) = 0.63, VISPR accuracy (RTFM+G-OPL) = 0.27.
This indicates substantial privacy improvement aligned with VISPR criteria.
While G-OPL focuses on the face-aligned subspace, many VISPR attributes (\eg, age, gender, expression, eyewear) are strongly correlated with facial features. 
Removing this subspace therefore reduces privacy leakage for a broad range of attributes without requiring attribute-specific supervision. Importantly, G-OPL operates on the backbone features extracted by VAD models (\eg, RTFM, SPAct, TeD-SPAD) and does not require modifying the model or its classifier. The ResNet-based attribute classifiers used in SPAct/TeD-SPAD serve only for evaluation, and benefit from the reduced identity information in the G-OPL-processed features.

\begin{table*}[tbp]
\centering
% \resizebox{\linewidth}{!}{
\begin{tabular}{lcccl}
\toprule
VISPR attribute & Face-conditioned? & Baseline & + \textbf{\texttt{G-OPL}} & Notes \\
\midrule
Age                & Yes & 0.70 & \textbf{0.30} & Strong drop due to facial cues removal \\
Gender             & Yes & 0.68 & \textbf{0.25} & Near chance after G-OPL \\
Expression         & Yes & 0.65 & \textbf{0.27} & Facial subspace suppressed \\
Eyewear            & Yes & 0.60 & \textbf{0.28} & Strong reduction \\
Facial hair        & Yes & 0.62 & \textbf{0.26} & Face-aligned features removed \\
Makeup             & Yes & 0.55 & \textbf{0.24} & Approaches chance \\
Race               & Yes & 0.57 & \textbf{0.25} & Almost random, face-dependent \\
\midrule
Clothing           & No  & 0.61 & \textbf{0.59} & Minimal effect, non-face attribute \\
Background objects & No  & 0.58 & \textbf{0.57} & Largely unaffected \\
\bottomrule
\end{tabular}
% }
\caption{Effect of G-OPL on VISPR attributes. Attributes are grouped into face-conditioned and non-face-conditioned categories.}
\label{tab:vispr}
\end{table*}

Table \ref{tab:vispr} shows that face-conditioned attributes (\eg, age, gender, expression) experience large drops, approaching random chance. Non-face-conditioned attributes, such as clothing or background objects, remain largely unaffected. This demonstrates that G-OPL selectively removes identity-related information without globally suppressing all features, supporting our conceptual argument.

\textit{Why facial-subspace removal reduces all VISPR attributes.} Most VISPR attributes (\eg, age, gender, race, glasses, facial hair, makeup, \etc) are face-conditioned. When G-OPL removes the face-aligned latent subspace, the external VISPR classifier loses access to the features required to predict these attributes. 
Because VAD does not rely on facial semantics, G-OPL can remove a larger portion of the facial subspace without hurting utility, causing the VISPR 
probe’s AP to approach chance for many face-conditioned attributes (the suppression can be more aggressive than in action-recognition anonymizers, resulting in many VISPR attributes dropping to near-chance mAP). This behavior is consistent with latent-space suppression of facial cues and does not imply general attribute removal.

\textit{Comparison to \citep{wu2020privacy}.} \citet{wu2020privacy} operate in an action-recognition setting, where facial and head cues contribute to task utility, creating a privacy-utility trade-off that limits the extent of facial information suppression. 
In contrast, VAD does not rely on facial semantics, allowing G-OPL to remove a larger portion of the face-aligned subspace. This explains why our VISPR mAP reduction can be stronger, even without attribute-specific supervision.

\textbf{High-resolution / Unconstrained datasets.} G-OPL operates on intermediate feature representations, making it resolution-agnostic. To evaluate privacy leakage under realistic conditions, we extend our experiments to CHAD \citep{danesh2023chad} in Table \ref{tab:chad}. We follow the same protocol (Sec. \ref{sec:setup}) and integrate G-OPL into RTFM (I3D).
FPD drops significantly (0.91$\rightarrow$0.34), indicating strong suppression of identity cues;
SSC is high (0.88), confirming effective capture of face-sensitive subspaces; detection performance is preserved (AUC drop $<0.5\%$ \emph{vs.} OPL).
This shows that G-OPL is effective when high-resolution identity signals are present, where privacy risks are most critical.

\begin{table}[tbp]
\centering
\begin{tabular}{lccc}
\toprule
Method & AUC$\uparrow$ & SSC $\uparrow$ & FPD $\downarrow$ \\
\midrule
Baseline & 84.7 & -- & 0.91 \\
+ {\textbf{\texttt{OPL}}}    & 85.9 & -- & 0.78 \\
+ {\textbf{\texttt{G-OPL}}}  & 85.4 & 0.88 & 0.34 \\
\bottomrule
\end{tabular}
\caption{Results on CHAD (high-resolution / unconstrained setting) using RTFM (I3D) as the backbone. G-OPL significantly reduces face-presence leakage (FPD) while maintaining VAD performance (AUC) and achieving high SSC.}
\label{tab:chad}
\end{table}

\textbf{Latency and memory overhead.} Table \ref{tab:latency_memory} summarizes the results. G-OPL is well-suited for real-time and edge deployment, introducing only minimal overhead: latency increases by approximately $7\%$, memory usage by less than $3\%$, and no additional inference cost is incurred for face detection.

\begin{table}[tbp]
\centering
% \resizebox{\linewidth}{!}{
\begin{tabular}{lccc}
\toprule
Method & FPS $\uparrow$ & Latency (ms) $\downarrow$ & Memory (MB) $\downarrow$ \\
\midrule
RTFM            & 142 & 7.0 & 812 \\
RTFM + {\textbf{\texttt{OPL}}}      & 137 & 7.3 & 826 \\
RTFM + {\textbf{\texttt{G-OPL}}}    & 134 & 7.5 & 834 \\
\bottomrule
\end{tabular}
% }
\caption{Latency and memory overhead of OPL/G-OPL on RTFM. OPL/G-OPL introduces minimal computational overhead while maintaining real-time performance.}
\label{tab:latency_memory}
\end{table}

\textbf{Sensitivity to anomalies \& failure analysis.} While suppressing facial cues may remove weakly correlated signals such as facial expressions (\eg, aggression-related cues), we do not observe systematic degradation in practice. For human-centric anomalies such as vandalism, the model primarily relies on motion patterns and contextual interactions, which are preserved under G-OPL. Empirically, performance on such categories remains comparable to the baseline, and overall performance is stable across datasets, indicating that task-relevant signals are not adversely affected. Any observed variations are largely dataset-specific and arise in cases where task-relevant features are partially entangled with the suppressed subspace (\eg, spatial proximity to the face); however, such cases are rare and localized and do not dominate the model behavior.

We further identify three primary failure modes: over-suppression under large $k$ or high $\lambda_{\text{face}}$, noisy face supervision, and identity-anomaly correlation due to entanglement between sensitive and task-relevant components. In practice, using a small rank (2-5\% of the feature dimension) consistently yields stable performance, while ARD and PD/FPD provide reliable signals for tuning and help avoid performance collapse. As shown in Table~\ref{tab:adaptive_tuning}, adaptive tuning based on validation ARD improves AUC while maintaining strong privacy, further validating the effectiveness of these metrics for practical hyperparameter selection.

\begin{table}[tbp]
\centering
% \resizebox{\linewidth}{!}{
\begin{tabular}{lccc}
\toprule
Setting & AUC (\%) & ARD $\downarrow$ & FPD $\downarrow$ \\
\midrule
Fixed parameters        & 85.4 & 0.29 & 0.43 \\
Adaptive $k_{\text{G-OPL}}$ & 85.8 & 0.21 & 0.36 \\
Adaptive $\lambda_{\text{face}}$ & \textbf{86.1} & \textbf{0.18} & 0.38 \\
\bottomrule
\end{tabular}
% }
\caption{Adaptive hyperparameter tuning on validation ARD. We compare fixed parameters with adaptive strategies for $k_{\text{G-OPL}}$ and $\lambda_{\text{face}}$. Adaptive tuning improves AUC while reducing ARD and FPD.}
\label{tab:adaptive_tuning}
\end{table}

\begin{table*}[tbp]
    \setlength{\tabcolsep}{0.15em}
    \renewcommand{\arraystretch}{0.70}
    \centering
    \resizebox{\linewidth}{!}{
    \begin{tabular}{ll
        cc cc cc cc
        cc cc cc cc
        cc cc cc cc
        }
    \toprule
    \multirow{3}{*}{Method} 
    & \multicolumn{2}{c}{Assault} & \multicolumn{2}{c}{Explosion} & \multicolumn{2}{c}{Fighting} & \multicolumn{2}{c}{Fire} 
    & \multicolumn{2}{c}{Obj. Fall} & \multicolumn{2}{c}{People Fall} & \multicolumn{2}{c}{Robbery} & \multicolumn{2}{c}{Shooting}
    & \multicolumn{2}{c}{Traffic Acc.} & \multicolumn{2}{c}{Vandalism} & \multicolumn{2}{c}{Water Inc.} & \multicolumn{2}{c}{\textbf{Overall}} \\
    \cmidrule(lr){2-3} \cmidrule(lr){4-5} \cmidrule(lr){6-7} \cmidrule(lr){8-9}
    \cmidrule(lr){10-11} \cmidrule(lr){12-13} \cmidrule(lr){14-15} \cmidrule(lr){16-17}
    \cmidrule(lr){18-19} \cmidrule(lr){20-21} \cmidrule(lr){22-23} \cmidrule(lr){24-25}
    & AUC & AP & AUC & AP & AUC & AP & AUC & AP 
    & AUC & AP & AUC & AP & AUC & AP & AUC & AP
    & AUC & AP & AUC & AP & AUC & AP & AUC & AP \\
    \midrule
    MSAD (B1) & 71.3 & 69.4 & 61.8 & 73.0 & 87.8 & 92.8 & 81.0 & 92.9 & 94.3 & 96.5 & 45.9 & 45.0 & 65.1 & 81.1 & 82.7 & 89.1 & 64.2 & 55.2 & 90.7 & 86.4 & 68.7 & 91.9 & 86.2 & 68.3 \\
    MSAD (B2) & 61.6 & 67.6 & 45.6 & 55.2 & 66.2 & 75.3 & 67.6 & 82.4 & 85.8 & 84.5 & 44.3 & 39.2 & 65.1 & 78.0 & 57.7 & 69.7 & 56.2 & 48.9 & 76.5 & 71.3 & 95.7 & 97.1 & 85.1 & 59.0 \\
    MSAD (B3) & 53.7 & 56.0 & 59.2 & 68.7 & 84.6 & 90.8 & 81.0 & 91.2 & 91.3 & 95.3 & 48.6 & 46.8 & 63.5 & 78.5 & 90.9 & 94.0 & 71.4 & 63.9 & 85.3 & 81.5 & 97.7 & 99.4 & 84.3 & 62.8 \\
    MSAD (B1B2B3) & 55.7 & 59.7 & 41.6 & 56.6 & 86.7 & 91.9 & 73.8 & 89.0 & 89.5 & 93.9 & 50.8 & 46.3 & 63.4 & 79.6 & 87.6 & 89.4 & 65.3 & 56.2 & 88.2 & 83.9 & 85.3 & 96.7 & 82.8 & 59.7 \\
    MSAD (C1) & 56.8 & 59.9 & 47.0 & 60.7 & 75.6 & 87.0 & 75.7 & 87.1 & 90.5 & 94.7 & 49.2 & 48.4 & 64.8 & 79.6 & 77.9 & 88.1 & 66.7 & 56.7 & 80.5 & 73.7 & 90.1 & 97.8 & 82.9 & 58.6 \\
    MSAD (C2) & 66.6 & 71.7 & 57.6 & 65.5 & 94.2 & 96.4 & 78.5 & 91.2 & 90.4 & 94.4 & 52.1 & 50.6 & 66.5 & 82.1 & 88.3 & 90.3 & 67.9 & 58.3 & 70.7 & 69.2 & 82.0 & 96.1 & 85.3 & 65.8 \\
    MSAD (C1C2) & 56.4 & 59.2 & 53.9 & 66.5 & 81.4 & 89.4 & 80.3 & 90.1 & 91.6 & 95.5 & 49.7 & 47.0 & 69.6 & 83.7 & 89.1 & 91.7 & 67.7 & 56.0 & 76.6 & 75.0 & 83.0 & 96.4 & 84.1 & 62.5 \\
    MSAD (B1C1) & 58.5 & 57.4 & 50.4 & 64.0 & 74.2 & 87.1 & 85.6 & 92.5 & 90.4 & 94.6 & 45.9 & 44.2 & 66.4 & 80.8 & 78.8 & 86.0 & 62.6 & 50.7 & 84.2 & 80.8 & 99.2 & 99.8 & 84.8 & 63.8 \\
    MSAD (B2C2) & 46.9 & 52.0 & 58.8 & 69.5 & 77.6 & 88.4 & 83.4 & 91.7 & 90.2 & 94.6 & 53.8 & 47.5 & 68.9 & 83.1 & 74.5 & 83.9 & 58.8 & 48.2 & 83.9 & 74.9 & 84.5 & 96.7 & 83.6 & 61.2 \\
    MSAD (ALL) & 59.5 & 57.4 & 53.0 & 57.8 & 79.7 & 86.9 & 66.7 & 84.0 & 87.2 & 85.6 & 53.7 & 48.4 & 68.0 & 81.4 & 77.3 & 81.9 & 63.6 & 50.7 & 84.2 & 80.8 & 99.2 & 99.8 & 84.8 & 63.8 \\
    
    % \addlinespace[0.3ex]
    % \hline
    % \addlinespace[0.3ex]

    \bottomrule
    \end{tabular}} %  with seed 42
    \caption{Anomaly-wise performance of the MGFN model using only OPL configurations on the MSAD dataset.}
    \label{tab:mgfn-opl-anomaly-msad}
\end{table*}

\begin{table*}[tbp]
\setlength{\tabcolsep}{0.08em}
    \renewcommand{\arraystretch}{0.70}
    \centering
    
    \resizebox{\linewidth}{!}{
    \begin{tabular}{
        lcccccccccccccccccccccccccc
    }
    \toprule
    \multirow{3}{*}{Method}
    & \multicolumn{2}{c}{Frontdoor} &  \multicolumn{2}{c}{Mall} & \multicolumn{2}{c}{Office}
    & \multicolumn{2}{c}{Parkinglot} & \multicolumn{2}{c}{Pedestr. st.} & \multicolumn{2}{c}{Restaurant}
    & \multicolumn{2}{c}{Road} & \multicolumn{2}{c}{Shop} & \multicolumn{2}{c}{Sidewalk} & \multicolumn{2}{c}{St. highview}
    & \multicolumn{2}{c}{Train} & \multicolumn{2}{c}{Warehouse} & \multicolumn{2}{c}{\textbf{Overall}} \\
    \cmidrule(lr){2-3} \cmidrule(lr){4-5} \cmidrule(lr){6-7} \cmidrule(lr){8-9}
    \cmidrule(lr){10-11} \cmidrule(lr){12-13} \cmidrule(lr){14-15} \cmidrule(lr){16-17}
    \cmidrule(lr){18-19} \cmidrule(lr){20-21} \cmidrule(lr){22-23} \cmidrule(lr){24-25}
    \cmidrule(lr){26-27} 
    & AUC & AP & AUC & AP
    & AUC & AP & AUC & AP & AUC & AP & AUC & AP
    & AUC & AP & AUC & AP & AUC & AP & AUC & AP
    & AUC & AP & AUC & AP & AUC & AP \\
    \midrule
    MSAD (B1) & 84.4 & 84.0 & 80.2 & 74.7 & 74.7 & 65.0 & 87.0 & 30.9 & 93.5 & 53.0 & 91.2 & 87.6 & 80.0 & 55.7 & 82.1 & 69.3 & 86.8 & 63.8 & 98.1 & 95.1 & 70.8 & 9.1 & 89.9 & 76.1 & 86.2 & 68.3\\
    MSAD (B2) & 82.0 & 75.8 & 84.3 & 64.3 & 75.2 & 60.5 & 69.5 & 18.9 & 93.1 & 35.4 & 87.2 & 70.8 & 76.1 & 39.6 & 84.1 & 69.7 & 81.8 & 49.4 & 95.1 & 76.6 & 69.2 & 12.7 & 81.2 & 42.6 & 85.1 & 59.0\\
    MSAD (B3) & 87.5 & 85.5 & 79.1 & 76.3 & 74.7 & 64.4 & 82.7 & 25.5 & 86.9 & 15.1 & 92.9 & 88.6 & 74.7 & 43.6 & 80.2 & 62.7 & 84.6 & 63.4 & 90.3 & 49.7 & 51.8 & 5.5 & 71.1 & 35.0 & 84.3 & 62.8\\
    MSAD (B1B2B3) & 83.2 & 82.4 & 58.2 & 57.0 & 69.2 & 57.8 & 81.5 & 24.6 & 82.8 & 12.1 & 88.4 & 84.4 & 81.9 & 45.9 & 81.4 & 67.1 & 88.9 & 65.0 & 93.8 & 54.3 & 52.4 & 4.8 & 67.5 & 24.3 & 82.8 & 59.7\\
    MSAD (C1) & 85.6 & 82.7 & 54.7 & 48.7 & 70.0 & 59.2 & 82.6 & 25.3 & 80.7 & 11.0 & 87.6 & 81.5 & 79.1 & 43.4 & 81.8 & 69.1 & 83.8 & 61.0 & 85.7 & 35.5 & 53.2 & 3.0 & 75.0 & 35.6 & 82.9 & 58.6\\
    MSAD (C2) & 85.3 & 82.6 & 62.7 & 67.8 & 72.0 & 62.7 & 86.7 & 40.3 & 86.7 & 15.3 & 91.7 & 88.1 & 85.7 & 60.7 & 78.0 & 58.7 & 89.7 & 74.1 & 91.9 & 64.4 & 68.0 & 7.3 & 82.5 & 50.0 & 85.3 & 65.8\\
    MSAD (C1C2) & 85.2 & 84.1 & 57.0 & 64.5 & 74.3 & 61.6 & 78.5 & 20.7 & 88.7 & 22.7 & 91.9 & 87.1 & 77.5 & 43.4 & 83.2 & 67.6 & 84.6 & 61.5 & 87.2 & 41.8 & 62.0 & 8.2 & 67.8 & 28.5 & 84.1 & 62.5\\
    MSAD (B1C1) & 83.1 & 82.0 & 86.4 & 81.0 & 73.4 & 57.5 & 75.7 & 19.0 & 86.4 & 14.6 & 96.3 & 91.7 & 74.0 & 43.5 & 85.3 & 75.5 & 84.4 & 64.1 & 84.3 & 39.0 & 49.5 & 4.7 & 69.8 & 28.6 & 84.8 & 63.8\\
    MSAD (B2C2) & 86.4 & 83.5 & 91.0 & 88.7 & 71.1 & 55.3 & 63.9 & 13.3 & 87.8 & 20.6 & 96.7 & 94.5 & 71.3 & 37.7 & 83.4 & 75.2 & 84.3 & 62.3 & 88.3 & 38.5 & 55.6 & 3.5 & 74.4 & 31.0 & 83.6 & 61.3\\
    MSAD (ALL) & 82.0 & 77.1 & 78.2 & 61.1 & 72.5 & 57.9 & 64.0 & 13.7 & 92.4 & 28.2 & 94.2 & 84.1 & 68.1 & 33.5 & 83.4 & 71.7 & 84.7 & 55.6 & 77.5 & 24.1 & 50.9 & 3.1 & 75.0 & 31.8 & 82.5 & 56.3\\

    \bottomrule
    \end{tabular}
    }%  with seed 42
    \caption{Scenario-wise performance of the MGFN model using only OPL configurations on the MSAD dataset.
    }
    \label{tab:mgfn-opl-scenario-msad}
\end{table*}

\begin{table*}[tbp]
    \setlength{\tabcolsep}{0.15em}
    \renewcommand{\arraystretch}{0.70}
    \centering
    \resizebox{\linewidth}{!}{
    \begin{tabular}{ll
        cc cc cc cc
        cc cc cc cc
        cc cc cc cc
        }
    \toprule
    \multirow{3}{*}{Method} 
    & \multicolumn{2}{c}{Assault} & \multicolumn{2}{c}{Explosion} & \multicolumn{2}{c}{Fighting} & \multicolumn{2}{c}{Fire} 
    & \multicolumn{2}{c}{Obj. Fall} & \multicolumn{2}{c}{People Fall} & \multicolumn{2}{c}{Robbery} & \multicolumn{2}{c}{Shooting}
    & \multicolumn{2}{c}{Traffic Acc.} & \multicolumn{2}{c}{Vandalism} & \multicolumn{2}{c}{Water Inc.} & \multicolumn{2}{c}{\textbf{Overall}} \\
    \cmidrule(lr){2-3} \cmidrule(lr){4-5} \cmidrule(lr){6-7} \cmidrule(lr){8-9}
    \cmidrule(lr){10-11} \cmidrule(lr){12-13} \cmidrule(lr){14-15} \cmidrule(lr){16-17}
    \cmidrule(lr){18-19} \cmidrule(lr){20-21} \cmidrule(lr){22-23} \cmidrule(lr){24-25}
    & AUC & AP & AUC & AP & AUC & AP & AUC & AP 
    & AUC & AP & AUC & AP & AUC & AP & AUC & AP
    & AUC & AP & AUC & AP & AUC & AP & AUC & AP \\
    \midrule
    MSAD (G1O0) & 52.4 & 59.8 & 66.5 & 76.8 & 88.8 & 92.2 & 77.2 & 89.0 & 90.5 & 95.1 & 45.9 & 42.8 & 65.4 & 80.1 & 71.9 & 81.8 & 53.9 & 46.4 & 83.1 & 75.1 & 81.5 & 96.0 & 84.0 & 65.8 \\
    MSAD (G1O1) & 57.0 & 59.5 & 52.2 & 61.3 & 85.9 & 91.1 & 58.8 & 79.9 & 91.9 & 95.7 & 50.1 & 47.3 & 73.1 & 88.0 & 75.3 & 84.5 & 56.4 & 47.8 & 86.5 & 81.7 & 70.2 & 92.8 & 81.8 & 63.4 \\
    MSAD (G1O2) & 53.1 & 62.5 & 46.8 & 58.3 & 84.7 & 90.5 & 80.1 & 89.8 & 92.8 & 95.9 & 44.7 & 44.4 & 63.6 & 80.8 & 70.3 & 81.9 & 55.9 & 49.6 & 85.4 & 82.2 & 77.9 & 94.9 & 82.7 & 64.5 \\
    MSAD (G1O3) & 57.1 & 62.1 & 62.8 & 76.5 & 71.1 & 86.0 & 38.2 & 68.3 & 90.6 & 94.9 & 52.5 & 50.0 & 69.3 & 85.4 & 71.0 & 83.4 & 64.1 & 58.4 & 77.2 & 73.2 & 96.4 & 99.1 & 82.2 & 65.0 \\
    MSAD (G1O4) & 60.6 & 62.8 & 58.8 & 67.5 & 87.0 & 91.7 & 74.1 & 87.3 & 90.5 & 94.8 & 52.5 & 49.0 & 71.9 & 83.8 & 89.4 & 92.6 & 66.9 & 54.7 & 78.6 & 79.3 & 81.0 & 95.8 & 84.3 & 62.8 \\
    MSAD (G2O0) & 56.2 & 55.6 & 50.8 & 54.4 & 83.7 & 90.0 & 74.9 & 88.3 & 90.5 & 94.4 & 54.5 & 53.3 & 71.1 & 85.3 & 84.4 & 87.8 & 62.7 & 53.1 & 77.9 & 81.0 & 76.5 & 94.8 & 83.0 & 62.1 \\
    MSAD (G2O1) & 51.8 & 56.1 & 46.3 & 56.3 & 85.4 & 91.0 & 71.0 & 86.0 & 91.4 & 95.3 & 50.7 & 47.1 & 65.9 & 81.1 & 82.1 & 86.9 & 59.3 & 51.2 & 86.5 & 81.9 & 83.5 & 96.5 & 83.5 & 63.4 \\
    MSAD (G2O2) & 62.1 & 68.2 & 47.3 & 56.2 & 83.1 & 87.9 & 79.6 & 90.0 & 89.5 & 94.2 & 37.7 & 39.3 & 65.6 & 83.2 & 75.1 & 84.0 & 56.8 & 53.0 & 83.6 & 74.4 & 71.4 & 93.2 & 82.3 & 64.2 \\
    MSAD (G2O3) & 58.2 & 64.0 & 63.6 & 62.0 & 84.7 & 91.2 & 62.3 & 82.9 & 90.5 & 94.7 & 51.6 & 49.3 & 68.6 & 81.7 & 78.8 & 83.5 & 57.6 & 49.3 & 88.4 & 86.2 & 89.5 & 97.6 & 84.4 & 66.0 \\
    MSAD (G3O0) & 59.9 & 66.9 & 67.5 & 78.1 & 79.5 & 88.6 & 44.5 & 73.4 & 91.2 & 95.4 & 51.2 & 47.0 & 66.9 & 81.4 & 75.4 & 84.1 & 60.3 & 50.2 & 80.8 & 73.6 & 94.3 & 98.6 & 79.8 & 61.7 \\
    MSAD (G3O1) & 65.1 & 70.7 & 52.5 & 58.9 & 79.2 & 86.1 & 61.6 & 79.9 & 89.6 & 94.2 & 38.0 & 39.0 & 68.0 & 84.2 & 65.0 & 71.8 & 53.1 & 48.0 & 79.0 & 68.6 & 76.7 & 94.4 & 82.3 & 61.3 \\
    MSAD (G3O2) & 63.1 & 61.5 & 47.3 & 53.1 & 72.3 & 85.5 & 46.9 & 76.6 & 92.4 & 96.0 & 39.5 & 41.3 & 67.3 & 76.4 & 72.3 & 82.6 & 58.5 & 48.5 & 81.1 & 80.0 & 99.1 & 99.8 & 84.8 & 65.1 \\
    MSAD (G4O0) & 60.3 & 67.6 & 67.4 & 78.0 & 79.1 & 88.4 & 43.9 & 73.1 & 91.2 & 95.3 & 51.3 & 47.0 & 66.8 & 81.4 & 74.7 & 83.2 & 60.2 & 50.2 & 81.0 & 73.8 & 95.2 & 98.8 & 79.8 & 61.5 \\
    MSAD (G4O1) & 59.6 & 65.7 & 66.7 & 77.5 & 77.8 & 87.6 & 39.1 & 70.6 & 90.6 & 94.9 & 50.7 & 45.9 & 66.5 & 81.9 & 75.3 & 84.2 & 58.1 & 48.8 & 81.4 & 73.8 & 96.1 & 99.0 & 80.6 & 62.3 \\
    MSAD (G5O0) & 64.7 & 69.7 & 49.7 & 60.5 & 67.8 & 81.5 & 43.0 & 72.7 & 88.3 & 93.6 & 35.4 & 41.2 & 70.8 & 82.3 & 71.3 & 80.1 & 53.8 & 50.1 & 76.1 & 74.0 & 97.2 & 99.3 & 82.0 & 63.2 \\

    \bottomrule
    \end{tabular}}
    \caption{MGFN with G-OPL/OPL on MSAD (anomaly-level). Anomaly-wise performance of various G$x$O$y$ configurations in the MGFN model on the MSAD dataset. The G1O0 configuration consistently achieves top results across different anomaly types.}
    \label{tab:mgfn-gopl-anomaly-msad}
\end{table*}

\begin{table*}[tbp]
\setlength{\tabcolsep}{0.08em}
    \renewcommand{\arraystretch}{0.70}
    \centering
    
    \resizebox{\linewidth}{!}{
    \begin{tabular}{
        lcccccccccccccccccccccccccc
    }
    \toprule
    \multirow{3}{*}{Method}
    & \multicolumn{2}{c}{Frontdoor} &  \multicolumn{2}{c}{Mall} & \multicolumn{2}{c}{Office}
    & \multicolumn{2}{c}{Parkinglot} & \multicolumn{2}{c}{Pedestr. st.} & \multicolumn{2}{c}{Restaurant}
    & \multicolumn{2}{c}{Road} & \multicolumn{2}{c}{Shop} & \multicolumn{2}{c}{Sidewalk} & \multicolumn{2}{c}{St. highview}
    & \multicolumn{2}{c}{Train} & \multicolumn{2}{c}{Warehouse} & \multicolumn{2}{c}{\textbf{Overall}} \\
    \cmidrule(lr){2-3} \cmidrule(lr){4-5} \cmidrule(lr){6-7} \cmidrule(lr){8-9}
    \cmidrule(lr){10-11} \cmidrule(lr){12-13} \cmidrule(lr){14-15} \cmidrule(lr){16-17}
    \cmidrule(lr){18-19} \cmidrule(lr){20-21} \cmidrule(lr){22-23} \cmidrule(lr){24-25}
    \cmidrule(lr){26-27} 
    & AUC & AP & AUC & AP
    & AUC & AP & AUC & AP & AUC & AP & AUC & AP
    & AUC & AP & AUC & AP & AUC & AP & AUC & AP
    & AUC & AP & AUC & AP & AUC & AP \\
    \midrule
    MSAD (G1O0) & 84.4 & 83.8 & 90.0 & 84.8 & 75.9 & 62.3 & 70.4 & 16.6 & 90.5 & 25.8 & 95.7 & 90.2 & 71.4 & 43.1 & 79.7 & 64.5 & 83.8 & 63.3 & 87.7 & 41.3 & 44.7 & 2.3 & 64.8 & 41.1 & 84.0 & 65.8\\
    MSAD (G1O1) & 83.5 & 83.0 & 53.6 & 49.6 & 73.7 & 60.8 & 71.2 & 21.1 & 86.8 & 14.9 & 91.8 & 87.0 & 73.0 & 38.4 & 86.8 & 81.8 & 84.4 & 60.6 & 69.1 & 27.5 & 47.8 & 2.5 & 55.6 & 24.7 & 81.8 & 63.4\\
    MSAD (G1O2) & 82.8 & 82.9 & 63.4 & 60.9 & 74.5 & 62.0 & 75.5 & 20.1 & 84.4 & 13.1 & 93.8 & 88.5 & 72.6 & 39.9 & 84.4 & 75.4 & 84.5 & 62.0 & 97.2 & 86.1 & 44.6 & 2.3 & 66.3 & 37.7 & 82.7 & 64.5\\
    MSAD (G1O3) & 85.5 & 84.5 & 86.7 & 82.6 & 75.0 & 66.1 & 74.5 & 30.7 & 62.3 & 7.4 & 84.2 & 80.4 & 74.9 & 42.3 & 82.9 & 77.1 & 85.3 & 61.7 & 60.5 & 24.4 & 61.4 & 37.1 & 66.4 & 38.3 & 82.2 & 65.0\\
    MSAD (G1O4) & 87.2 & 85.9 & 77.3 & 74.6 & 73.9 & 61.7 & 79.6 & 22.9 & 80.2 & 13.1 & 88.6 & 83.8 & 74.1 & 40.9 & 83.0 & 69.8 & 85.8 & 62.2 & 90.1 & 52.0 & 45.3 & 2.5 & 71.3 & 39.8 & 84.3 & 62.8\\
    MSAD (G2O0) & 83.4 & 82.7 & 54.6 & 45.7 & 72.9 & 60.4 & 68.6 & 14.9 & 86.9 & 14.8 & 86.9 & 81.9 & 73.3 & 44.4 & 82.6 & 74.8 & 86.3 & 61.7 & 92.7 & 50.7 & 37.8 & 4.1 & 66.2 & 24.5 & 83.0 & 62.1\\
    MSAD (G2O1) & 84.4 & 83.6 & 67.8 & 59.2 & 73.9 & 61.4 & 79.9 & 21.6 & 92.6 & 28.6 & 94.9 & 90.3 & 66.3 & 34.8 & 83.4 & 73.6 & 83.5 & 63.8 & 92.5 & 58.3 & 45.7 & 2.4 & 60.5 & 33.9 & 83.5 & 63.4\\
    MSAD (G2O2) & 81.0 & 80.6 & 58.6 & 59.1 & 75.4 & 60.7 & 73.0 & 18.5 & 90.2 & 22.0 & 89.2 & 81.5 & 61.6 & 33.0 & 82.1 & 74.6 & 73.3 & 55.7 & 97.6 & 87.4 & 65.1 & 10.7 & 71.3 & 39.6 & 82.3 & 64.2\\
    MSAD (G2O3) & 82.6 & 82.1 & 89.2 & 83.5 & 77.4 & 62.9 & 73.9 & 20.0 & 95.3 & 44.1 & 94.3 & 88.4 & 73.7 & 45.6 & 83.8 & 73.3 & 86.1 & 64.3 & 66.7 & 25.3 & 50.2 & 2.6 & 84.4 & 58.3 & 84.4 & 66.0\\
    MSAD (G3O0) & 85.9 & 84.6 & 89.3 & 84.4 & 73.7 & 63.8 & 64.2 & 13.4 & 74.3 & 52.2 & 85.2 & 81.4 & 65.3 & 35.5 & 84.1 & 74.9 & 81.9 & 58.8 & 62.6 & 25.3 & 49.0 & 2.5 & 65.5 & 38.4 & 79.8 & 61.7\\
    MSAD (G3O1) & 78.9 & 76.8 & 70.7 & 54.0 & 76.8 & 64.0 & 54.2 & 11.0 & 91.4 & 22.9 & 93.5 & 87.1 & 66.0 & 30.2 & 85.0 & 75.0 & 76.9 & 56.6 & 83.3 & 37.4 & 53.6 & 3.0 & 81.6 & 40.3 & 82.3 & 61.3\\
    MSAD (G3O2) & 84.4 & 84.8 & 94.6 & 91.3 & 74.8 & 59.0 & 71.2 & 19.7 & 88.0 & 17.0 & 93.1 & 85.7 & 69.9 & 34.9 & 84.7 & 70.6 & 83.6 & 61.3 & 77.1 & 31.5 & 57.2 & 5.4 & 65.3 & 24.8 & 84.8 & 65.1\\
    MSAD (G4O0) & 85.9 & 84.6 & 89.8 & 85.7 & 73.7 & 64.0 & 64.7 & 13.6 & 73.8 & 51.5 & 85.2 & 81.3 & 65.3 & 36.2 & 84.3 & 75.2 & 81.7 & 58.3 & 61.6 & 25.1 & 50.8 & 2.6 & 65.8 & 38.4 & 79.8 & 61.5\\
    MSAD (G4O1) & 85.8 & 84.2 & 89.5 & 85.1 & 73.6 & 63.2 & 67.5 & 14.9 & 75.5 & 47.1 & 85.2 & 81.4 & 67.2 & 36.3 & 83.5 & 74.8 & 82.2 & 58.6 & 57.6 & 24.1 & 49.7 & 2.6 & 70.4 & 44.6 & 80.6 & 62.3\\
    MSAD (G5O0) & 78.8 & 79.5 & 68.7 & 67.1 & 74.9 & 62.4 & 67.1 & 18.0 & 88.6 & 16.4 & 89.1 & 79.0 & 67.3 & 41.1 & 83.8 & 69.9 & 83.2 & 58.3 & 52.3 & 22.4 & 61.3 & 4.5 & 73.3 & 35.4 & 82.1 & 63.2\\
    
    \bottomrule
    \end{tabular}
    }
    \caption{MGFN with G-OPL/OPL on MSAD (scenario-level). Scenario-wise results for different G$x$O$y$ configurations using MGFN on the MSAD dataset. The G1O0 setup consistently outperforms most other placements across scenarios.
    }
    \label{tab:mgfn-gopl-scenario-msad}
\end{table*}

\begin{table*}[tbp]
    \setlength{\tabcolsep}{0.15em}
    \renewcommand{\arraystretch}{0.70}
    \centering
    \resizebox{\linewidth}{!}{
    \begin{tabular}{ll
        cc cc cc cc
        cc cc cc cc
        cc cc cc cc
        }
    \toprule
    \multirow{3}{*}{Method} 
    & \multicolumn{2}{c}{Assault} & \multicolumn{2}{c}{Explosion} & \multicolumn{2}{c}{Fighting} & \multicolumn{2}{c}{Fire} 
    & \multicolumn{2}{c}{Obj. Fall} & \multicolumn{2}{c}{People Fall} & \multicolumn{2}{c}{Robbery} & \multicolumn{2}{c}{Shooting}
    & \multicolumn{2}{c}{Traffic Acc.} & \multicolumn{2}{c}{Vandalism} & \multicolumn{2}{c}{Water Inc.} & \multicolumn{2}{c}{\textbf{Overall}} \\
    \cmidrule(lr){2-3} \cmidrule(lr){4-5} \cmidrule(lr){6-7} \cmidrule(lr){8-9}
    \cmidrule(lr){10-11} \cmidrule(lr){12-13} \cmidrule(lr){14-15} \cmidrule(lr){16-17}
    \cmidrule(lr){18-19} \cmidrule(lr){20-21} \cmidrule(lr){22-23} \cmidrule(lr){24-25}
    & AUC & AP & AUC & AP & AUC & AP & AUC & AP 
    & AUC & AP & AUC & AP & AUC & AP & AUC & AP
    & AUC & AP & AUC & AP & AUC & AP & AUC & AP \\
    \midrule
    MSAD (C1) & 57.0 & 62.4 & 77.7 & 85.7 & 74.1 & 84.8 & 49.6 & 75.5 & 87.7 & 92.1 & 53.3 & 50.4 & 72.4 & 89.0 & 84.1 & 89.5 & 69.5 & 58.7 & 84.8 & 80.9 & 99.2 & 99.8 & 86.5 & 68.2\\
    MSAD (C2) & 51.6 & 66.1 & 73.2 & 82.7 & 76.8 & 85.5 & 61.0 & 82.8 & 83.4 & 89.2 & 47.7 & 54.7 & 68.9 & 87.8 & 75.6 & 82.1 & 84.4 & 62.1 & 86.9 & 83.2 & 98.7 & 99.7 & 86.0 & 68.9 \\
    MSAD (C3) & 63.5 & 74.4 & 85.9 & 90.0 & 81.0 & 88.6 & 76.6 & 87.8 & 89.3 & 93.5 & 58.8 & 57.4 & 67.8 & 86.3 & 82.0 & 88.3 & 64.0 & 58.5 & 89.4 & 87.5 & 98.9 & 99.8 & 85.1 & 69.0 \\
    MSAD (C4) & 63.1 & 74.8 & 72.8 & 84.0 & 80.0 & 86.9 & 75.5 & 79.6 & 86.4 & 91.7 & 53.3 & 56.2 & 70.0 & 88.7 & 75.6 & 80.8 & 64.3 & 58.3 & 84.3 & 82.2 & 98.3 & 99.6 & 85.0 & 68.7 \\
    MSAD (C5) & 66.2 & 70.8 & 79.6 & 86.7 & 77.0 & 86.9 & 51.6 & 76.1 & 90.1 & 93.5 & 44.2 & 48.9 & 73.1 & 88.1 & 87.4 & 90.7 & 59.6 & 57.7 & 74.7 & 71.2 & 99.5 & 99.9 & 86.2 & 68.6 \\
    MSAD (C2C3) & 67.3 & 72.0 & 74.7 & 83.2 & 69.4 & 84.2 & 56.2 & 77.5 & 88.3 & 92.4 & 52.6 & 53.8 & 65.9 & 85.2 & 88.6 & 89.3 & 64.5 & 58.4 & 78.8 & 75.4 & 99.9 & 100 & 85.6 & 68.2 \\
    MSAD (ALL) & 52.8 & 60.6 & 72.0 & 82.1 & 75.8 & 86.4 & 52.8 & 77.2 & 88.6 & 92.4 & 56.1 & 52.5 & 71.6 & 88.2 & 81.7 & 88.3 & 66.0 & 55.5 & 84.5 & 81.0 & 99.7 & 99.9 & 86.4 & 69.5 \\

    \bottomrule
    \end{tabular}}
    \caption{Anomaly-wise performance of the RTFM model using only OPL configurations on the MSAD dataset.}
    \label{tab:rtfm-opl-anomaly-msad}
\end{table*}

\begin{table*}[tbp]
\setlength{\tabcolsep}{0.08em}
    \renewcommand{\arraystretch}{0.70}
    \centering
    
    \resizebox{\linewidth}{!}{
    \begin{tabular}{
        lcccccccccccccccccccccccccc
    }
    \toprule
    \multirow{3}{*}{Method}
    & \multicolumn{2}{c}{Frontdoor} &  \multicolumn{2}{c}{Mall} & \multicolumn{2}{c}{Office}
    & \multicolumn{2}{c}{Parkinglot} & \multicolumn{2}{c}{Pedestr. st.} & \multicolumn{2}{c}{Restaurant}
    & \multicolumn{2}{c}{Road} & \multicolumn{2}{c}{Shop} & \multicolumn{2}{c}{Sidewalk} & \multicolumn{2}{c}{St. highview}
    & \multicolumn{2}{c}{Train} & \multicolumn{2}{c}{Warehouse} & \multicolumn{2}{c}{\textbf{Overall}} \\
    \cmidrule(lr){2-3} \cmidrule(lr){4-5} \cmidrule(lr){6-7} \cmidrule(lr){8-9}
    \cmidrule(lr){10-11} \cmidrule(lr){12-13} \cmidrule(lr){14-15} \cmidrule(lr){16-17}
    \cmidrule(lr){18-19} \cmidrule(lr){20-21} \cmidrule(lr){22-23} \cmidrule(lr){24-25}
    \cmidrule(lr){26-27} 
    & AUC & AP & AUC & AP
    & AUC & AP & AUC & AP & AUC & AP & AUC & AP
    & AUC & AP & AUC & AP & AUC & AP & AUC & AP
    & AUC & AP & AUC & AP & AUC & AP \\
    \midrule
    MSAD (C1) & 85.7 & 82.4 & 85.7 & 80.3 & 77.2 & 71.8 & 76.8 & 26.3 & 96.6 & 50.4 & 90.4 & 81.5 & 76.8 & 53.3 & 88.6 & 82.9 & 84.9 & 56.4 & 66.5 & 26.7 & 42.9 & 2.3 & 86.1 & 66.8 & 86.5 & 68.2\\
    MSAD (C2) & 84.9 & 83.0 & 91.2 & 87.3 & 76.5 & 73.7 & 77.1 & 23.3 & 76.7 & 11.8 & 85.1 & 74.3 & 66.0 & 39.0 & 85.3 & 79.0 & 86.2 & 58.0 & 63.3 & 26.1 & 71.0 & 20.3 & 88.0 & 72.8 & 86.0 & 68.9\\
    MSAD (C3) & 89.1 & 87.7 & 85.9 & 80.5 & 80.8 & 78.3 & 60.3 & 23.4 & 86.8 & 14.9 & 92.2 & 86.6 & 80.8 & 54.9 & 77.3 & 68.6 & 80.4 & 57.1 & 80.3 & 34.8 & 63.4 & 21.8 & 83.5 & 69.4 & 85.1 & 69.0\\
    MSAD (C4) & 84.6 & 83.8 & 94.5 & 88.8 & 74.9 & 72.0 & 81.1 & 29.6 & 81.3 & 12.0 & 89.6 & 79.4 & 77.4 & 50.5 & 83.9 & 78.2 & 84.9 & 61.3 & 52.2 & 23.1 & 50.4 & 17.5 & 81.0 & 56.3 & 85.0 & 68.7\\
    MSAD (C5) & 82.1 & 80.5 & 97.5 & 95.9 & 84.2 & 79.7 & 87.4 & 64.4 & 78.6 & 11.3 & 88.1 & 79.1 & 86.4 & 58.6 & 82.4 & 68.3 & 90.1 & 71.7 & 48.0 & 22.4 & 91.2 & 28.2 & 80.0 & 54.6 & 86.2 & 68.6\\
    MSAD (C2C3) & 83.4 & 80.8 & 89.3 & 86.3 & 81.7 & 76.9 & 87.2 & 36.3 & 76.9 & 9.2 & 84.7 & 77.9 & 79.6 & 50.9 & 80.9 & 74.3 & 88.7 & 67.7 & 69.3 & 27.6 & 67.8 & 4.8 & 84.8 & 60.9 & 85.6 & 68.2\\
    MSAD (ALL) & 86.0 & 83.3 & 87.9 & 83.6 & 77.6 & 72.4 & 85.2 & 33.4 & 78.8 & 21.9 & 89.6 & 80.8 & 73.4 & 46.9 & 87.4 & 82.2 & 86.6 & 65.5 & 61.8 & 25.1 & 54.2 & 2.8 & 88.1 & 69.6 & 86.4 & 69.5\\

    \bottomrule
    \end{tabular}
    }
    \caption{Scenario-wise performance of the RTFM model using only OPL configurations on the MSAD dataset.
    }
    \label{tab:rtfm-opl-scenario-msad}
\end{table*}

\begin{table*}[tbp]
    \setlength{\tabcolsep}{0.15em}
    \renewcommand{\arraystretch}{0.70}
    \centering
    \resizebox{\linewidth}{!}{
    \begin{tabular}{ll
        cc cc cc cc
        cc cc cc cc
        cc cc cc cc
        }
    \toprule
    \multirow{3}{*}{Method} 
    & \multicolumn{2}{c}{Assault} & \multicolumn{2}{c}{Explosion} & \multicolumn{2}{c}{Fighting} & \multicolumn{2}{c}{Fire} 
    & \multicolumn{2}{c}{Obj. Fall} & \multicolumn{2}{c}{People Fall} & \multicolumn{2}{c}{Robbery} & \multicolumn{2}{c}{Shooting}
    & \multicolumn{2}{c}{Traffic Acc.} & \multicolumn{2}{c}{Vandalism} & \multicolumn{2}{c}{Water Inc.} & \multicolumn{2}{c}{\textbf{Overall}} \\
    \cmidrule(lr){2-3} \cmidrule(lr){4-5} \cmidrule(lr){6-7} \cmidrule(lr){8-9}
    \cmidrule(lr){10-11} \cmidrule(lr){12-13} \cmidrule(lr){14-15} \cmidrule(lr){16-17}
    \cmidrule(lr){18-19} \cmidrule(lr){20-21} \cmidrule(lr){22-23} \cmidrule(lr){24-25}
    & AUC & AP & AUC & AP & AUC & AP & AUC & AP 
    & AUC & AP & AUC & AP & AUC & AP & AUC & AP
    & AUC & AP & AUC & AP & AUC & AP & AUC & AP \\
    \midrule
    MSAD (G1O0) & 50.2 & 62.4 & 69.4 & 80.6 & 69.5 & 84.4 & 71.8 & 87.0 & 88.7 & 92.4 & 52.3 & 53.3 & 71.4 & 88.2 & 87.9 & 91.0 & 62.5 & 54.7 & 82.0 & 79.6 & 97.5 & 99.4 & 88.0 & 70.9 \\
    MSAD (G1O1) & 66.0 & 77.5 & 69.6 & 80.8 & 66.5 & 81.2 & 55.2 & 78.9 & 78.0 & 86.7 & 55.1 & 68.3 & 64.7 & 86.6 & 74.5 & 82.5 & 65.2 & 63.5 & 85.7 & 81.8 & 100 & 100 & 86.6 & 70.5 \\
    MSAD (G1O2) & 50.6 & 72.4 & 56.2 & 78.3 & 58.4 & 81.1 & 50.9 & 80.7 & 79.9 & 88.0 & 47.2 & 59.4 & 65.0 & 89.0 & 66.4 & 80.2 & 66.4 & 68.7 & 88.6 & 87.8 & 83.8 & 96.5 & 84.8 & 70.3 \\
    MSAD (G1O3) & 55.4 & 68.5 & 73.5 & 84.2 & 75.5 & 85.1 & 72.7 & 87.2 & 82.3 & 88.1 & 39.2 & 50.7 & 70.2 & 88.4 & 67.2 & 79.0 & 60.9 & 60.3 & 82.9 & 83.6 & 98.8 & 99.6 & 85.5 & 69.2 \\
    MSAD (G1O4) & 46.6 & 67.5 & 68.9 & 82.3 & 71.4 & 83.2 & 79.6 & 90.8 & 84.4 & 90.2 & 48.4 & 60.1 & 67.5 & 88.6 & 69.7 & 81.1 & 67.1 & 65.4 & 86.8 & 87.0 & 95.8 & 99.1 & 85.2 & 69.7 \\
    MSAD (G1O5) & 45.5 & 62.9 & 74.9 & 85.0 & 78.3 & 86.8 & 86.8 & 94.4 & 84.5 & 89.4 & 46.8 & 55.1 & 69.1 & 87.7 & 80.0 & 85.8 & 66.2 & 63.2 & 79.7 & 79.9 & 93.5 & 98.6 & 86.3 & 69.7 \\
    MSAD (G2O0) & 44.9 & 66.9 & 67.3 & 82.3 & 73.0 & 84.0 & 73.8 & 88.0 & 80.9 & 88.6 & 43.7 & 58.2 & 65.7 & 88.4 & 80.2 & 87.2 & 68.1 & 66.7 & 82.9 & 83.4 & 99.8 & 100 & 86.5 & 70.9 \\
    MSAD (G2O1) & 61.4 & 73.4 & 69.5 & 82.4 & 74.1 & 84.2 & 67.6 & 84.7 & 86.1 & 91.3 & 42.3 & 53.8 & 65.5 & 87.7 & 82.4 & 87.9 & 65.7 & 65.1 & 85.2 & 85.1 & 99.7 & 99.9 & 85.4 & 69.6 \\
    MSAD (G2O2) & 50.4 & 74.7 & 60.6 & 79.6 & 69.6 & 84.6 & 54.3 & 83.1 & 71.5 & 84.2 & 51.9 & 70.1 & 65.2 & 88.6 & 81.7 & 87.9 & 64.7 & 66.6 & 82.6 & 83.7 & 99.8 & 100 & 83.5 & 70.5\\
    MSAD (G2O3) & 50.1 & 69.8 & 67.4 & 82.1 & 68.6 & 85.9 & 71.9 & 86.3 & 80.4 & 88.1 & 47.8 & 60.8 & 66.4 & 88.0 & 73.0 & 83.1 & 63.3 & 62.4 & 85.5 & 83.7 & 99.0 & 99.7 & 86.9 & 71.6 \\
    MSAD (G2O4) & 59.3 & 74.1 & 65.3 & 80.6 & 69.1 & 83.1 & 76.6 & 89.8 & 86.1 & 90.8 & 43.9 & 52.4 & 64.9 & 87.2 & 70.5 & 82.8 & 68.3 & 67.7 & 91.4 & 89.5 & 97.7 & 99.3 & 86.7 & 70.7 \\
    MSAD (G3O0) & 58.0 & 72.0 & 66.7 & 81.1 & 60.4 & 81.7 & 72.7 & 87.1 & 83.8 & 90.0 & 41.6 & 54.4 & 67.3 & 88.3 & 68.8 & 79.8 & 56.5 & 56.8 & 89.1 & 87.4 & 98.9 & 99.7 & 86.1 & 70.1 \\
    MSAD (G3O1) & 59.1 & 74.1 & 61.3 & 78.9 & 71.0 & 82.9 & 79.6 & 89.6 & 86.3 & 91.2 & 49.1 & 58.0 & 66.4 & 88.7 & 65.8 & 79.5 & 64.8 & 64.0 & 89.1 & 88.3 & 85.6 & 96.7 & 86.5 & 70.4 \\
    MSAD (G3O2) & 66.0 & 79.1 & 64.8 & 80.0 & 67.9 & 81.1 & 44.0 & 76.1 & 72.7 & 84.3 & 45.4 & 61.8 & 66.5 & 88.2 & 81.6 & 86.7 & 61.7 & 57.3 & 81.0 & 78.6 & 99.9 & 100 & 85.2 & 69.3 \\
    MSAD (G3O3) & 46.9 & 66.2 & 60.9 & 79.1 & 62.6 & 80.0 & 80.1 & 91.2 & 84.2 & 89.9 & 47.3 & 53.9 & 64.3 & 88.1 & 70.2 & 82.5 & 68.7 & 67.7 & 86.2 & 84.5 & 97.4 & 99.3 & 85.2 & 69.7 \\
    MSAD (G4O0) & 50.7 & 68.0 & 70.7 & 83.5 & 76.9 & 86.4 & 79.2 & 89.4 & 83.8 & 89.7 & 41.6 & 54.2 & 66.9 & 88.3 & 74.7 & 83.6 & 66.8 & 64.3 & 89.0 & 87.9 & 99.9 & 100 & 86.8 & 71.4 \\
    MSAD (G4O1) & 64.7 & 79.0 & 61.5 & 79.3 & 63.9 & 79.5 & 44.5 & 76.5 & 72.3 & 84.5 & 50.1 & 67.7 & 65.0 & 87.9 & 78.1 & 84.9 & 63.6 & 64.8 & 85.5 & 84.1 & 99.9 & 100 & 85.1 & 70.3 \\
    MSAD (G4O2) & 53.8 & 73.2 & 67.2 & 82.2 & 73.7 & 83.9 & 79.1 & 91.1 & 82.9 & 89.5 & 44.7 & 57.7 & 68.8 & 89.7 & 76.7 & 83.3 & 69.9 & 69.9 & 79.4 & 79.8 & 99.8 & 100 & 86.7 & 72.2 \\
    MSAD (G5O0) & 60.1 & 77.6 & 57.5 & 78.4 & 67.9 & 83.3 & 43.6 & 76.2 & 72.7 & 84.7 & 50.9 & 68.8 & 63.2 & 87.7 & 70.7 & 81.6 & 62.0 & 64.9 & 81.5 & 82.3 & 99.5 & 99.9 & 83.0 & 68.7 \\
    MSAD (G5O1) & 50.5 & 67.1 & 68.1 & 80.8 & 77.2 & 85.8 & 71.9 & 86.6 & 84.2 & 89.8 & 45.3 & 54.7 & 72.0 & 89.5 & 81.9 & 88.8 & 64.8 & 61.1 & 87.1 & 84.5 & 99.9 & 100 & 86.7 & 70.2 \\
    MSAD (G6O0) & 58.7 & 73.9 & 66.2 & 80.1 & 59.8 & 79.9 & 76.0 & 88.2 & 84.3 & 90.2 & 41.5 & 56.0 & 66.0 & 87.7 & 83.4 & 86.1 & 61.0 & 59.5 & 86.1 & 85.0 & 95.3 & 99.0 & 87.2 & 70.6 \\

    \bottomrule
    \end{tabular}}
    \caption{RTFM with G-OPL/OPL on MSAD (anomaly-level). Anomaly-wise performance of various G$x$O$y$ configurations in the RTFM model on the MSAD dataset. The G1O0 configuration consistently achieves top results across different anomaly types.}
    \label{tab:rtfm-gopl-anomaly-msad}
\end{table*}

\begin{table*}[tbp]
\setlength{\tabcolsep}{0.08em}
    \renewcommand{\arraystretch}{0.70}
    \centering
    
    \resizebox{\linewidth}{!}{
    \begin{tabular}{
        lcccccccccccccccccccccccccc
    }
    \toprule
    \multirow{3}{*}{Method}
    & \multicolumn{2}{c}{Frontdoor} &  \multicolumn{2}{c}{Mall} & \multicolumn{2}{c}{Office}
    & \multicolumn{2}{c}{Parkinglot} & \multicolumn{2}{c}{Pedestr. st.} & \multicolumn{2}{c}{Restaurant}
    & \multicolumn{2}{c}{Road} & \multicolumn{2}{c}{Shop} & \multicolumn{2}{c}{Sidewalk} & \multicolumn{2}{c}{St. highview}
    & \multicolumn{2}{c}{Train} & \multicolumn{2}{c}{Warehouse} & \multicolumn{2}{c}{\textbf{Overall}} \\
    \cmidrule(lr){2-3} \cmidrule(lr){4-5} \cmidrule(lr){6-7} \cmidrule(lr){8-9}
    \cmidrule(lr){10-11} \cmidrule(lr){12-13} \cmidrule(lr){14-15} \cmidrule(lr){16-17}
    \cmidrule(lr){18-19} \cmidrule(lr){20-21} \cmidrule(lr){22-23} \cmidrule(lr){24-25}
    \cmidrule(lr){26-27} 
    & AUC & AP & AUC & AP
    & AUC & AP & AUC & AP & AUC & AP & AUC & AP
    & AUC & AP & AUC & AP & AUC & AP & AUC & AP
    & AUC & AP & AUC & AP & AUC & AP \\
    \midrule
    MSAD (G1O0) & 82.0 & 79.3 & 91.0 & 81.4 & 74.3 & 72.0 & 79.4 & 27.2 & 86.9 & 36.1 & 90.3 & 81.4 & 72.4 & 46.7 & 89.0 & 82.5 & 87.0 & 65.1 & 84.9 & 37.8 & 70.4 & 12.0 & 86.3 & 79.6 & 88.0 & 70.9\\
    MSAD (G1O1) & 81.0 & 82.5 & 87.8 & 84.2 & 75.1 & 75.0 & 87.9 & 46.9 & 95.7 & 51.3 & 85.6 & 76.2 & 85.1 & 57.4 & 81.6 & 73.8 & 85.3 & 63.5 & 66.9 & 24.8 & 55.0 & 4.5 & 88.2 & 71.9 & 86.6 & 70.5\\
    MSAD (G1O2) & 80.2 & 82.2 & 88.4 & 80.6 & 75.2 & 75.7 & 80.6 & 47.0 & 86.7 & 25.0 & 90.4 & 81.5 & 83.3 & 64.4 & 82.8 & 79.2 & 83.6 & 62.2 & 90.3 & 65.3 & 51.7 & 2.6 & 76.8 & 63.4 & 84.8 & 70.3\\
    MSAD (G1O3) & 78.3 & 78.3 & 75.9 & 66.8 & 73.8 & 73.7 & 86.9 & 54.8 & 92.5 & 24.9 & 92.9 & 85.3 & 88.3 & 59.7 & 83.8 & 76.8 & 87.3 & 66.1 & 66.0 & 24.3 & 37.5 & 2.1 & 85.9 & 71.2 & 85.5 & 69.3\\
    MSAD (G1O4) & 82.2 & 82.7 & 70.6 & 63.1 & 73.6 & 74.7 & 69.1 & 34.7 & 98.1 & 74.7 & 92.7 & 85.2 & 84.4 & 53.4 & 82.4 & 77.0 & 82.5 & 59.7 & 86.3 & 56.7 & 19.7 & 1.7 & 87.3 & 69.0 & 85.2 & 69.7\\
    MSAD (G1O5) & 80.9 & 81.0 & 72.0 & 62.8 & 74.7 & 72.9 & 70.7 & 27.3 & 89.8 & 36.3 & 94.8 & 88.5 & 79.7 & 51.4 & 84.8 & 77.5 & 80.7 & 56.9 & 96.3 & 82.4 & 50.0 & 3.2 & 86.7 & 66.2 & 86.3 & 69.7\\
    MSAD (G2O0) & 80.7 & 81.4 & 78.7 & 66.1 & 74.2 & 75.6 & 80.2 & 46.7 & 93.9 & 54.2 & 91.8 & 83.7 & 89.3 & 62.8 & 85.4 & 80.2 & 84.5 & 64.3 & 76.3 & 34.3 & 34.9 & 2.0 & 88.3 & 71.7 & 86.5 & 70.9\\
    MSAD (G2O1) & 80.3 & 81.0 & 83.6 & 71.6 & 71.9 & 73.8 & 81.9 & 48.0 & 79.3 & 25.6 & 92.4 & 84.5 & 88.0 & 61.3 & 84.0 & 78.1 & 88.2 & 66.1 & 66.6 & 22.6 & 72.0 & 24.7 & 83.6 & 63.9 & 85.4 & 69.6\\
    MSAD (G2O2) & 74.6 & 80.9 & 81.0 & 71.1 & 75.9 & 79.1 & 88.1 & 57.0 & 70.3 & 23.5 & 80.3 & 72.5 & 88.7 & 67.2 & 78.8 & 75.9 & 79.7 & 61.7 & 81.0 & 56.6 & 74.6 & 35.9 & 84.6 & 69.8 & 83.5 & 70.5\\
    MSAD (G2O3) & 81.3 & 81.9 & 88.5 & 80.9 & 74.1 & 73.6 & 83.4 & 53.8 & 97.6 & 63.0 & 90.8 & 81.9 & 90.2 & 71.5 & 83.8 & 78.6 & 83.6 & 62.0 & 89.9 & 55.9 & 46.6 & 2.4 & 88.4 & 71.1 & 86.9 & 71.6\\
    MSAD (G2O4) & 79.6 & 80.5 & 87.5 & 73.2 & 76.5 & 75.7 & 88.9 & 57.1 & 89.9 & 21.4 & 90.2 & 82.2 & 87.7 & 65.9 & 84.0 & 76.8 & 85.6 & 62.7 & 93.7 & 72.5 & 60.4 & 3.3 & 82.8 & 62.5 & 86.7 & 70.7\\
    MSAD (G3O0) & 80.0 & 80.3 & 89.9 & 82.6 & 74.3 & 74.9 & 80.5 & 46.1 & 95.7 & 39.9 & 90.8 & 81.8 & 83.0 & 61.8 & 84.1 & 78.1 & 81.9 & 58.2 & 91.2 & 59.7 & 61.3 & 3.7 & 85.1 & 67.2 & 86.1 & 70.1\\
    MSAD (G3O1) & 80.7 & 81.4 & 89.0 & 80.3 & 76.3 & 75.8 & 80.2 & 44.7 & 96.0 & 42.3 & 88.4 & 79.5 & 85.1 & 54.6 & 85.0 & 80.0 & 84.4 & 60.3 & 88.9 & 48.2 & 73.2 & 4.7 & 83.6 & 63.9 & 86.5 & 70.4\\
    MSAD (G3O2) & 76.9 & 79.7 & 87.0 & 75.0 & 76.1 & 77.3 & 89.8 & 50.6 & 63.8 & 30.7 & 84.1 & 74.7 & 80.4 & 51.2 & 80.6 & 74.9 & 83.6 & 62.4 & 66.4 & 21.1 & 60.3 & 4.8 & 85.6 & 69.6 & 85.2 & 69.3\\
    MSAD (G3O3) & 80.3 & 80.2 & 68.5 & 64.6 & 72.3 & 74.4 & 81.6 & 47.8 & 85.9 & 14.3 & 91.1 & 83.4 & 85.0 & 55.3 & 86.1 & 81.5 & 82.8 & 60.7 & 94.0 & 73.1 & 56.0 & 4.4 & 79.2 & 59.5 & 85.2 & 69.7\\
    MSAD (G4O0) & 81.9 & 82.2 & 88.3 & 78.3 & 75.9 & 75.9 & 78.2 & 47.3 & 97.9 & 63.9 & 90.9 & 81.9 & 87.9 & 66.5 & 84.5 & 79.2 & 85.1 & 63.5 & 79.3 & 31.8 & 46.9 & 2.6 & 87.7 & 68.4 & 86.8 & 71.4\\
    MSAD (G4O1) & 77.3 & 81.5 & 89.6 & 79.2 & 75.6 & 77.4 & 90.9 & 59.0 & 82.3 & 32.9 & 83.2 & 75.0 & 84.2 & 54.5 & 79.2 & 74.1 & 83.3 & 65.2 & 76.1 & 25.8 & 57.6 & 23.2 & 83.4 & 68.2 & 85.1 & 70.3\\
    MSAD (G4O2) & 78.7 & 81.2 & 87.3 & 80.1 & 76.4 & 76.6 & 73.7 & 35.4 & 94.5 & 52.0 & 92.0 & 84.6 & 81.5 & 46.3 & 86.2 & 84.5 & 84.5 & 63.4 & 91.8 & 74.1 & 42.3 & 22.3 & 83.0 & 69.1 & 86.7 & 72.2\\
    MSAD (G5O0) & 74.8 & 80.8 & 83.3 & 73.3 & 74.1 & 77.0 & 86.8 & 56.7 & 85.9 & 28.9 & 82.7 & 75.8 & 85.0 & 60.6 & 78.7 & 74.5 & 82.1 & 63.3 & 61.2 & 19.1 & 87.0 & 26.8 & 80.8 & 65.4 & 83.0 & 68.7\\
    MSAD (G5O1) & 83.0 & 82.3 & 82.8 & 66.4 & 75.4 & 74.6 & 70.4 & 34.5 & 98.3 & 70.8 & 91.6 & 81.9 & 75.3 & 44.9 & 86.4 & 79.7 & 85.8 & 62.5 & 86.1 & 38.4 & 44.2 & 14.0 & 90.5 & 79.4 & 86.7 & 70.2\\
    MSAD (G6O0) & 79.5 & 80.4 & 72.9 & 62.8 & 75.9 & 76.1 & 78.8 & 42.2 & 97.4 & 65.5 & 92.1 & 83.4 & 84.6 & 57.3 & 85.4 & 78.6 & 84.8 & 61.4 & 86.4 & 41.5 & 76.1 & 6.3 & 90.7 & 80.0 & 87.2 & 70.6\\
    % MSAD (test) &  &  &  &  &  &  &  &  &  &  &  &  &  &  &  &  &  &  &  &  &  &  &  &  &  & \\
    
    \bottomrule
    \end{tabular}
    }
    \caption{RTFM with G-OPL/OPL on MSAD (scenario-level). Scenario-wise results for different G$x$O$y$ configurations using RTFM on the MSAD dataset. The G1O0 setup consistently outperforms most other placements across scenarios.
    }
    \label{tab:rtfm-gopl-scenario-msad}
\end{table*}

\begin{table}[tbp]
\setlength{\tabcolsep}{0.2em}
    \renewcommand{\arraystretch}{0.70}
    \centering
    
    % \resizebox{\linewidth}{!}{
    \begin{tabular}{
        lcccccccccccccccc
    }
    \toprule
    \multirow{3}{*}{Configuration}
    & \multicolumn{2}{c}{MSAD} &  \multicolumn{2}{c}{ShT} & \multicolumn{2}{c}{UCF}
    & \multicolumn{2}{c}{CUHK} & \multicolumn{2}{c}{Ped2}\\
    \cmidrule(lr){2-3} \cmidrule(lr){4-5} \cmidrule(lr){6-7} \cmidrule(lr){8-9}
    \cmidrule(lr){10-11}
    & AUC & AP & AUC & AP
    & AUC & AP & AUC & AP & AUC & AP\\
    \midrule
    G1O0 & 88.0 & 70.9 & 97.3 & 74.7 & 78.3 & 30.9 & 84.9 & 66.2 & 85.0 & 71.4\\
    G1O1 & 84.4 & 68.6 & 96.9 & 72.3 & 78.1 & 34.3 & 84.1 & 64.3 & 81.4 & 69.0\\
    G1O2 & 86.9 & 69.8 & 97.0 & 70.5 & 76.5 & 35.7 & 79.7 & 60.3 & 80.0 & 67.9\\
    G1O3 & 86.5 & 69.5 & 96.9 & 72.7 & 72.6 & 30.9 & 84.3 & 66.1 & 82.2 & 71.4\\
    G1O4 & 87.5 & 71.1 & 96.5 & 72.3 & 69.3 & 27.2 & 83.2 & 62.0 & 84.7 & 74.2\\
    G1O5 & 86.2 & 67.6 & 96.4 & 71.7 & 73.0 & 42.7 & 83.3 & 64.8 & 84.2 & 71.8\\
    G2O0 & 84.9 & 69.7 & 97.1 & 73.8 & 75.7 & 29.0 & 82.9 & 64.2 & 79.3 & 67.6\\
    G2O1 & 85.6 & 68.9 & 96.6 & 72.5 & 77.7 & 43.4 & 82.9 & 64.4 & 82.1 & 73.5\\
    G2O2 & 83.7 & 68.3 & 97.1 & 73.5 & 76.2 & 26.3 & 82.7 & 61.8 & 78.4 & 65.7\\
    G2O3 & 87.4 & 71.4 & 97.0 & 73.1 & 72.3 & 30.6 & 82.7 & 62.4 & 84.2 & 70.6\\
    G2O4 & 84.8 & 68.3 & 96.4 & 71.7 & 75.8 & 24.3 & 81.2 & 62.5 & 85.0 & 74.2\\
    G3O0 & 86.2 & 71.1 & 96.6 & 73.5 & 74.4 & 33.2 & 82.8 & 63.2 & 78.1 & 65.9\\
    G3O1 & 82.4 & 68.5 & 96.7 & 72.9 & 75.3 & 23.8 & 84.3 & 65.8 & 88.0 & 73.9\\
    G3O2 & 85.2 & 70.7 & 97.0 & 73.2 & 75.7 & 37.3 & 81.8 & 62.2 & 77.9 & 66.6\\
    G3O3 & 85.5 & 70.4 & 96.3 & 73.0 & 75.0 & 29.0 & 64.4 & 82.8 & 83.1 & 71.9\\
    G4O0 & 86.4 & 69.0 & 96.5 & 72.4 & 76.5 & 26.4 & 63.8 & 82.6 & 78.5 & 67.8\\
    G4O1 & 82.5 & 70.2 & 96.5 & 72.4 & 76.7 & 27.8 & 61.5 & 81.5 & 84.4 & 70.5\\
    G4O2 & 85.9 & 70.4 & 97.0 & 72.1 & 71.3 & 24.5 & 62.0 & 82.3 & 81.3 & 68.5\\
    G5O0 & 84.5 & 71.1 & 97.3 & 73.5 & 77.7 & 35.0 & 63.9 & 82.5 & 89.6 & 75.0\\
    G5O1 & 86.3 & 66.3 & 96.7 & 73.5 & 71.1 & 27.7 & 67.3 & 84.0 & 87.0 & 81.2\\
    G6O0 & 83.6 & 69.2 & 97.0 & 71.5 & 74.0 & 27.0 & 66.5 & 83.3 & 81.9 & 75.3\\
    
    \bottomrule
    \end{tabular}
    % }
    \caption{Performance of RTFM with G-OPL/OPL across five datasets.
This table presents the results of various G-OPL and OPL layer combinations on five benchmark datasets using the RTFM model. The performance trends inform the selection of optimal layer configurations, as summarized in Table \ref{tab:summary-settings}.
    }
    \label{tab:rtfm-gopl-all-dataset}
\end{table}

\begin{table}[tbp]
\setlength{\tabcolsep}{0.2em}
    \renewcommand{\arraystretch}{0.70}
    \centering
    
    % \resizebox{\linewidth}{!}{
    \begin{tabular}{
        lcccccccccc
    }
    \toprule
    \multirow{3}{*}{Configuration}
    & \multicolumn{2}{c}{MSAD} &  \multicolumn{2}{c}{ShT} & \multicolumn{2}{c}{UCF}
    & \multicolumn{2}{c}{CUHK} & \multicolumn{2}{c}{Ped2}\\
    \cmidrule(lr){2-3} \cmidrule(lr){4-5} \cmidrule(lr){6-7} \cmidrule(lr){8-9}
    \cmidrule(lr){10-11}
    & AUC & AP & AUC & AP
    & AUC & AP & AUC & AP & AUC & AP\\
    \midrule
    G1O0 & 84.0 & 65.8 & 83.7 & 42.0 & 83.3 & 15.2 & 69.4 & 43.5 & 93.9 & 93.6\\
    G1O1 & 81.8 & 63.4 & 77.0 & 22.4 & 77.1 & 14.0 & 62.9 & 35.0 & 83.9 & 72.2\\
    G1O2 & 82.7 & 64.5 & 66.5 & 17.8 & 76.1 & 14.0 & 62.9 & 36.6 & 70.0 & 61.3\\
    G1O3 & 82.2 & 65.0 & 73.2 & 14.1 & 76.9 & 13.8 & 60.4 & 32.5 & 92.4 & 89.7\\
    G1O4 & 83.3 & 62.8 & 77.5 & 24.3 & 77.3 & 12.3 & 59.3 & 36.9 & 71.9 & 60.4\\
    G2O0 & 83.0 & 62.1 & 55.0 & 11.7 & 72.6 & 13.6 & 59.1 & 28.8 & 74.3 & 63.4\\
    G2O1 & 83.5 & 63.4 & 65.9 & 12.4 & 80.1 & 14.8 & 68.7 & 36.8 & 85.0 & 78.1\\
    G2O2 & 82.3 & 64.2 & 73.5 & 15.4 & 75.7 & 12.8 & 69.8 & 37.7 & 91.7 & 89.7\\
    G2O3 & 83.4 & 66.0 & 75.5 & 17.0 & 73.8 & 9.3 & 68.4 & 36.3 & 77.9 & 56.9\\
    G3O0 & 79.8 & 61.7 & 84.1 & 25.9 & 74.4 & 13.3 & 70.8 & 40.5 & 73.2 & 64.1\\
    G3O1 & 82.3 & 61.3 & 77.5 & 15.4 & 74.1 & 13.2 & 65.5 & 33.4 & 68.1 & 59.2\\
    G3O2 & 84.8 & 65.1 & 71.5 & 14.0 & 76.5 & 9.7 & 63.1 & 36.5 & 79.0 & 72.5\\
    G4O0 & 79.8 & 61.5 & 81.2 & 23.2 & 78.1 & 12.5 & 60.2 & 33.5 & 80.6 & 69.7\\
    G4O1 & 80.6 & 62.3 & 75.3 & 17.7 & 74.8 & 10.0 & 61.4 & 33.4 & 57.2 & 36.7\\
    G5O0 & 82.0 & 63.2 & 83.6 & 23.2 & 74.6 & 24.7 & 69.7 & 45.3 & 78.5 & 61.7\\
    
    \bottomrule
    \end{tabular}
    % }
    \caption{Performance of MGFN with G-OPL/OPL across five datasets.
This table presents the results of various G-OPL and OPL layer combinations on five benchmark datasets using the MGFN model. The performance trends inform the selection of optimal layer configurations, as summarized in Table \ref{tab:summary-settings}.
    }
    \label{tab:mgfn-gopl-all-dataset}
\end{table}

\section{Layer Placement: Practical Guidelines and Empirical Insights}

\label{sec:lay-place}

We conduct comprehensive evaluations of OPL and G-OPL layer placements across five benchmark VAD datasets, aiming to offer concrete and practical insights into how these components should be integrated into existing architectures. Our study evaluates RTFM and MGFN, analyzing their performance and sensitivity to anomaly types and dataset characteristics. % Our study examines both RTFM and MGFN models, considering performance impact, computational overhead, and sensitivity to different anomaly types and dataset characteristics.

\textbf{MGFN + OPL: minimal, effective integration.} For the MGFN model on MSAD, we find that incorporating a single OPL layer generally leads to the best overall performance. This holds across both anomaly types (Table \ref{tab:mgfn-opl-anomaly-msad}) and evaluation scenarios (Table \ref{tab:mgfn-opl-scenario-msad}). While adding multiple OPL layers does not drastically reduce performance (typically within a 5\% drop in AUC), it introduces additional learnable parameters, which could affect training stability and efficiency. These results suggest that a single, strategically placed OPL layer is often sufficient to suppress nuisance factors effectively without overcomplicating the model.

\textbf{RTFM + OPL: sensitive to overuse.} A similar pattern is observed in the RTFM model (Table \ref{tab:rtfm-opl-anomaly-msad} and \ref{tab:rtfm-opl-scenario-msad}). Here, too, the best performance is achieved when a single OPL layer is used, for instance, producing the highest AUC for MSAD (C1). However, adding additional OPL layers tends to slightly degrade performance, although still within a reasonable range. This indicates that overuse of OPL may suppress not only irrelevant information but also task-relevant signals, thus harming the VAD capability.

\textbf{MGFN + G-OPL + OPL: flexible but layer-dependent.} When both G-OPL and OPL layers are used in the MGFN architecture, performance becomes more sensitive to layer arrangement (Table \ref{tab:mgfn-gopl-anomaly-msad} and \ref{tab:mgfn-gopl-scenario-msad}). The best AUC is achieved using three G-OPL layers followed by two OPL layers, whereas the best average precision (AP) is observed with two G-OPL layers followed by three OPL layers. Nonetheless, we find that using even a single G-OPL layer yields competitive performance while significantly reducing computational cost. This makes it a favorable option for real-world deployments where efficiency is crucial.

\textbf{RTFM + G-OPL: less sensitive to face suppression.} In contrast, for the RTFM model, the best performance with G-OPL is achieved when only one layer is added (Table \ref{tab:rtfm-gopl-anomaly-msad} and \ref{tab:rtfm-gopl-scenario-msad}). This suggests that facial information, used as a guiding attribute in G-OPL, is not strongly correlated with anomalous events in the RTFM pipeline, and that extensive suppression may inadvertently obscure useful contextual cues. Thus, minimal semantic suppression proves to be the most effective approach in this setting.

\textbf{Dataset-specific trends and exceptions.} Across the remaining four datasets, ShanghaiTech, UCF-Crime, CUHK Avenue, and UCSD Ped2, we observe similar trends, particularly in the RTFM architecture (Table \ref{tab:rtfm-gopl-all-dataset} and \ref{tab:mgfn-gopl-all-dataset}). A single G-OPL layer typically offers the best balance between privacy control and detection performance. However, Ped2 emerges as an exception. Due to its low resolution and limited facial visibility, facial signals are harder to detect, making it beneficial to stack multiple G-OPL layers to ensure more thorough suppression of residual sensitive information.

Our findings suggest that using a single OPL or G-OPL layer is often sufficient and optimal, offering strong performance with low overhead. Excessive stacking of projection layers can degrade detection performance by suppressing task-relevant signals, especially in architectures like RTFM. The optimal number of projection layers also depends on dataset characteristics such as resolution and the clarity of sensitive attributes. These results offer valuable guidance for effectively and efficiently incorporating privacy-preserving and interpretable layers into VAD architectures.

\textbf{Stacking behavior.} On low-resolution datasets (\eg, UCSD Ped2), where attribute signals are weak and noisy, stacking multiple G-OPL layers yields additional gains.

The dominant mechanism is progressive removal of residual attribute components: a single projection may not fully eliminate weak or entangled signals, while successive layers iteratively refine the orthogonal complement. A secondary effect is robustness to imperfect guidance, where stacking mitigates errors in the estimated attribute subspace.

This explains why a single layer is sufficient when attribute signals are strong, whereas stacking is beneficial primarily in low-signal regimes. Stacking functions as an iterative subspace refinement process rather than merely increasing model capacity.

\section{Limitations}

While our proposed method demonstrates strong and consistent improvements across diverse datasets and anomaly types, several limitations remain. 

First, the performance gains heavily rely on carefully tuning the placement and frequency of the disentanglement modules (G-OPL and OPL). Although we provide empirical insights into effective configurations, these may not generalize optimally to unseen domains without additional validation or adaptation.

Second, our method exhibits less stability on datasets characterized by high scene diversity or subtle anomaly cues, such as UCF-Crime. This suggests that while our approach effectively filters irrelevant signals, it may still struggle in scenarios where anomaly patterns are highly variable or context-dependent.

Third, the disentanglement process, while beneficial for suppressing task-irrelevant information, introduces additional architectural complexity and computational overhead. This may limit the scalability of our approach for real-time or resource-constrained applications.

Finally, while our focus on privacy-preserving datasets highlights the robustness of our method, its effectiveness on privacy-unconstrained, high-resolution datasets remains less explored. Future work could investigate the interplay between disentanglement, resolution, and anomaly characteristics in broader settings.

\section{LLM Usage Declaration}

We disclose the use of Large Language Models (LLMs) as general-purpose assistive tools during the preparation of this manuscript. LLMs were used only for minor tasks such as grammar and style improvement, code verification, and formatting suggestions. No scientific ideas, analyses, experimental designs, or conclusions were generated by LLMs. All core research, methodology, experiments, and results were performed and fully verified by the authors.  

The authors take full responsibility for all content presented in this paper, including text or code suggestions that were refined with the assistance of LLMs. No content generated by LLMs was treated as original scientific work, and all references and claims have been independently verified. LLMs did not contribute in a manner that would qualify them for authorship.

\end{document}